\newcommand{\cmark}{\ding{51}}%
\newcommand{\xmark}{\ding{55}}%
\theoremstyle{plain}
\newtheorem{theorem}{Theorem}[section]
\newtheorem{proposition}[theorem]{Proposition}
\newtheorem{lemma}[theorem]{Lemma}
\theoremstyle{definition}
\newtheorem{assumption}[theorem]{Assumption}
\theoremstyle{remark}
\def\eqref#1{equation~\ref{#1}}
\def\1{\bm{1}}
\DeclareMathAlphabet{\mathsfit}{\encodingdefault}{\sfdefault}{m}{sl}
\SetMathAlphabet{\mathsfit}{bold}{\encodingdefault}{\sfdefault}{bx}{n}
\definecolor{mycolor}{rgb}{0.05 0.05, 0.9}
\title{{Adaptive Self-Distillation for Minimizing Client Drift in Heterogeneous Federated Learning \\}}
\author{\name M.Yashwanth \email yashwanthm@iisc.ac.in \\
      \addr Indian Institute of Science
      \AND
      \name Gaurav Kumar Nayak \email gauravkumar.nayak@mfs.iitr.ac.in  \\
      \addr Indian Institute of Technology (IIT) Roorkee
      \AND
      \name Arya Singh \email f20180762g@alumni.bits-pilani.ac.in\\
      \addr  Indian Institute of Science 
      \AND
      \name Yogesh Simmhan \email simmhan@iisc.ac.in  \\
      \addr Indian Institute of Science
      \AND
      \name Anirban Chakraborty \email anirban@iisc.ac.in  \\
      \addr Indian Institute of Science
      }
\begin{document}

\maketitle

\begin{abstract}
Federated Learning (FL) is a machine learning paradigm that enables clients to jointly train a global model by aggregating the locally trained models without sharing any local training data. In practice, there can often be substantial heterogeneity (e.g., class imbalance) across the local data distributions observed by each of these clients. Under such non-iid label distributions across clients, FL suffers from the `client-drift’ problem where every client drifts to its own local optimum. This results in slower convergence and poor performance of the aggregated model. To address this limitation, we propose a novel regularization technique based on adaptive self-distillation (ASD) for training models on the client side. Our regularization scheme adaptively adjusts to each client's training data based on the global model's prediction entropy and the client-data label distribution. We show in this paper that our proposed regularization (ASD) can be easily integrated atop existing, state-of-the-art FL algorithms, leading to a further boost in the performance of these off-the-shelf methods. We theoretically explain how incorporation of ASD regularizer leads to reduction in client-drift and empirically justify the generalization ability of the trained model. We demonstrate the efficacy of our approach through extensive experiments on multiple real-world benchmarks and show substantial gains in performance when the proposed regularizer is combined with popular FL methods. The link to the code is \url{https://github.com/vcl-iisc/fed-adaptive-self-distillation}.
\end{abstract}

\section{Introduction}

Federated Learning (FL) is a machine learning paradigm where the clients collaboratively learn a shared model under the orchestration of the server without sharing any of their local training data with other clients or the server. Due to the privacy-preserving nature of FL, it has found many applications in smartphones~\citep{47586,ramaswamy2019federated}, the Internet of Things (IoT), healthcare organizations~\citep{rieke2020future,xu2021federated}, where training data is generated at edge devices or from privacy-sensitive domains. As originally introduced in~\citep{mcmahan2017communication}, FL involves model training across an architecture consisting of one server and multiple clients.
In traditional FL, each client securely holds its training data
due to privacy concerns as well as to avoid large communication
overheads while transmitting the same. At the same
time, these clients aim to collaboratively train a generalized
model that can leverage the entirety of the training data disjointly
distributed across clients.
\begin{figure*}[t]
 \centering
 \subfigure[client 1]{\includegraphics[scale=0.30]{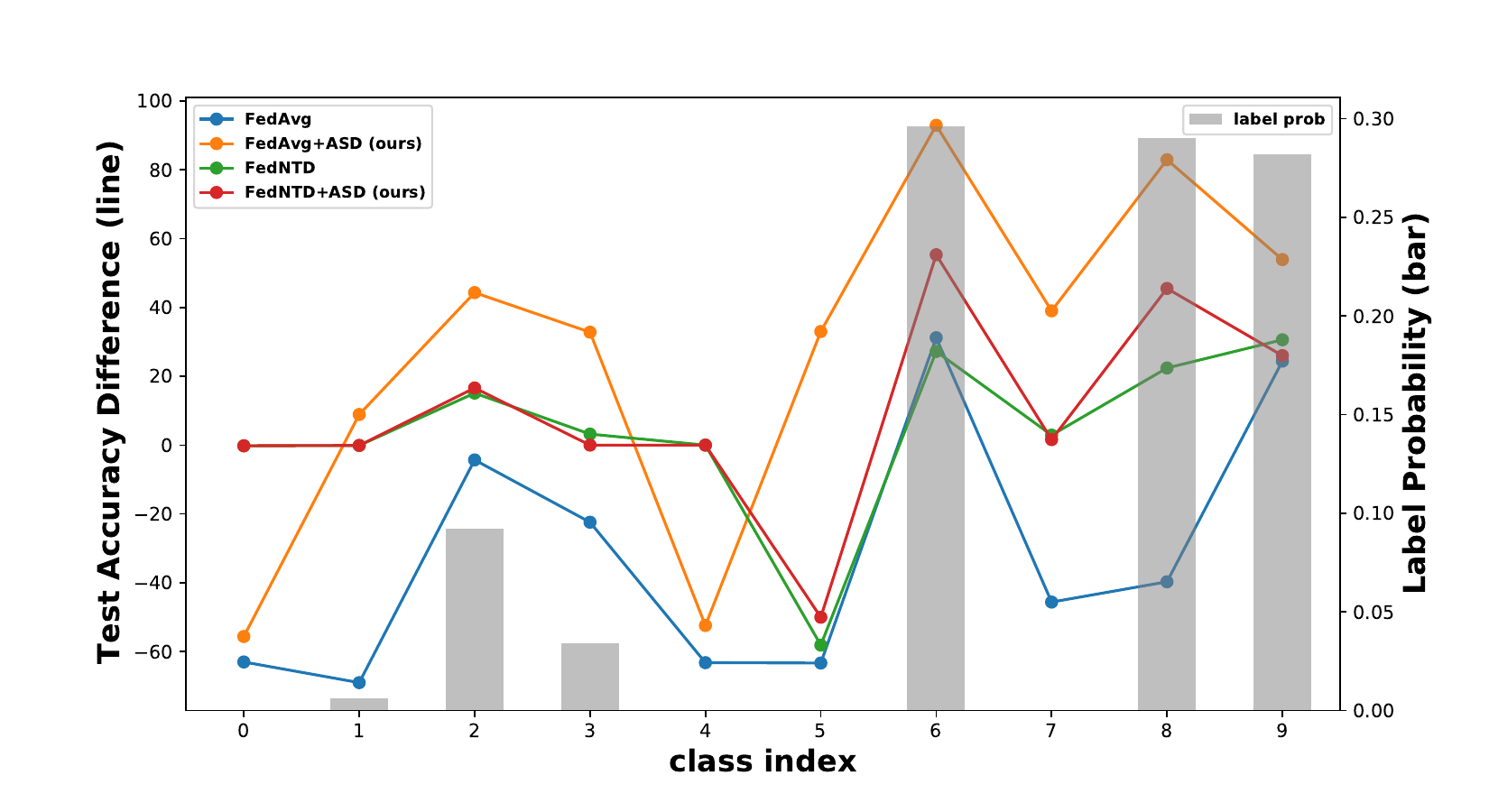}}\hspace{-1.1em}
 \subfigure[client 2]{\includegraphics[scale=0.30]{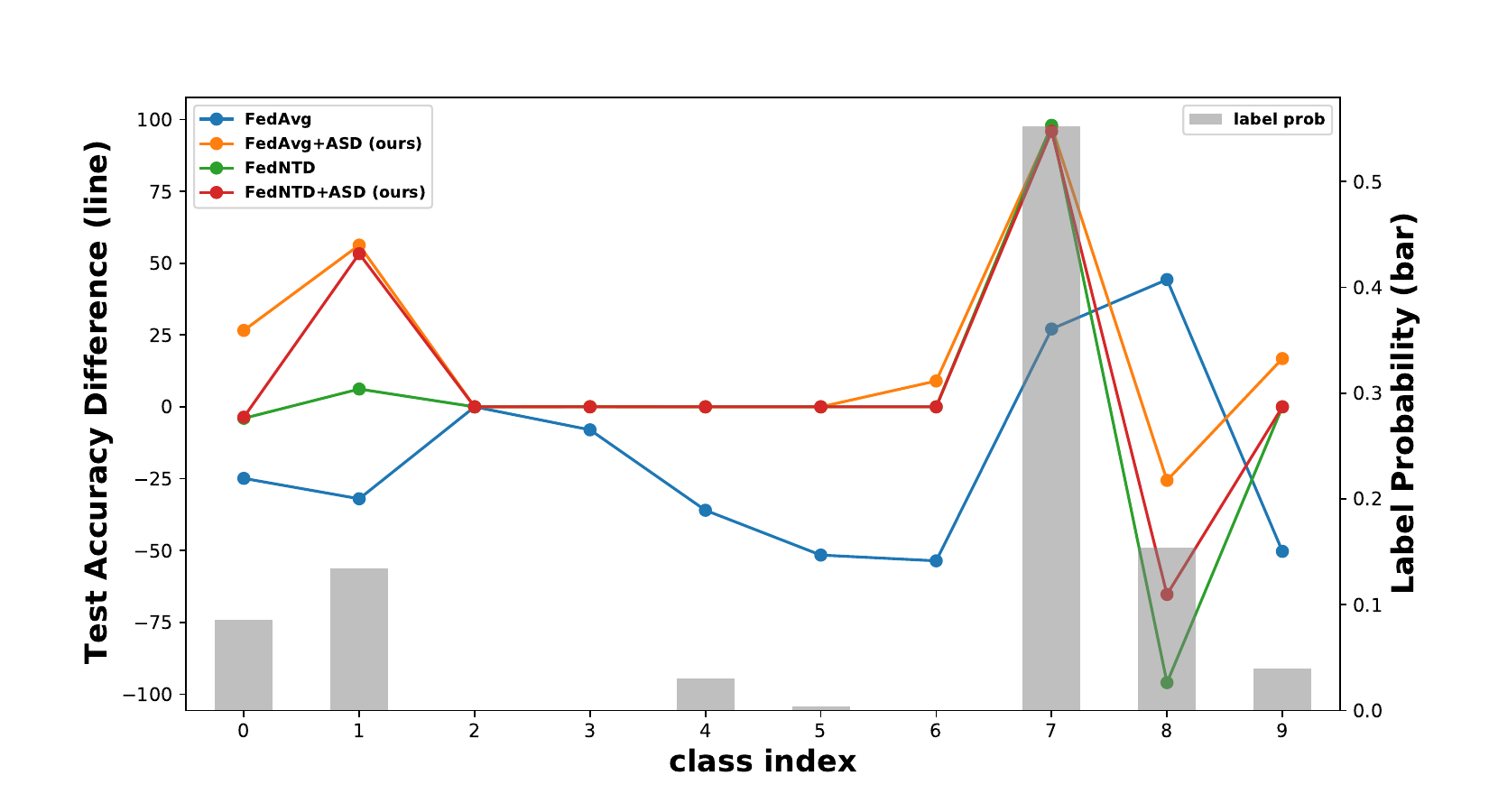}} 
 \caption{Impact of one round of local training on the test accuracy of two clients with different label distribution sampled from CIFAR-10 dataset: 
 The effect of local learning on test accuracy is analyzed by measuring the change in accuracy before and after local training, with positive values indicating improved model performance. Interestingly, in scenarios where classes with low probability of occurrence or under-represented, models trained using FedAvg frequently exhibit a decline in accuracy post-training. In contrast, incorporating our proposed adaptive self-distillation regularizer (ASD) into FedAvg (FedAvg+ASD) not only effectively captures knowledge from well-represented classes but also preserves information about under-represented classes. A similar pattern is observed with FedNTD and FedNTD+ASD.}
  \label{fig_intro}
\end{figure*}

Data ingested at the edge/client devices are often highly heterogeneous as a consequence of the data generation process. They can differ in terms of quantity imbalance (the number of samples at each client are different), label imbalance (empirical label distribution across the clients widely vary), and feature imbalance (features of the data across the clients are non-iid). 
When there exists a label or feature imbalance, the objective for every client becomes different as the local minimum for every client objective will be different. In such settings, during the local training, the client's model starts to drift towards its local minimum and farther away from the global objective. This is undesirable as the goal of FL is to converge to a global model that generalizes well across all the clients. This phenomenon, known as `client-drift', is introduced and explored in earlier works~\citep{karimireddy2020scaffold,acar2021federated,wang2021field}. 
In this work, we will be considering only the label heterogeneity. 
 {In any given FL round, the client initializes its model with global model weights and then starts training its model using the local data. Due to this, the client training often leads to overfitting the local data and cannot retain the knowledge acquired from the global model in an earlier FL round. 

Recently~\citep{he2022class} introduced a class-wise adaptive weighting scheme (FedCAD) at the server side. The major drawback of FedCAD is that it assumes the presence of related auxiliary data and reliability on the server to compute the weights for the clients. Dependency on the server for computing the adaptive class-wise weights necessitates the availability of auxiliary data at the server. 
Another work~\citep{lee2021preservation} proposes FedNTD which poses the client-drift as a local forgetting problem. It cannot mitigate client-drift effectively since it assigns uniform weights to regularization loss for all samples, independent of label distribution. Consequently, it treats high and low probability samples similarly, biasing the client model towards those with higher probability of occurrence, thus degrading performance.
To address these issues and motivated by client model regularization in mitigating client drift and to remove the server's dependency on computing client-side weights, we introduce a computationally efficient strategy known as Adaptive Self-Distillation (ASD) for Federated Learning. Importantly, ASD does not require any auxiliary data. We use the KL divergence between the global and local models as the regularizer. For every sample, the weight assigned to the regularization loss is adaptively adjusted based on the global model's prediction entropy and the empirical label distribution of the client's data.
Specifically, when the server model encounters samples with high entropy, we reduce the weighting on the regularization loss, whereas, for samples with a low probability of occurrence, we prioritize the learning from the global model. This adaptive approach enables local models to effectively learn from the cross-entropy loss for more frequent labels while leveraging the global model's guidance for less frequent labels. The adaptive weights are computed without relying on external or proxy data, unlike methods such as FedCAD which relies on external data. Moreover, the additional computational burden on clients is minimal, involving only a single forward pass of the training data. 

In Fig~\ref{fig_intro}, we explain how the ASD regularization with adaptive weights helps mitigate the client-drift. We analyze the impact of client-drift by observing one round of local training on a particular client model with the CIFAR-10 dataset. We see that FedAvg substantially deteriorates the performance on the labels that have sparse or no representation in the client's local data. After adding the ASD loss, the impact is reduced. The ASD with (adaptive weights) performs the best in terms of local learning and preserving the global model knowledge on the sparse classes.
We theoretically explain the client-drift reduction through our proposed ASD regularizer. In addition, we also provide justification on how ASD leads to improved generalization of the global model. This novel design of our proposed method allows the regularizer to be easily integrated atop any existing FL methods, and this results in substantial performance gains, making it an attractive and compelling solution to the federated learning problem. To the best of our knowledge, this is the first work where the adaptive weights are used for the distillation loss in the FL framework without requiring access to auxiliary data and without the assistance of the server. We would like to clearly point out that the goal of this work is not to directly compete against any particular regularization method used in FL. Our proposed ASD regularizer is of a true plug-and-play nature. With a very negligible computational overhead (discussed in Sec.~\ref{sec_compute}), ASD can be used as an additional regularization on top of any off-the-shelf FL method (either with regularized FL methods such as FedProx, FedDyn or FL methods without regularization such as FedSAM, FedAvg etc.) and further boost their performance across the benchmark datasets such as CIFAR-100/10 and Tiny-ImageNet for both IID and non-IID settings. As a validation, we combine our proposed method
with some of the popular off-the-shelf FL methods such as FedAvg~\citep{mcmahan2017communication}, FedProx~\citep{li2020federated}, FedDyn~\citep{acar2021federated} FedSpeed~\citep{sun2023fedspeed}, FedNTD~\citep{lee2021preservation},
FedSAM~\citep{caldarola2022improving} and 
FedDisco~\citep{ye2023feddisco} and consistently observe performance improvement.

In summary, the key contributions of this work are:
\begin{itemize}
    \item  We introduced a novel computationally efficient regularization method ASD in the context of Federated Learning 
    that alleviates the client drift problem by adaptively weighting the regularization loss for each sample based on the global model's prediction entropy and the label distribution of client data.
  
    \item We demonstrate the efficiency of our method by extensive experiments on datasets such as CIFAR-10, CIFAR-100, and Tiny-ImageNet datasets by combining our proposed ASD regularizer with the popular FL methods and improving their performance.
    \item  We present a theoretical analysis of the client-drift and show that our regularizer minimizes the client-drift. We also empirically show that ASD promotes better generalization by converging to a flat minimum. 

    \end{itemize}

\section{Related Work}
\subsection{Federated Learning (FL)}
In recent times, addressing heterogeneity in Federated Learning has become an active area of research, and the field is developing rapidly. For brevity, we discuss a few related works here. In FedAvg~\citep{mcmahan2017communication}, the two main challenges explored are reducing communication costs~\citep{yadav2016review} and ensuring privacy by avoiding having to share the data. There are some studies based on gradient inversion \citep{geiping2020inverting} raising privacy concerns owing to gradient sharing while some studies have proposed in defense of sharing the gradients~\citep{kairouz2021advances,huang2021evaluating}.
FedAvg is the generalization of local SGD~\citep{stich2018local} by increasing the number of local updates, significantly reducing communication costs for an iid setting, but does not give similar improvements for non-iid data. Several works perform an SGD-type analysis that involves the full device participation, and this breaks the important constraint in FL setup of partial device participation. Some of these attempt to compress the models to reduce the communication cost~\citep{mishchenko2019distributed}. A few works include regularization methods on the client side~\citep{zhu2021data}, and one-shot methods where clients send the condensed data and the server trains on the condensed data~\citep{zhou2020distilled}. 
In~\citep{hsu2020federated}, an adaptive weighting scheme is considered on task-specific loss to minimize the learning from samples whose representation is negligible. Flatness-based methods based on SAM called as FedSAM is introduced in~\citep{qu2022generalized,caldarola2022improving}. 

{
\subsection{Client-Drift in FL}
Due to data heterogeneity, federated training suffers from client drift. To address this, momentum-based server aggregation was proposed in~\citep{wangslowmo, hsu2019measuring}, which was later extended to handle any client and server updates in~\citep{reddiadaptive} FedProx~\citep{li2020federated} introduced a proximal term to penalize deviations of client weights from the globally initialized model. SCAFFOLD~\citep{karimireddy2020scaffold} tackled the issue as one of objective inconsistency, introducing a gradient correction term as a regularizer. Subsequently, FedDyn~\citep{acar2021federated} enhanced this with a dynamic regularization term. In~\citep{kim2024communication}, a proximal term was introduced in the client's optimization based on the accelerated global model, and momentum was applied on the server to track its updates.
}


\subsection{Federated Learning Using Knowledge Distillation}
Knowledge Distillation (KD) introduced by \citep{hinton2015distilling} is a technique to transfer the knowledge from a pre-trained teacher model to the student model by matching the predicted probabilities. Self-distillation was introduced in~\citep{zhang2019your} where the student distills from the same model to the sub-networks of the model. The teacher model predictions are updated every batch. In our method, distillation happens with the full network, and the teacher's predictions are updated after every communication round.   
Adaptive distillation was used in~\citep{tang2019learning}. The server-side KD methods such as FedGen~\citep{seo202216} use KD to train the generator at the server and the generator is broadcasted to the clients in the subsequent round. The clients use the generator to generate the data to provide the inductive bias. This method incurs extra communication of generator parameters along the model and training of the generator in general is difficult. In FedDF~\citep{lin2020ensemble} KD is used at the server that relies on the external data. The KD is performed on an ensemble of client models, especially client models acts as a separate teacher model and then the knowledge is distilled into a single student model (global model).    
In FedNTD~\citep{lee2021preservation} the non-true class logits are used for distillation. This method gives uniform weights to all the samples. 
In FedCAD~\citep{he2022class} and FedSSD~\citep{9826416}, the client-drift problem is posed as a forgetting problem, and a weighting scheme has been proposed. Importantly, the computation of adaptive weights of the client samples is done with the help of the server with the assumption that the server has access to auxiliary data. One shortcoming of this method is the assumption of the availability of auxiliary data on the server, which is impractical.
In ~\citep{zhang2022federated} logits were calibrated based on the label distribution. This is totally different from our approach as we are adjusting the weights of the distillation loss. 
Unlike all of these approaches, we propose a novel ASD strategy that aims to mitigate the challenge of client drift due to non-iid data without relying on the server and access to any form of auxiliary data to compute the adaptive weights.

\section{Method}
We first describe the traditional federated optimization problem, then explain the proposed method of adaptive self-distillation (ASD) in section~\ref{sec:method_asd}. We provide the theoretical and empirical analysis in the sections~\ref{sec:theory_analysis} and~\ref{sec:empirical_analysis} respectively.  
\subsection{Problem Setup}
We assume there is a single server/cloud and $m$ clients/edge devices. We further assume that client $k$ has its own training dataset $\mathcal{D}_{k}$ with $n_{k}$ training samples drawn iid from the data distribution $\mathbb{P}_{k}(x,y)$. 
The data distributions $\{\mathbb{P}_{k}(x,y)\}_{k=1}^K$ across the clients are assumed to be non-iid.
In this setup, we perform the following optimization. \citep{acar2021federated,mcmahan2017communication}
\begin{equation}
\underset{\mathbf{w}\in \mathbb{R}^d}{\arg\min} \ \left(f(\mathbf{w}) \triangleq \frac{1}{K} \sum_{k\in [K]}f_{k}(\mathbf{w})\right)
\label{eq_f_def}
\end{equation}
where $ f_{k}(\mathbf{w}) $ is the client specific objective function and $\mathbf{w}$ denotes model parameters. The overall FL framework is described in detail in figure~\ref{fig_main}.
\subsection{Adaptive Self-Distillation (ASD) in FL}
\label{sec:method_asd}
 We now describe the proposed method where each client $k$ minimizes the $f_{k}(\mathbf{w})$ as defined below Eq. (\ref{eq1}).
\begin{equation}
f_{k}(\mathbf{w}) \triangleq L_{k}(\mathbf{w}) + \lambda L_{k}^{ASD}(\mathbf{w})
\label{eq1}
\end{equation}
$L_{k}(\mathbf{w})$ is given below.
\begin{equation}
L_k(\mathbf{w}) = \underset{x,y \in \mathbb{P}_k(x,y)}{\mathbb{E}}[l_{k}(\mathbf{w};(x,y))]
\label{exp_erm}
\end{equation} 
Here, $l_{k}$ is cross-entropy loss. The expectation is computed over training samples drawn from $\mathbb{P}_{k}(x,y)$ of a client $k$. This is approximated as the empirical average of the losses corresponding to samples from the Dataset $\mathcal{D}_k$. 
\begin{figure*}[t]
  \centering
  \includegraphics[width=12cm, height=5.5cm]{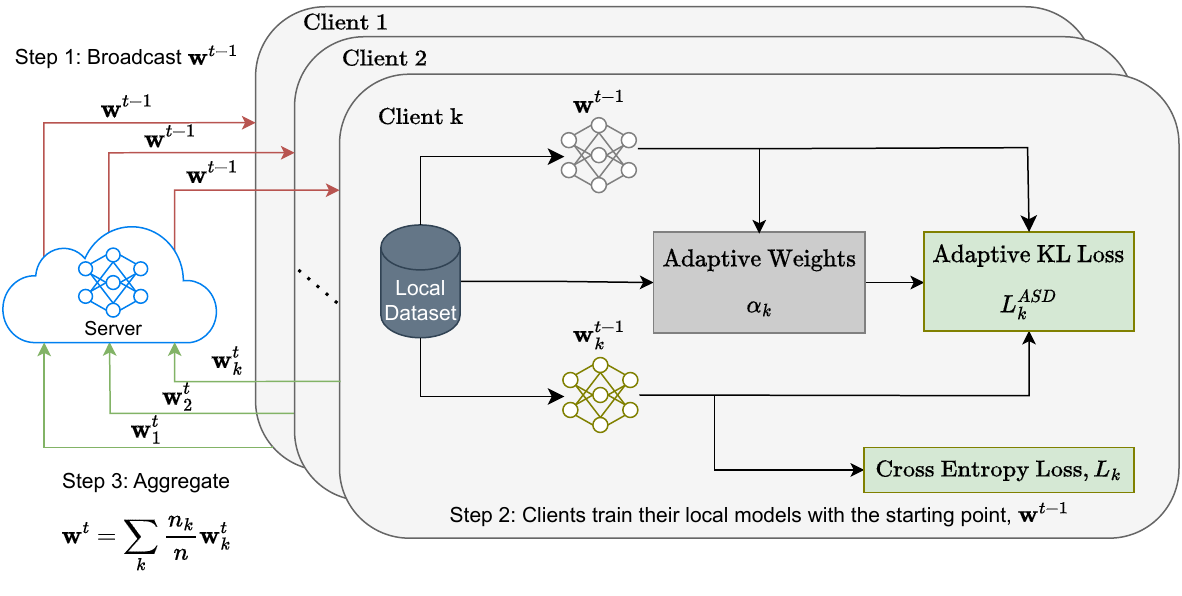}
 \caption{Federated Learning with Adaptive Self-Distillation: The figure describes the overview of the proposed approach based on Adaptive distillation. In \textbf{Step 1}. The server broadcasts the model parameters, In \textbf{Step 2}. clients train their models by minimizing both the cross entropy loss and predicted probability distribution over the classes between the global model and the client model by minimizing the KL divergence, the importance of each sample in the batch is decided by the proposed adaptive scheme as a function of label distribution and the KL term. The server model is fixed while training the client. In \textbf{Step 3}. The server aggregates the client models based on FedAvg aggregation. The process repeats till convergence. }
  \label{fig_main}
\end{figure*}
$L_{k}^{ASD}(\mathbf{w})$ in Eq.~\ref{eq1} denotes our proposed Adaptive Self-Distillation loss (ASD) term which considers label imbalance and quantifies how easily the predictions of the local model can drift from the global model. ASD loss is designed so that client models learn from the local data and at the same time not drift too much from the global model. We define (ASD) Loss as follows.
\begin {equation}
L_{k}^{ASD}(\mathbf{w})  \triangleq \mathbb{E}[\alpha_{k}(x,y) \mathcal{D}_{\text{KL}}(q_{g}(x,\mathbf{w}^{t})||q_{k}(x,\mathbf{w}))]
\label{eq7_exp}
\end{equation}
In the above Eq.~\ref{eq7_exp} $\mathbf{w}^t$ represents the global model parameters at FL round $t$ and $\mathbf{w}$ represents the trainable model parameters of client $k$, initialized with $\mathbf{w}^t$ at round $t$. $\alpha_k(x,y)$ denotes the weight for the sample $x$ with label ground truth label $y$. 
For simplicity, we denote  the global model softmax predictions $q_{g}(x,\mathbf{w}^{t})$ as $q_g(x)$ and client model softmax predictions $q_{k}(x,\mathbf{w})$ as $q_k(x)$. $\mathcal{D}_{\text{KL}}$ is the KL divergence.
The Eq.~\ref{eq7_exp} can be approximated by the following equation for a mini-batch. 
\begin {equation}
L_{k}^{ASD}(\mathbf{w}) ={\frac{1}{B}} \sum_{i \in [B]}\alpha_{k}(x^i,y^i) \mathcal{D}_{\text{KL}}(q_{g}(x^i)||q_{k}(x^i))
\label{eq7}
\end{equation}
where $B$ is the batch size, $(x^{i},y^i)\in \mathcal{D}_k$, $q_g$ and $q_k$ are  softmax probabilities on the temperature ($\tau$) scaled logits of the global model and client model $k$ respectively. For a class $c$ below Eq.~\ref{eq_pg} and Eq.~\ref{eq_pk} holds.
\begin{equation}
q_{g}^{c}(x^i) = \frac{exp\left(z_{g}^{c}(x^i)/\tau\right)}{\sum_{m\in C} \exp\left(z_{g}^m(x^i)/\tau\right)}
\label{eq_pg}
\end{equation}

\begin{equation}
q_{k}^c(x^i) = \frac{exp(z_{k}^{c}(x^i)/\tau)}{\sum_{m\in C} \exp\left(z_{k}^m(x^i)/\tau\right)}
\label{eq_pk}
\end{equation}
where $z_{g}(x^i)$, $z_{k}(x^i)$ are the logits predicted on the input $x^i$ by the global model and client model $k$ respectively. The index $i$ denotes the $i^{th}$ sample of the batch.
The $\mathcal{D}_{\text{KL}}(q_{g}(x^i)||q_{k}(x^i))$  is given in Eq. (\ref{eq_kl}).
\begin{equation}
\mathcal{D}_{\text{KL}}(q_{g}(x^i)||q_{k}(x^i)) = \sum_{c = 1}^{C}q_{g}^{c}(x^i)log(q_{g}^{c}(x^i)/q_{k}^{c}(x^i))
\label{eq_kl}
\end{equation}
where $C$ is the number of classes. We use the simplified notation $\alpha_{k}^{i}$ for  distillation weights ${\alpha_{k}}(x^i,y^i)$ and it is given in below Eq.\ref{alpha_eq}.
\begin{equation}
\alpha_{k}^{i}= \frac{\hat{\alpha_{k}}^{i}} {\sum_{i \in B}\hat{\alpha_{k}}^{i}}
\label{alpha_eq}
\end{equation}
and $\hat{\alpha_{k}}^{i}$ is defined as below Eq.~\ref{alpha_hat_eq}
\begin{equation}
\hat{\alpha_{k}}^{i} \triangleq {exp(-\mathcal{H}(x^i)) \over p_{k}^{y^{i}}}
\label{alpha_hat_eq}
\end{equation}

where $\mathcal{H}(x^i)$ is the entropy of the global model predictions and is given by (\ref{eq_entropy}).
\begin{equation}
\mathcal{H}(x^i) = \sum_{c = 1}^{C}-q_{g}^{c}(x^i)log(q_{g}^{c}(x^i))
\label{eq_entropy}
\end{equation}

$\mathcal{D}_{\text{KL}}$ in Eq.~\ref{eq_kl} captures how close the local model's predictions are to the global model for any given sample $x^i$. 
Our weighting scheme in Eq.~\ref{eq_entropy} decides how much to learn from the global model for that sample based on the entropy $\mathcal{H}(x^i)$ of the server model predictions and the label distribution of the client data ($p_{k}^{y^{i}}$). $\mathcal{H}(x^i)$ captures the confidence of global model predictions, higher value implies the server predictions are noisy so we tend to reduce the weight, i.e, we give less importance to the global model if its entropy is high. The $p_{k}^{y^{i}}$ is the probability that the sample belong to a particular class. We give more weight to the sample if it belongs to the minority class. This promotes learning from the local data for the classes where the representation is sufficient enough and for the minority classes we encourage them to stay closer to the global model.
 {In summary, the choice of alpha is designed to ensure that, when the global model encounters samples with high prediction entropy, we decrease the weighting on the regularization loss. Conversely, for samples with a low probability of occurrence, we prioritize learning from the global model. This adaptive approach enables local models to effectively learn from the cross-entropy loss for more frequent labels while leveraging the global model's guidance for less frequent labels. In Table~\ref{sup:tab:table6}, we highlight the importance of adaptive weights, where we clearly show that ASD with adaptive weights consistently improves performance when combined with off-the-shelf FL methods.}
We approximate the label distribution $p_{k}^{y^i}$ with the empirical label distribution, it is computed as Eq.\ref{emp_pmf}.
\begin {equation}
p_{k}^{y^{i}=c} = \frac{\sum_{i \in |\mathcal{D}_{k}|} \mathbb{I}_{y^{i}=c}}{|\mathcal{D}_{k}|} 
\label{emp_pmf}
\end{equation}
where $\mathbb{I}_{y^{i}=c}$ denotes  the indicator function and its value is $1$ if the  label of the $i^{th}$  training sample belongs to class $c$ else it is $0$. To simplify notation, we use $p_k^c$ for $p_{k}^{y^{i}=c}$ as it depends only on the class $c$ for a client $k$. 
Finally we use Eq.\ref{emp_pmf} and Eq.~\ref{eq_entropy} to compute the $L_{k}^{ASD}(\mathbf{w})$ defined in Eq.\ref{eq7}.
The choice of KL divergence in the Eq.~\ref{eq_kl} is motivated by the seminal work of Hinton et.al., which aims to match the temperature-raised softmax values between the pre-trained teacher model and student model for effective knowledge transfer. 
{We also analyzed the other statistical divergences such as reverse KL and Jenson-shannon divergence and empirically found that KL divergence is better. More details are presented in the Sec.~\ref{kl_choice} of the Appendix.}

\subsection{Theoretical Analysis of Gradient Dissimilarity}
\label{sec:theory_analysis}
In this section, we perform the theoretical analysis of the client drift. We now introduce the Gradient dissimilarity $G_d$ based on the works of~\citep{li2020federated,lee2021preservation} as a way to measure the extent of client-drift as below.
\begin{equation}
G_d(\mathbf{w},\lambda) = {{{1 \over K }\sum_{k} {\lVert \nabla f_k(\mathbf{w})  \rVert}^2} \over {{\lVert \nabla f(\mathbf{w}) \rVert}^2}}
\label{gd_def}
\end{equation}
$G_d(\mathbf{w},\lambda)$ is function of both the $\mathbf{w}$ and $\lambda$. For convenience, we simply write $G_d$ and mention arguments explicitly when required. $f_k(\mathbf{w})$ in the above Eq.~\ref{gd_def} is same as Eq.~\ref{eq1}.\\
With this, we now establish a series of propositions to show that ASD regularization reduces the Gradient dissimilarity, which as a result, leads to lower client drift. 


\begin{proposition}
 $\inf_{\mathbf{w}\in \mathbb{R}^d} {G_d(\mathbf{w},\lambda)}$ is $1$, $\forall$ $\lambda$
\end{proposition}

The above proposition implies that if all the client's gradients are progressing in the same direction, which means there is no drift $G_d = 1$. The result follows from Jensen's inequality. 
The lower value of $G_d$ is desirable and ideally $1$.
To analyze the $G_d$, we need $\nabla f_k(\mathbf{w})$ which is given in the below proposition. 
\begin{proposition}
When the class conditional distribution across the clients is identical, i.e., $\mathbb{P}_{k}(x \mid y) = \mathbb{P}(x \mid y)$  then $\nabla{f_{k}(\mathbf{w})} = \sum_c{p_k^c}(\mathbf{g}_{c} + \lambda  {\gamma}_{k}^{c} \tilde{\mathbf{g}}_c )$, where  $\mathbf{g}_c = \nabla{\mathbb{E}[{l(\mathbf{w};x,y)}\mid{y=c}]}$, $\tilde{\mathbf{g}}_c = \nabla{\mathbb{E}[{ \exp({-\mathcal{H}(x)}) \mathcal{D}_{\text{KL}}(q_{g}(x)||q_{k}(x))} \mid{y=c} ]}$ and ${\gamma}_{k}^{c} = \frac{1}{p_k^c}$. 
\end{proposition}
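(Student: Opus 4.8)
The plan is to compute $\nabla f_k(\mathbf{w})$ by splitting it according to Eq.~\ref{eq1} into $\nabla L_k(\mathbf{w}) + \lambda\, \nabla L_k^{ASD}(\mathbf{w})$ and to process each term by conditioning on the label. Throughout I would use that the global-model quantities $q_g(x)$ and $\mathcal{H}(x)$ depend only on the frozen parameters $\mathbf{w}^t$, hence are constants under the gradient $\nabla=\nabla_{\mathbf{w}}$, so only the student prediction $q_k(x)=q_k(x,\mathbf{w})$ carries $\mathbf{w}$-dependence. I would also invoke the usual regularity that permits interchanging $\nabla$ with the expectation over $\mathbb{P}_k(x,y)$.

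First I would treat the cross-entropy term. Writing Eq.~\ref{exp_erm} via the law of total expectation over the label, $L_k(\mathbf{w})=\sum_c \mathbb{P}_k(y=c)\,\mathbb{E}[l_k(\mathbf{w};(x,y))\mid y=c]$, and identifying the client label marginal with the empirical mass $p_k^c$ of Eq.~\ref{emp_pmf}, differentiation gives $\nabla L_k(\mathbf{w})=\sum_c p_k^c\,\mathbf{g}_c$ with $\mathbf{g}_c$ as defined. The hypothesis $\mathbb{P}_k(x\mid y)=\mathbb{P}(x\mid y)$ is exactly what renders the inner conditional expectation independent of $k$, matching the $k$-free notation; the same hypothesis (together with evaluating all clients at a common point $\mathbf{w}$, with the common teacher $\mathbf{w}^t$) will make $\tilde{\mathbf{g}}_c$ client-independent as well.

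Next I would handle the ASD term, which is the crux. Substituting the weight $\alpha_k(x,y)=\hat{\alpha}_k(x,y)=\exp(-\mathcal{H}(x))/p_k^{y}$ from Eq.~\ref{alpha_hat_eq} into Eq.~\ref{eq7_exp} and conditioning on $y=c$ gives $L_k^{ASD}(\mathbf{w})=\sum_c p_k^c\,\mathbb{E}\big[\tfrac{1}{p_k^c}\exp(-\mathcal{H}(x))\,\mathcal{D}_{\text{KL}}(q_g(x)\|q_k(x))\mid y=c\big]$. The decisive step is that once we condition on $y=c$, the factor $1/p_k^{y}$ becomes the constant $1/p_k^c=\gamma_k^c$ and pulls out of the inner expectation; applying $\nabla$ (which leaves $\exp(-\mathcal{H}(x))$ intact, as it depends only on $\mathbf{w}^t$) then yields $\nabla L_k^{ASD}(\mathbf{w})=\sum_c p_k^c\,\gamma_k^c\,\tilde{\mathbf{g}}_c$. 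Adding the two contributions with the weight $\lambda$ produces the claimed $\nabla f_k(\mathbf{w})=\sum_c p_k^c(\mathbf{g}_c+\lambda\,\gamma_k^c\,\tilde{\mathbf{g}}_c)$.

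The main obstacle I anticipate is justifying the use of the unnormalized weight $\hat{\alpha}_k$ in the population objective: the implemented weights in Eq.~\ref{alpha_eq} carry the batch-dependent normalizer $\sum_i\hat{\alpha}_k^i$, whereas the clean factorization above requires treating $1/p_k^{y}$ as a bare multiplicative factor. I would therefore state explicitly that the analysis is conducted with the unnormalized population weight $\hat{\alpha}_k(x,y)$, observing that the batch normalizer is a positive scalar which only rescales $L_k^{ASD}$ and hence does not alter the direction of its gradient, so the downstream gradient-dissimilarity conclusions are unaffected. A secondary, purely technical point is the interchange of $\nabla$ and $\mathbb{E}$, which I would justify by the standard dominated-convergence argument under smoothness of $l_k$ and $q_k$ in $\mathbf{w}$.
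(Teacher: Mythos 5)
Your proposal is correct and follows essentially the same route as the paper's proof: the tower property of expectation conditioned on the label, the identical class-conditional hypothesis to render the inner conditional expectations client-independent, and the replacement of the normalized weights of Eq.~\ref{alpha_eq} by the unnormalized $\hat{\alpha}_k$ of Eq.~\ref{alpha_hat_eq} so that $1/p_k^{y}$ factors out as $\gamma_k^c$ after conditioning on $y=c$. Your explicit remarks on the batch normalizer and on treating $q_g$ and $\mathcal{H}$ as constants under $\nabla_{\mathbf{w}}$ are the same caveats the paper makes (it absorbs the normalizing constant into $\lambda$), just stated more carefully.
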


The result follows from the tower property of expectation and the assumption that class conditional distribution is the same for all the clients. From the above proposition, we can see that the gradients $\nabla{f_{k}(\mathbf{w})}$ only differ due to $p_k^{c}$ which captures the data heterogeneity due to label imbalance. The proof is given in Sec.~\ref{sup:proof} of the appendix.

\begin{assumption}
Class-wise gradients are weakly correlated and similar magnitude  $\mathbf{g}_c^\intercal \mathbf{g}_c \ll \mathbf{g}_c^\intercal \mathbf{g}_m$, $\tilde{\mathbf{g}}_c^\intercal \tilde{\mathbf{g}}_c \ll \tilde{\mathbf{g}}_c^\intercal \tilde{\mathbf{g}}_m $ 
and  
for $c \neq m$ 
\label{assump1}
\end{assumption}
The assumption on weakly correlated class-wise gradients intuitively implies that gradients of loss for a specific class cannot give any significant information on the gradients of the other class. 

\begin{proposition}
When the class-conditional distribution across the clients is the same, and the Assumption~\ref{assump1} holds then $\exists$ a range of values for $\lambda$ such that whenever $\lambda \geq \lambda_{c}$ we have $\frac{dG_d}{d\lambda} < 0$ and $G_d(\mathbf{w},\lambda) < G_d(\mathbf{w},0)$.
\label{th1}
\end{proposition}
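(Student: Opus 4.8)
The plan is to exploit the special structure of the gradient formula in the preceding proposition, where the adaptive weight $\gamma_k^c = 1/p_k^c$ exactly cancels the factor $p_k^c$ that multiplies the distillation gradient. First I would substitute $\gamma_k^c = 1/p_k^c$ to write $\nabla f_k(\mathbf{w}) = \mathbf{a}_k + \lambda\mathbf{b}$, where $\mathbf{a}_k := \sum_c p_k^c\,\mathbf{g}_c$ is the only client-dependent piece and $\mathbf{b} := \sum_c \tilde{\mathbf{g}}_c$ is a single vector shared by every client. Its independence of $k$ relies on the common class-conditional distribution $\mathbb{P}_k(x\mid y)=\mathbb{P}(x\mid y)$ and on evaluating all gradients at the same point $\mathbf{w}$. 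Averaging over clients gives $\nabla f(\mathbf{w}) = \bar{\mathbf{a}} + \lambda\mathbf{b}$ with $\bar{\mathbf{a}} := \tfrac{1}{K}\sum_k \mathbf{a}_k$.

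The central observation is that the numerator $N(\lambda) := \tfrac{1}{K}\sum_k \lVert \mathbf{a}_k + \lambda\mathbf{b}\rVert^2$ and the denominator $D(\lambda) := \lVert \bar{\mathbf{a}} + \lambda\mathbf{b}\rVert^2$ of $G_d$ are both quadratics in $\lambda$ with \emph{identical} linear coefficient $2\bar{\mathbf{a}}^\intercal\mathbf{b}$ and \emph{identical} quadratic coefficient $\lVert\mathbf{b}\rVert^2$, differing only in their constant terms. Hence $N(\lambda)-D(\lambda)=\Delta$ is constant in $\lambda$, where $\Delta := \tfrac{1}{K}\sum_k\lVert\mathbf{a}_k\rVert^2 - \lVert\bar{\mathbf{a}}\rVert^2 \ge 0$ by Jensen's inequality. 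I would then rewrite $G_d(\mathbf{w},\lambda) = 1 + \Delta/D(\lambda)$, which collapses the entire problem to the scalar behaviour of the single quadratic $D(\lambda)$ and makes Proposition~4.1 transparent ($G_d=1$ precisely when $\Delta=0$).

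Given this form, differentiating yields $\tfrac{dG_d}{d\lambda} = -\Delta\,D'(\lambda)/D(\lambda)^2$, so for $\Delta>0$ its sign is opposite to that of $D'(\lambda) = 2(\bar{\mathbf{a}}^\intercal\mathbf{b} + \lambda\lVert\mathbf{b}\rVert^2)$. Setting $\lambda_c := \max\{0,\,-2\,\bar{\mathbf{a}}^\intercal\mathbf{b}/\lVert\mathbf{b}\rVert^2\}$, I would verify two things for $\lambda\ge\lambda_c$: that $D'(\lambda)>0$, giving $\tfrac{dG_d}{d\lambda}<0$; and that $D(\lambda)-D(0)=\lambda(2\bar{\mathbf{a}}^\intercal\mathbf{b}+\lambda\lVert\mathbf{b}\rVert^2)>0$, giving $G_d(\mathbf{w},\lambda)<G_d(\mathbf{w},0)$. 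The secant threshold $-2\bar{\mathbf{a}}^\intercal\mathbf{b}/\lVert\mathbf{b}\rVert^2$ dominates the tangent threshold $-\bar{\mathbf{a}}^\intercal\mathbf{b}/\lVert\mathbf{b}\rVert^2$, so reporting the larger value as $\lambda_c$ secures both claims simultaneously (with the inequality strict for $\lambda>\lambda_c$).

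The remaining role of Assumption~\ref{assump1} is to rule out degeneracy and make the range $[\lambda_c,\infty)$ genuinely nonempty. Using weak cross-correlation and near-equal magnitudes, I would estimate $\Delta \approx g^2\sum_c \mathrm{Var}_k(p_k^c)$, which is strictly positive exactly under label heterogeneity, and $\lVert\mathbf{b}\rVert^2 \approx C\tilde{g}^2 > 0$, so $D(\lambda)$ is a nondegenerate quadratic and $\lambda_c$ is finite. I expect this last step to be the main obstacle: $\bar{\mathbf{a}}^\intercal\mathbf{b}$ involves the mixed inner products $\mathbf{g}_c^\intercal\tilde{\mathbf{g}}_m$, which Assumption~\ref{assump1} does not directly bound, so arguing that these do not spoil the clean quadratic picture is the delicate part. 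The structural identity $G_d = 1 + \Delta/D(\lambda)$, by contrast, is the clean core that drives the whole argument.
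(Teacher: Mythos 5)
Your proof is correct, and it arrives at the same structural endpoint as the paper --- $G_d$ as a ratio of two quadratics in $\lambda$ that share their leading and linear coefficients and differ only in the constant term --- but by a cleaner and strictly more general route. The paper gets there by expanding $\lVert\nabla f_k\rVert^2$ term by term, invoking Assumption~\ref{assump1} to drop cross-class inner products and an additional unit-norm assumption on the $\mathbf{g}_c,\tilde{\mathbf{g}}_c$ to evaluate the surviving ones, proving $c_n>c_d$ via a separate Cauchy--Schwarz computation (Lemma~\ref{lem2}), and then applying a standalone monotonicity lemma for $\zeta(x)=(ax^2+bx+c_n)/(ax^2+bx+c_d)$ (Lemma~\ref{lem1}). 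Your observation that the cancellation $p_k^c\gamma_k^c=1$ makes the distillation contribution $\lambda\mathbf{b}$ literally client-independent yields the equality of the $\lambda^2$ and $\lambda$ coefficients \emph{exactly}, with no approximation, and Jensen gives $\Delta\ge 0$ in one line; the identity $G_d=1+\Delta/D(\lambda)$ then subsumes both the proposition $\inf_{\mathbf{w}}G_d=1$ and the monotonicity claim, whereas the paper needs its approximations even to state the quadratic form. Your closing worry about the mixed inner products $\mathbf{g}_c^\intercal\tilde{\mathbf{g}}_m$ is unfounded for the proposition as stated: they enter only through $\bar{\mathbf{a}}^\intercal\mathbf{b}$, which appears identically in $N(\lambda)$ and $D(\lambda)$, so they cancel out of $\Delta$ and merely shift the (finite, since $\lVert\mathbf{b}\rVert^2>0$) threshold $\lambda_c$. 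Where they do matter is one step later: Proposition~\ref{label_prop4} implicitly wants a $\lambda_c$ uniform over $\mathbf{w}$, which is exactly why the paper normalizes the gradients and takes a supremum of $\lambda_c$ over the admissible range of the linear coefficient $b$; your explicit $\lambda_c=\max\{0,-2\,\bar{\mathbf{a}}^\intercal\mathbf{b}/\lVert\mathbf{b}\rVert^2\}$ depends on $\mathbf{w}$ and would need an analogous uniform bound on $\lvert\bar{\mathbf{a}}^\intercal\mathbf{b}\rvert/\lVert\mathbf{b}\rVert^2$ to serve that purpose.
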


The proposition implies that there is a value of  $\lambda \geq \lambda_{c}$ such that the derivative of $G_d$ w.r.t $\lambda$ is negative. The proof is given in Sec.~\ref{sup:proof} of the appendix. This indicates that by appropriately selecting the value of $\lambda$ we can make the $G_d$ lower which in turn reduces the client drift. 
One of the key assumptions on the heterogeneity is the existence of the below quantity.

\begin{equation}
B^2(\lambda) \coloneqq \sup_{\mathbf{w}\in \mathbb{R}^d} {G_d(\mathbf{w},\lambda)}
\label{B_def}
\end{equation}
which leads to the following assumption
\begin{assumption}
${{{1 \over K }\sum_{k} {\lVert \nabla f_k(\mathbf{w})  \rVert}^2} \leq  B^2(\lambda){{\lVert \nabla f(\mathbf{w}) \rVert}^2}}$
\label{assump2}
\end{assumption}
This is the bounded gradient dissimilarity assumption used in~\citep{li2020federated}.
In the following proposition, we show the existence of $\lambda$ such that $B^2(\lambda) < B^2(0)$, which means that with regularizer we can tightly bound the gradient dissimilarity compared to the case without the regularizer i.e., ($\lambda = 0$).   

\begin{proposition}
Suppose the functions $f_k$ satisfy Assumption~\ref{assump2} above  
then we have  $B^2(\lambda) < B^2(0)$.
\label{label_prop4}
\end{proposition}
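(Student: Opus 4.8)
The plan is to collapse everything onto the additive structure of the client gradients exposed by the preceding proposition. There, the adaptive weight $\gamma_k^c = 1/p_k^c$ exactly cancels the label-distribution factor $p_k^c$ multiplying the regularizer term, so that
\begin{equation}
\nabla f_k(\mathbf{w}) = \underbrace{\sum_c p_k^c \mathbf{g}_c}_{=:~\mathbf{a}_k(\mathbf{w})} + \lambda \underbrace{\sum_c \tilde{\mathbf{g}}_c}_{=:~\mathbf{b}(\mathbf{w})} .
\label{plan_decomp}
\end{equation}
Under the shared class-conditional assumption the regularizer gradient $\mathbf{b}$ is \emph{identical across all clients} and only $\mathbf{a}_k$ carries the heterogeneity through $p_k^c$. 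Averaging gives $\nabla f(\mathbf{w}) = \bar{\mathbf{a}}(\mathbf{w}) + \lambda\mathbf{b}(\mathbf{w})$ with $\bar{\mathbf{a}} = \tfrac{1}{K}\sum_k\mathbf{a}_k$.

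Substituting \eqref{plan_decomp} into $G_d$ and using the identity $\tfrac{1}{K}\sum_k\|\mathbf{u}_k\|^2 - \|\bar{\mathbf{u}}\|^2 = \tfrac{1}{K}\sum_k\|\mathbf{u}_k - \bar{\mathbf{u}}\|^2$ with $\mathbf{u}_k = \mathbf{a}_k + \lambda\mathbf{b}$, the common term $\lambda\mathbf{b}$ drops out of every deviation, leaving
\begin{equation}
G_d(\mathbf{w},\lambda) = 1 + \frac{V(\mathbf{w})}{\|\bar{\mathbf{a}}(\mathbf{w}) + \lambda\mathbf{b}(\mathbf{w})\|^2}, \qquad V(\mathbf{w}) := \frac{1}{K}\sum_k \|\mathbf{a}_k(\mathbf{w}) - \bar{\mathbf{a}}(\mathbf{w})\|^2 \ge 0 .
\label{plan_ratio}
\end{equation}
The decisive feature is that the numerator $V$ is \emph{independent of $\lambda$}; all $\lambda$-dependence lives in the denominator $\|\nabla f(\mathbf{w})\|^2$. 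Hence $B^2(\lambda) = 1 + \sup_{\mathbf{w}} V(\mathbf{w})/\|\bar{\mathbf{a}}(\mathbf{w}) + \lambda\mathbf{b}(\mathbf{w})\|^2$, and proving the claim reduces to showing that inserting the shared vector $\lambda\mathbf{b}$ enlarges the worst-case denominator.

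I would then proceed in two steps. Pointwise, $\|\bar{\mathbf{a}}+\lambda\mathbf{b}\|^2 - \|\bar{\mathbf{a}}\|^2 = \lambda\,(\lambda\|\mathbf{b}\|^2 + 2\,\bar{\mathbf{a}}^\intercal\mathbf{b})$ is strictly positive as soon as $\lambda > -2\,\bar{\mathbf{a}}^\intercal\mathbf{b}/\|\mathbf{b}\|^2$; Assumption~\ref{assump1}, by reducing $\|\mathbf{b}\|^2$ to a sum of per-class squared norms bounded away from $0$ and controlling the cross term, makes this threshold uniform in $\mathbf{w}$ and identifies it with the $\lambda_c$ of Proposition~\ref{th1}. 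Thus for every $\lambda \ge \lambda_c$ we have $G_d(\mathbf{w},\lambda) < G_d(\mathbf{w},0)$ at \emph{every} $\mathbf{w}$. Second, I would promote this to the supremum: under the bounded-dissimilarity Assumption~\ref{assump2}, $B^2(0)$ is finite and the ratio in \eqref{plan_ratio} (which depends on $\mathbf{w}$ only through gradient directions, so the search is effectively over a compact set) attains its supremum at some $\mathbf{w}^\star$; evaluating the strict pointwise inequality there yields $B^2(\lambda) = G_d(\mathbf{w}^\star,\lambda) < G_d(\mathbf{w}^\star,0) \le B^2(0)$.

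The main obstacle is precisely this second step — transferring a \emph{pointwise} strict decrease into a strict decrease of the \emph{supremum} — because the maximizer can migrate with $\lambda$ and strictness may be lost if the supremum is only approached rather than attained. The robust remedy is a uniform lower bound on the denominator gain: for $\lambda\ge\lambda_c$ one has $\|\bar{\mathbf{a}}+\lambda\mathbf{b}\|^2 - \|\bar{\mathbf{a}}\|^2 \ge \lambda^2\|\mathbf{b}\|^2 - 2\lambda|\bar{\mathbf{a}}^\intercal\mathbf{b}| > 0$ uniformly, which caps $G_d(\cdot,\lambda)$ strictly below $B^2(0)$. This uniform gain also neutralizes the degenerate points where the task gradients cancel ($\bar{\mathbf{a}}=0$) and $G_d(\cdot,0)$ blows up: the shared component $\lambda\mathbf{b}$ keeps $\|\nabla f\|^2$ bounded away from zero, which is the very mechanism shrinking the worst-case ratio. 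A secondary item is to confirm that the threshold $\lambda_c$ inherited from Proposition~\ref{th1} is genuinely $\mathbf{w}$-independent under Assumption~\ref{assump1}, which I expect to follow by expanding the inner products over classes and discarding the negligible cross-class terms.
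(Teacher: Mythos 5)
Your proof is correct and lands on the same conclusion as the paper, but by a partly different and in one respect sharper route. The paper's own proof is essentially a one-liner: it invokes the pointwise inequality $G_d(\mathbf{w},\lambda) < G_d(\mathbf{w},0)$ from Proposition~\ref{th1} and asserts that the strict inequality transfers to the suprema by contradiction; strictly speaking that argument only yields $B^2(\lambda)\le B^2(0)$, since a pointwise strict inequality can be lost in the limit defining a supremum. You correctly identify this transfer as the crux and repair it with a uniform lower bound on the denominator gain, which is a genuine improvement. Your exact variance identity $G_d(\mathbf{w},\lambda)=1+V(\mathbf{w})/\lVert\nabla f(\mathbf{w})\rVert^2$ with a $\lambda$-independent numerator is also cleaner than the paper's derivation, which reaches the equivalent form $(a\lambda^2+b\lambda+c_n)/(a\lambda^2+b\lambda+c_d)$ only after invoking Assumption~\ref{assump1} and unit-norm class gradients; your identity is exact, confines all approximation to the single claim that $\lVert\nabla f\rVert^2$ grows with $\lambda$, and yields Proposition 3.1 as a byproduct. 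Two points still need tightening. First, a uniform \emph{additive} increase of the denominator closes the supremum gap only because $\lVert\bar{\mathbf{a}}(\mathbf{w})\rVert$ is bounded under the paper's normalization convention; if a maximizing sequence had $\lVert\bar{\mathbf{a}}\rVert\to\infty$ with $V/\lVert\bar{\mathbf{a}}\rVert^2$ tending to the supremum, the additive gain would become asymptotically negligible and the two suprema could coincide, so you should state the boundedness explicitly. Second, strictness of $B^2(\lambda)<B^2(0)$ requires $\sup_{\mathbf{w}}V(\mathbf{w})>0$, i.e., the non-degenerate case $c_n>c_d$ that the paper also (implicitly) assumes; in the iid-label limit both sides equal $1$ and the inequality is not strict.
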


\begin{proof}
From~\ref{B_def} we have
\begin{equation}
B^2(\lambda) = \sup_{\mathbf{w}\in \mathbb{R}^d} {G_d(\mathbf{w},\lambda)}
\label{B_def1}
\end{equation}
 For a fixed $\lambda$ as per proposition~\ref{th1} we have the following. 
\begin{equation}
\sup_{\mathbf{w}\in \mathbb{R}^d}{G_d(\mathbf{w},\lambda)} < \sup_{\mathbf{w}\in \mathbb{R}^d}{G_d(\mathbf{w},0)} 
\label{sup_eq}
\end{equation}
The above inequality~\ref{sup_eq} is true as  proposition~\ref{th1} guarantees that the value of $G_d({\mathbf{w}},\lambda) < G_d({\mathbf{w}},0)$ for all $\mathbf{w}$ when $\lambda \geq \lambda_c$. If inequality~\ref{sup_eq} is not true, one can find a $\mathbf{w}$ that contradicts the proposition~\ref{th1} which is impossible.  This means for some value of $\lambda \geq \lambda_{c}$ we have $B^2(\lambda) < B^2(0)$ from Eq.~\ref{B_def} and Eq.~\ref{sup_eq}.
\end{proof}
The key takeaway from the analysis is that by introducing the regularizer we can tightly bound the heterogeneity when compared to the case without the regularizer.
\textit{Based on the works~\citep{karimireddy2020scaffold,li2020federated} we explain that lower $B^2(\lambda)$ implies better convergence, which is also supported by empirical evidence. These details are provided in the Sec.~\ref{sup:convg_disc} of the Appendix.} 
\subsection{Discussison on the Generalization of ASD}
\label{sec:empirical_analysis}
\begin{table}[htp]
\centering 
\caption{The table shows the impact of ASD on the algorithms on CIFAR-100 Dataset using the non-iid partition of $\delta = 0.3$. We consistently see that the top eigenvalue and the trace of the Hessian of the loss of the global model decrease and the accuracy improves when ASD is used. This suggests that by using ASD we can make global model reach a flat minimum towards better generalization.}
\scalebox{0.8}{
\begin{tabular}{c|c|c|c}
\toprule
Algorithm      & Top Eigenvalue $\downarrow$ & Trace $\downarrow$  & Accuracy $\uparrow$  \\ \midrule
FedAvg         & 53.6            & 8516  & 38.67    \\ 
FedAvg + ASD   & \textbf{12.3}            & \textbf{2269}  & \textbf{42.77}    \\ \midrule

FedDyn         & 49.4            & 6675  & 47.56    \\ 
FedDyn + ASD   & \textbf{14.2}            & \textbf{2241}  & \textbf{49.03}    \\\midrule
FedSpeed       & 51.9            & 6937  & 47.39    \\ 
FedSpeed + ASD & \textbf{14.6}            & \textbf{2063}  & \textbf{49.16}    \\ \bottomrule
\end{tabular}
}
\label{main:hess_tab}
\end{table}
\begin{figure}[htp]
  \centering
   \includegraphics[scale=0.35]{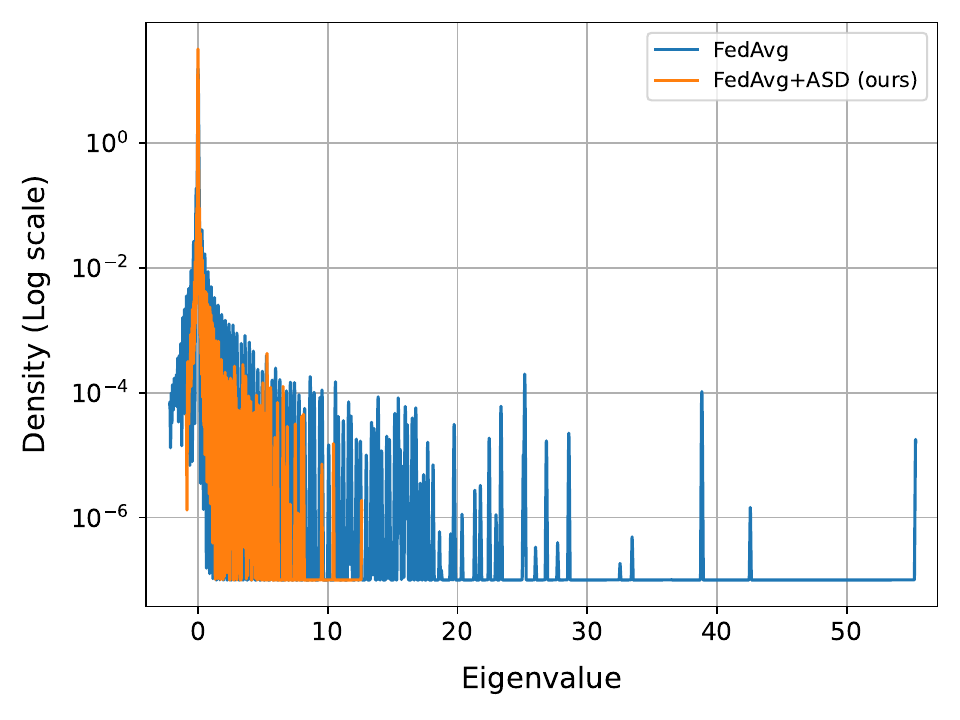}
  \caption{Eigen spectrum with and without the ASD regularizer. It is evident that  ASD regularizer not only minimizes the top eigenvalue but most of the eigenvalues and attains the flatness.}
  \label{fig_eigen_dens}
\end{figure}
The key reason for better generalization is the adaptive self-distillation loss. It has been shown in~\citep{mobahi2020self} that self-distillation improves the generalization in centralized settings. It's been empirically shown in~\citep{zhang2019your} that self-distillation helps the model to converge to flat-minimum. Generally, converging to flat minima is indicative of improved generalization, a concept explored in prior studies such as~\citep{keskar2017on} and~\citep{yao2020pyhessian}. 
The top eigenvalue and the trace of the Hessian computed from the training loss are typical measures of `flatness' of the minimum to which the training converges, i.e., lower values of these measures indicate the presence of a flat minimum.
To gain a deeper understanding of this phenomenon in a federated learning setting, we analyzed the top eigenvalue and the trace of the Hessian of the cross-entropy loss for global models obtained with and without the ASD regularizer. 
The following argument establishes that if the client models converge to flat minima, it would also ensure convergence of the resultant global model to a flat minimum.
We assume the Hessians of the functions $f_k$ ($k^{th}$ client's local objective), $f$ (resultant global objective) exist and are continuous almost everywhere. Since $f = \frac{1}{K}\sum_{i=1}^{K}{f_i} $, we have $\mathbf{H}(f) = \frac{1}{K} \sum_{i=1}^{K}\mathbf{H}({f_i})$ ($\mathbf{H}(g)$ denotes the Hessian of function $g$). This implies $\mu_{1}({\mathbf{H}(f)}) \leq \frac{1}{K} {\sum_{i = 1}^K } \mu_{1}({\mathbf{H}(f_i)})$ ($\mu_{1}(\mathbf{A})$ denotes top eigenvalue of matrix $\mathbf{A}$). Thus when the local models converge to a flat minimum, it will ensure the convergence of the global model to a flat minimum.
Following the method of~\citep{yao2020pyhessian}, we computed the top eigenvalue and trace of the Hessian. 
In Table~\ref{main:hess_tab}, we observe that FedAvg+ASD attains lower values for the top eigenvalue and trace compared to FedAvg, suggesting convergence to flat minimum. The Eigen density plot in the figure~\ref{fig_eigen_dens} also confirms the same. We use the CIFAR-100 dataset with  non-iid data partitioning of $\delta = 0.3$ (refer to Sec.~\ref{main:exp}). In Table~\ref{main:hess_tab} we have presented our analysis when ASD is combined with FedAvg, FedDyn and FedSpeed. The results for other algorithms are presented in Sec.~\ref{sup:hess_analysis} of appendix. 
A similar concept has been explored in FedSAM~\citep{qu2022generalized,caldarola2022improving}; the issue with SAM-based methods is they require an extra forward and backward pass, which doubles the computational cost on the resource constrained edge devices. However, our method can be applied to SAM-based methods and further improve its performance. ASD consistently attains the flatness with the other FL algorithms and enhances their generalization.
\vspace{-0.1in}
\section{Experiments}
\label{main:exp}
We perform the experiments on CIFAR-10, CIFAR-100~\citep{krizhevsky2009learning}, Tiny-ImageNet \citep{le2015tiny} datasets with different degrees of heterogeneity in the balanced settings (i.e., the same number of samples per client but the class label distribution of each varies). We set the total number of clients to $100$ in all our 
experiments. We set the client participation rate to $0.1$, i.e., 10 percent of clients are sampled on an average per communication round, similar to the protocol followed in \citep{acar2021federated}. We build our experiments using 
publicly available codebase by \citep{acar2021federated}.
For generating non-iid data, Dirichlet distribution is used. To simulate the effect of label imbalance, for every client we sample the 'probability distribution' over the classes from the aforementioned Dirichlet distribution $p_{k}^{dir} = Dir(\delta,C)$. Every sample of $p_{k}^{Dir}$ is a vector of length $C$ and all the elements of this vector are non-negative and sum to 1. This vector represents the label distribution for the client. The parameter $\delta$ known as the 'concentration parameter', captures the degree of label heterogeneity. Lower values of $\delta$ capture high heterogeneity and as the value of $\delta$ increases, the label distribution becomes more uniform. Another parameter of Dirichlet distribution (i.e., $C$), its value can be interpreted from the training dataset ( $C=100$ for CIFAR-100). For notational convenience, we omit $C$ from $Dir(\delta,C)$ by simply re-writing as $Dir(\delta)$. By configuring the concentration parameter $\delta$ to 0.6 and 0.3, we sample the data using the Dirichlet distribution across the labels for each client from moderate to high heterogeneity by controlling $\delta$. This is in line with the approach followed 
in \citep{acar2021federated} and \citep{yurochkin2019bayesian}. 

\begin{table*}[htp]
\caption{Comparison of Accuracy(\%): We show the accuracy attained by the algorithms across the datasets (CIFAR-100/Tiny-ImageNet) at the end of 500 communication rounds. It can be seen that by combining the proposed approach the performance of all the algorithms can be significantly improved. 
}
\centering
\scalebox{0.8} {
\begin{tabular}{l|ccc|ccc}
\toprule
\multirow{3}{*}{Algorithm} 
& \multicolumn{3}{c}{CIFAR-100}                                                                 & \multicolumn{3}{c}{TinyImageNet}                       
\\ \cline{2-7} 
\\ 
& \multicolumn{1}{c}{$\delta= 0.3$}          
& \multicolumn{1}{c}{$\delta= 0.6$}          
& \multicolumn{1}{c}{IID}                     
& \multicolumn{1}{c}{$\delta= 0.3$}          
& \multicolumn{1}{c}{$\delta= 0.6$}          
& \multicolumn{1}{c}{IID}                     
\\ 
\midrule
FedAvg~\citep{mcmahan2017communication}                                      
& \multicolumn{1}{l}{38.67 $_{\pm{0.66}}$}          
& \multicolumn{1}{l}{38.53 $_{\pm{0.32}}$}          
& \multicolumn{1}{l}{37.68 $_{\pm{0.41}}$}         
& \multicolumn{1}{l}{23.89 $_{\pm{0.84}}$}         
& \multicolumn{1}{l}{23.95 $_{\pm{0.72}}$}          
&  \multicolumn{1}{l}{23.48 $_{\pm{0.61}}$}                            
\\ 
FedAvg+ASD (\textbf{Ours}) 
& \multicolumn{1}{l}{\textbf{42.77} $_{\pm{0.22}}$}    
& \multicolumn{1}{l}{\textbf{42.54} $_{\pm{0.51}}$}   
& \multicolumn{1}{l}{\textbf{43.00} $_{\pm{0.60}}$}    
& \multicolumn{1}{l}{\textbf{25.31} $_{\pm{0.25}}$}    
& \multicolumn{1}{l}{\textbf{26.38} $_{\pm{0.21}}$}    
& \multicolumn{1}{l}{\textbf{26.67} $_{\pm{0.10}}$}           
\\ 
\midrule
FedProx~\citep{li2020federated}                                     
& \multicolumn{1}{l}{37.79 $_{\pm{0.97}}$}          
& \multicolumn{1}{l}{37.92 $_{\pm{0.55}}$}          
& \multicolumn{1}{l}{ 37.94$_{\pm{0.22}}$}         
& \multicolumn{1}{l}{24.61 $_{\pm{1.24}}$}         
& \multicolumn{1}{l}{23.57 $_{\pm{0.44}}$}          
&  \multicolumn{1}{l}{23.27 $_{\pm{0.11}}$}                            
\\
FedProx+ASD (\textbf{Ours}) 
& \multicolumn{1}{l}{\textbf{41.31} $_{\pm{0.90}}$}    
& \multicolumn{1}{l}{\textbf{41.67} $_{\pm{0.12}}$}   
& \multicolumn{1}{l}{\textbf{42.30} $_{\pm{0.37}}$}    
& \multicolumn{1}{l}{\textbf{25.49} $_{\pm{0.45}}$}    
& \multicolumn{1}{l}{\textbf{25.62} $_{\pm{0.05}}$}    
& \multicolumn{1}{l}{\textbf{25.58} $_{\pm{0.18}}$} 
\\ 
\midrule
FedNTD~\citep{lee2021preservation}                                       
& \multicolumn{1}{l}{40.40 $_{\pm{1.52}}$}          
& \multicolumn{1}{l}{40.50 $_{\pm{0.54}}$}          
& \multicolumn{1}{l}{41.23 $_{\pm{0.44}}$}            
& \multicolumn{1}{l}{23.71 $_{\pm{0.65}}$}          
& \multicolumn{1}{l}{23.28 $_{\pm{0.29}}$}           
& \multicolumn{1}{l} {22.95 $_{\pm{0.22}}$}                                     
\\ 
FedNTD+ASD (\textbf{Ours})                       
& \multicolumn{1}{l}{\textbf{43.01} $_{\pm{0.34}}$}          
& \multicolumn{1}{l}{\textbf{43.61} $_{\pm{0.33}}$}          
& \multicolumn{1}{l}{\textbf{43.25} $_{\pm{0.41}}$}     
& \multicolumn{1}{l}{\textbf{27.34} $_{\pm{0.73}}$}          
& \multicolumn{1}{l}{\textbf{27.39} $_{\pm{0.39}}$}          
& \multicolumn{1}{l}{\textbf{27.41} $_{\pm{0.11}}$}                       
\\ 
\midrule
FedDyn~\citep{acar2021federated}                                       
& \multicolumn{1}{l}{47.56 $_{\pm{0.41}}$}          
& \multicolumn{1}{l}{48.60 $_{\pm{0.09}}$}          
& \multicolumn{1}{l} {48.87 $_{\pm{0.51}}$}  
& \multicolumn{1}{l}{27.62 $_{\pm{0.21}}$}          
& \multicolumn{1}{l}{28.58 $_{\pm{0.61}}$}          
&  \multicolumn{1}{l} {28.37 $_{\pm{0.20}}$}                               
\\
FedDyn+ASD (\textbf{Ours})                       
& \multicolumn{1}{l}{\textbf{49.03}  $_{\pm{0.24}}$} 
& \multicolumn{1}{l}{\textbf{50.23}  $_{\pm{0.25}}$} 
&  \multicolumn{1}{l}{\textbf{51.44} $_{\pm{0.48}}$} 
& \multicolumn{1}{l}{\textbf{29.94}  $_{\pm{0.67}}$} 
& \multicolumn{1}{l}{\textbf{30.05}  $_{\pm{0.24}}$} 
& \multicolumn{1}{l}{\textbf{30.76}  $_{\pm{0.44}}$}
\\ 
\midrule
FedSAM~\citep{caldarola2022improving}                                      
& \multicolumn{1}{l}{40.89 $_{\pm{0.30}}$}          
& \multicolumn{1}{l}{41.41 $_{\pm{0.34}}$}          
& \multicolumn{1}{l} {40.81 $_{\pm{0.26}}$}  
& \multicolumn{1}{l}{24.72 $_{\pm{0.64}}$}          
& \multicolumn{1}{l}{25.42 $_{\pm{0.49}}$}          
&  \multicolumn{1}{l} {23.50 $_{\pm{0.94}}$}                               
\\
FedSAM+ASD (\textbf{Ours})                       
& \multicolumn{1}{l}{\textbf{43.99}  $_{\pm{0.14}}$} 
& \multicolumn{1}{l}{\textbf{44.54}  $_{\pm{0.30}}$} 
&  \multicolumn{1}{l}{\textbf{44.77} $_{\pm{0.11}}$} 
& \multicolumn{1}{l}{\textbf{26.26}  $_{\pm{0.47}}$} 
& \multicolumn{1}{l}{\textbf{26.80}  $_{\pm{0.17}}$} 
& \multicolumn{1}{l}{\textbf{25.37}  $_{\pm{0.26}}$}
\\ 
\midrule
FedDisco~\citep{ye2023feddisco}                                       
& \multicolumn{1}{l}{38.97 $_{\pm{1.38}}$}          
& \multicolumn{1}{l}{38.87 $_{\pm{1.37}}$}          
& \multicolumn{1}{l} {37.85 $_{\pm{0.57}}$}  
& \multicolumn{1}{l}{24.35 $_{\pm{0.42}}$}          
& \multicolumn{1}{l}{24.03 $_{\pm{0.78}}$}          
&  \multicolumn{1}{l} {23.49 $_{\pm{0.31}}$}                               
\\
FedDisco+ASD (\textbf{Ours})                       
& \multicolumn{1}{l}{\textbf{41.55}  $_{\pm{1.06}}$} 
& \multicolumn{1}{l}{\textbf{41.94}  $_{\pm{0.30}}$} 
&  \multicolumn{1}{l}{\textbf{43.09} $_{\pm{0.47}}$} 
& \multicolumn{1}{l}{\textbf{25.43}  $_{\pm{0.46}}$} 
& \multicolumn{1}{l}{\textbf{26.03}  $_{\pm{0.26}}$} 
& \multicolumn{1}{l}{\textbf{26.56}  $_{\pm{0.9}}$}
\\ 
\midrule
FedSpeed~\citep{sun2023fedspeed}                                       
& \multicolumn{1}{l}{47.39 $_{\pm{0.82}}$}          
& \multicolumn{1}{l}{48.27 $_{\pm{0.13}}$}          
& \multicolumn{1}{l} {49.01 $_{\pm{0.46}}$}  
& \multicolumn{1}{l}{28.60 $_{\pm{0.15}}$}          
& \multicolumn{1}{l}{29.33 $_{\pm{0.3}}$}          
&  \multicolumn{1}{l} {29.62 $_{\pm{0.31}}$}                               
\\
FedSpeed+ASD (\textbf{Ours})                       
& \multicolumn{1}{l}{\textbf{49.16}  $_{\pm{0.40}}$} 
& \multicolumn{1}{l}{\textbf{49.76}  $_{\pm{0.27}}$} 
&  \multicolumn{1}{l}{\textbf{51.99} $_{\pm{0.32}}$} 
& \multicolumn{1}{l}{\textbf{30.97}  $_{\pm{0.25}}$} 
& \multicolumn{1}{l}{\textbf{30.05}  $_{\pm{0.24}}$} 
& \multicolumn{1}{l}{\textbf{32.68}  $_{\pm{0.53}}$}
\\ 
\bottomrule
\end{tabular}
}
\label{tab:table_acc}
\end{table*}

\begin{figure*}[htp]
  \centering
  \subfigure[CIFAR-100 ($\delta = 0.3$)]{\includegraphics[scale=0.34]{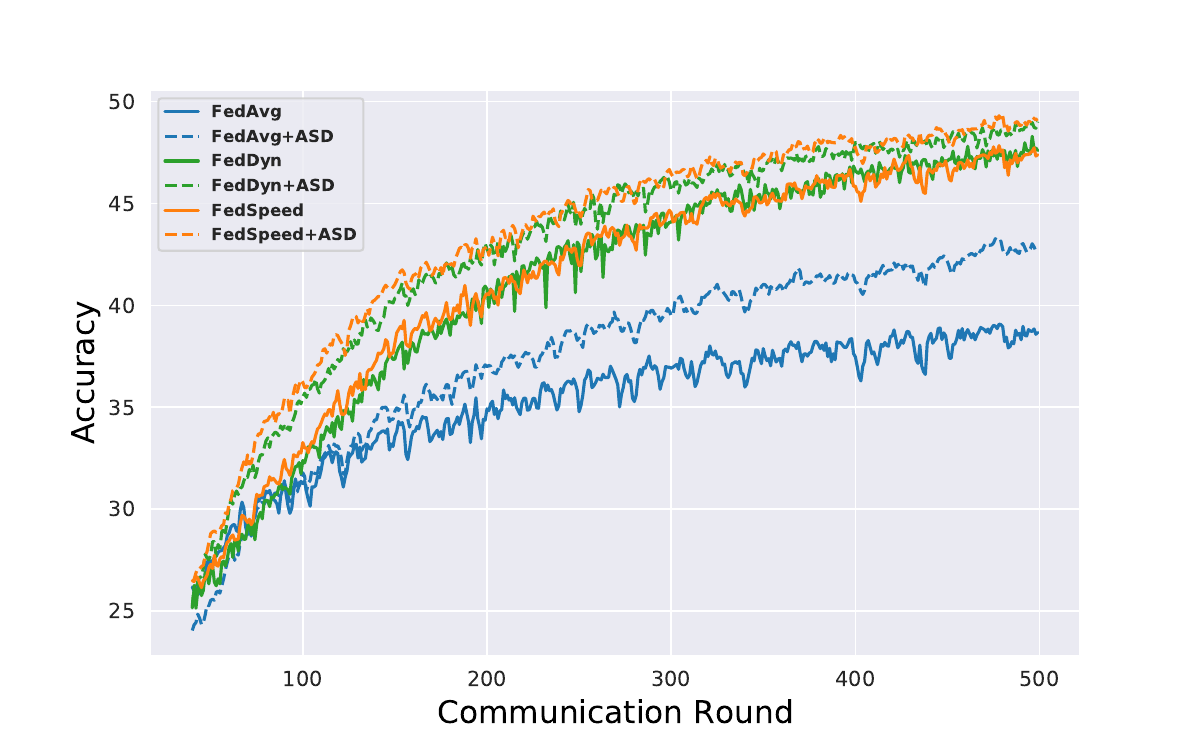}}\hspace{-2.2em}
  \subfigure[Tiny-ImageNet ($\delta = 0.3$)]{\includegraphics[scale=0.34]{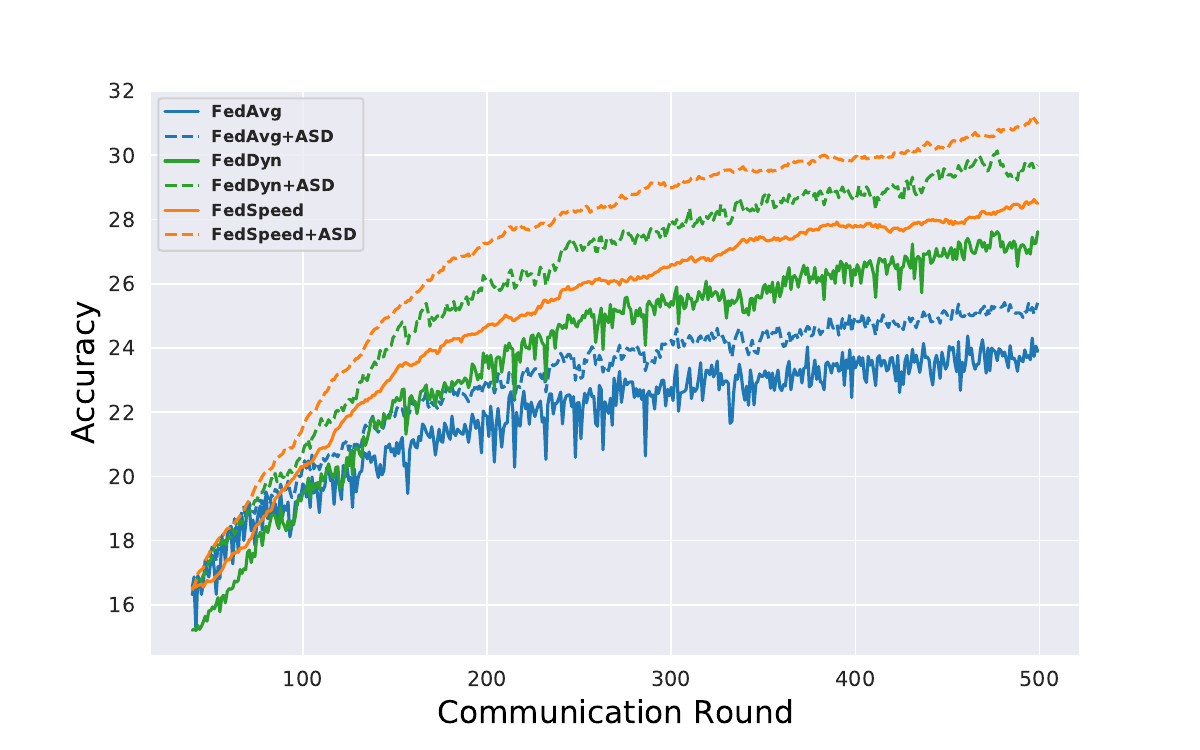}}\hspace{-2.2em}
  \caption{Test Accuracy vs Communication rounds: Comparison of algorithms with $\delta = 0.3$ partitions on CIFAR-100 and Tiny-ImageNet datasets. All the algorithms augmented with proposed regularization (ASD) outperform compared to their original form. FedSpeed+ASD outperforms all the other algorithms.}
  \label{fig_delta_0pt3}
\end{figure*}

\begin{figure*}[htp]
  \centering
 \subfigure[CIFAR-100 ($\delta = 0.6$)]{\includegraphics[scale=0.34]{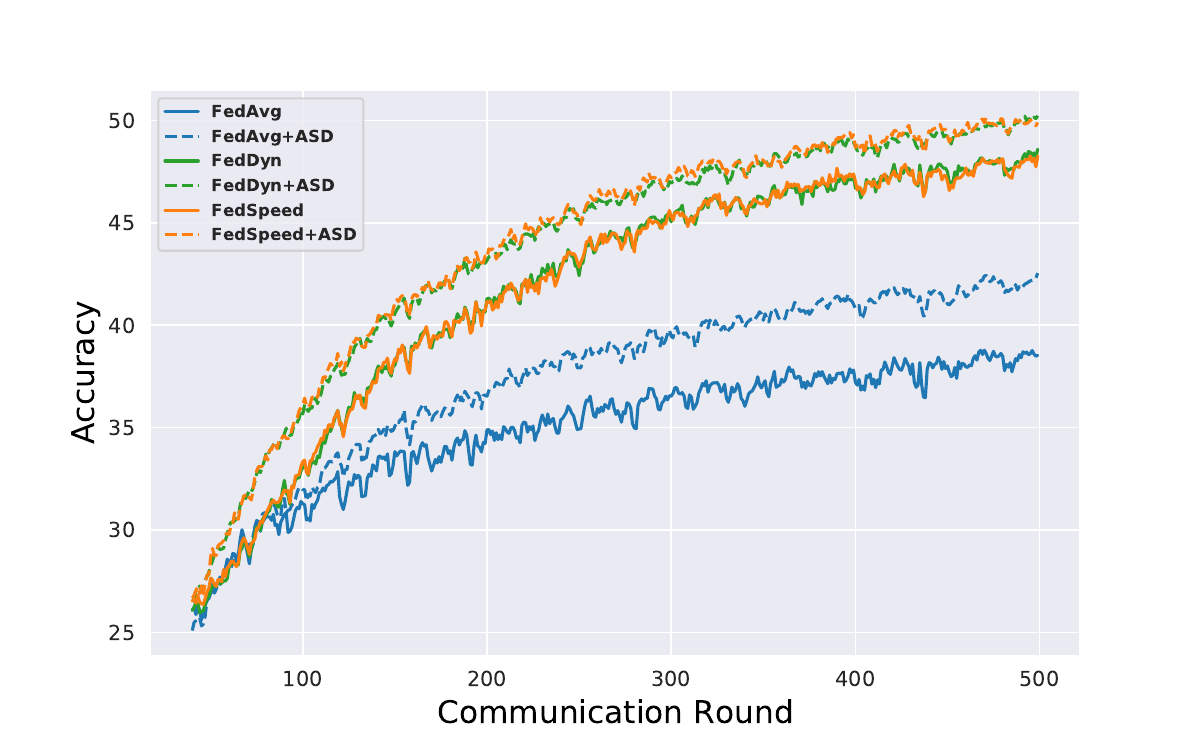}}\hspace{-2.2em}
  \subfigure[Tiny-ImageNet ($\delta = 0.6$)]{\includegraphics[scale=0.34]{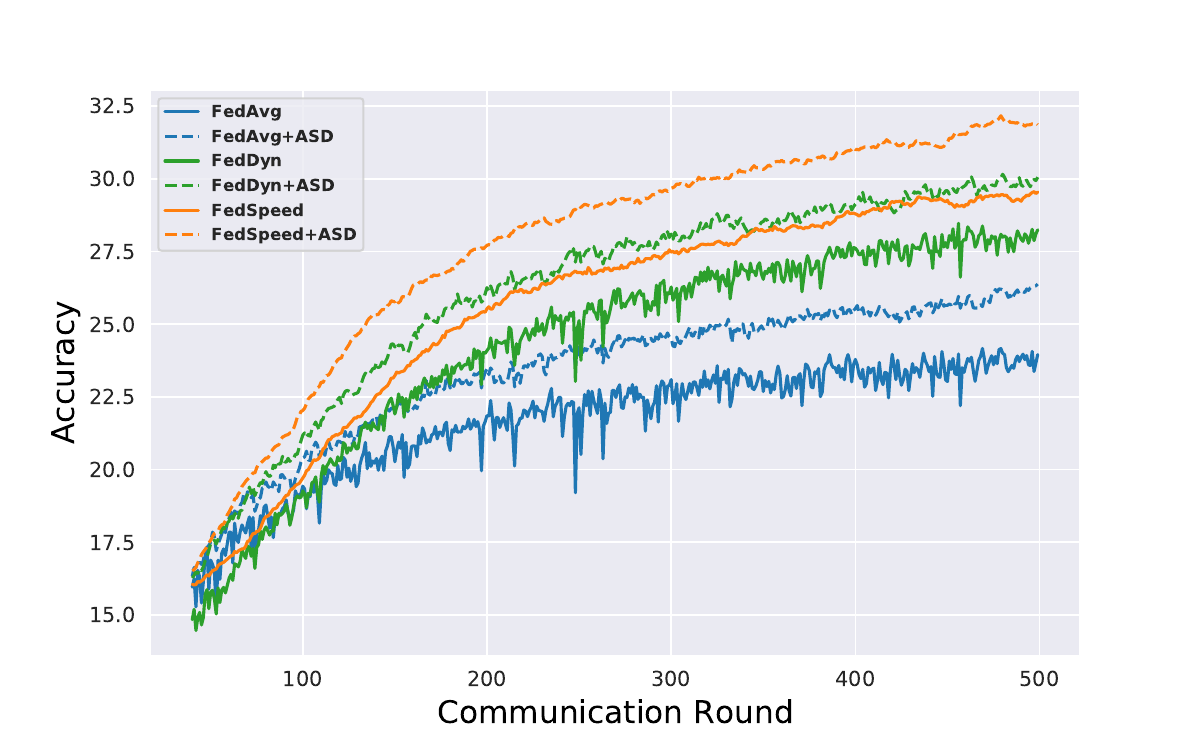}}\hspace{-2.2em}
  \caption{Test Accuracy vs Communication rounds: Comparison of algorithms with  $\delta = 0.6$ data partitions on CIFAR-100 and Tiny-ImageNet dataset. All the algorithms augmented with proposed regularization (ASD) outperform compared to their original form. FedSpeed+ASD outperforms all the other algorithms.
  }
  \label{fig_delta_0pt6}
\end{figure*}

\begin{figure*}[htp]
  \centering
    \subfigure[CIFAR-100 (iid)]{\includegraphics[scale=0.34]{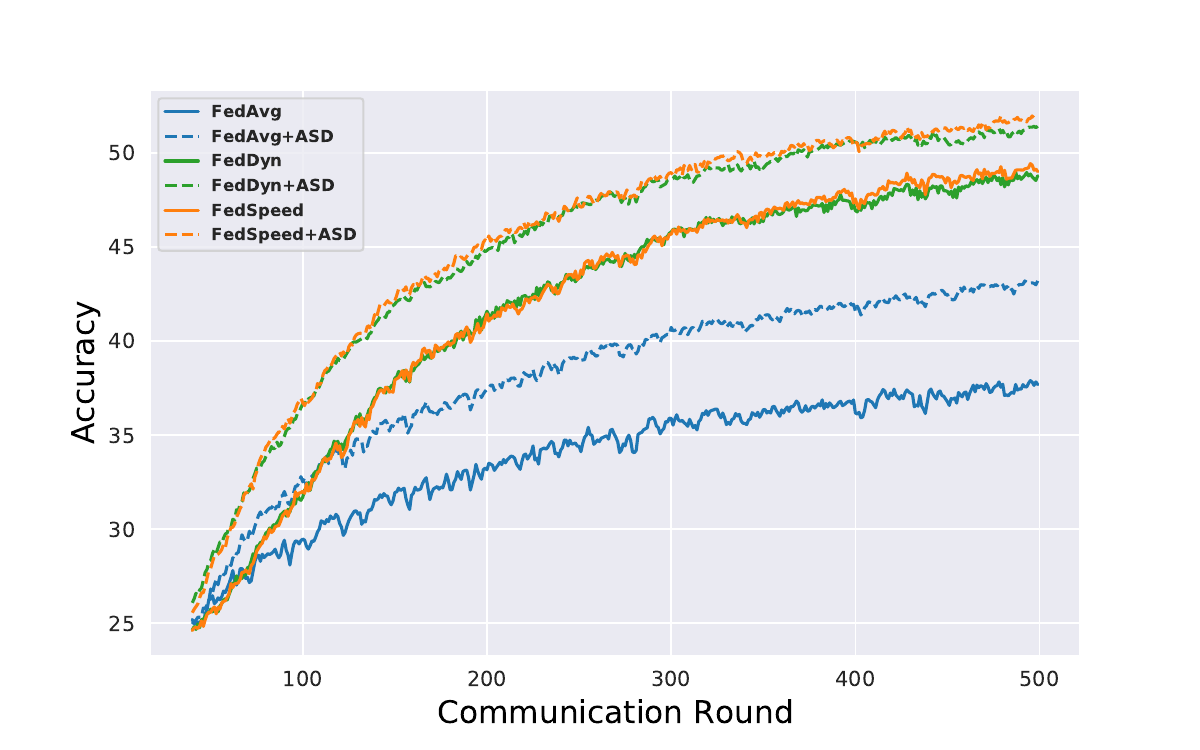}}\hspace{-2.2em}
  \subfigure[Tiny-ImageNet (iid)]{\includegraphics[scale=0.34]{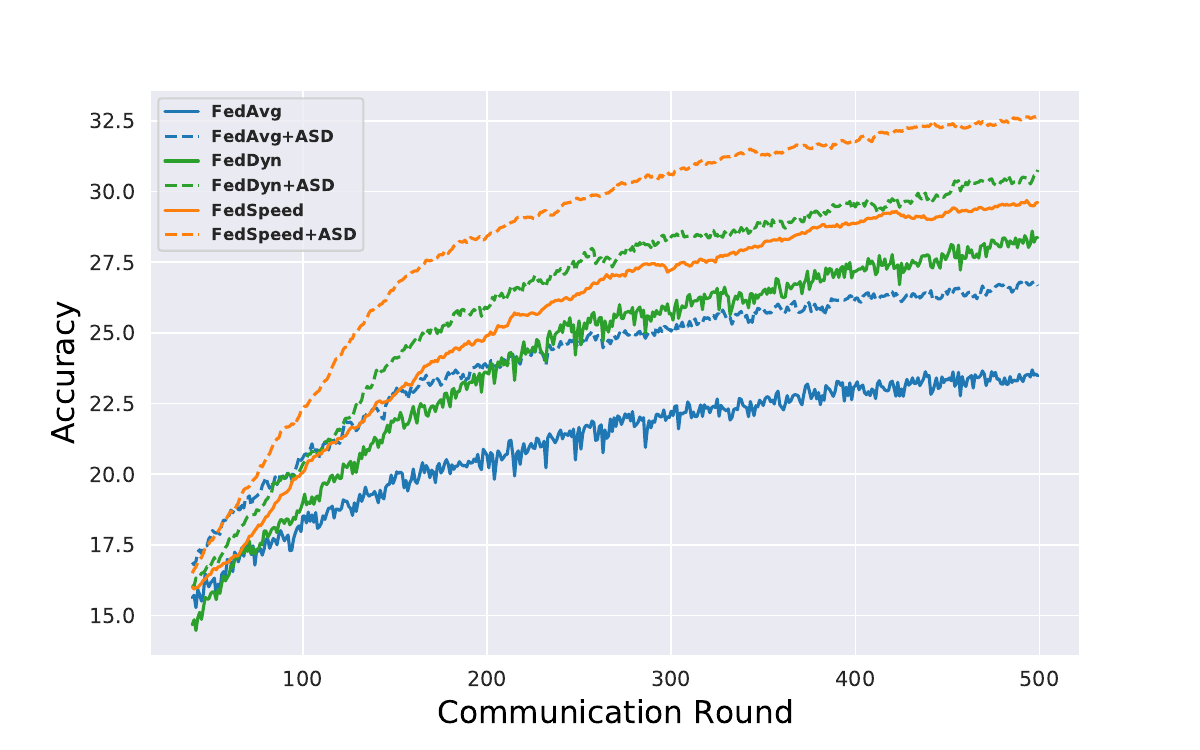}}\hspace{-2.2em}
  \caption{Test Accuracy vs Communication rounds: Comparison of algorithms with iid data partitions on CIFAR-100 and Tiny-ImageNet datasets. All the algorithms augmented with proposed regularization (ASD) outperform compared to their original form. FedSpeed+ASD outperforms all the other algorithms.}
  \label{fig_delta_iid}
\end{figure*}

\section{Results and Discussion}
For evaluation, we report accuracy on the test dataset as our performance metric and the number of communication rounds required to attain the desired accuracy as a metric to quantify the communication cost. Specifically, we evaluate the global model on the test set and report its accuracy after every communication round. For comparison, we consider the popular methods for federated learning, such as FedAvg, FedProx, FedDyn, FedSpeed FedNTD,
FedSAM and 
FedDisco. We augment each of these methods with our approach (ASD)
and observe a significant boost in performance. For a fair comparison, we consider the same models used in Fedavg \citep{mcmahan2017communication}, and FedDyn \citep{acar2021federated}, for CIFAR-10 and CIFAR-100 classification tasks. The model architecture used for CIFAR-100 contains $2$ convolution layers followed by $3$ fully connected layers. For Tiny-ImageNet, we use $3$ convolution followed by $3$ fully connected layers. The detailed architectures are given in Sec~\ref{model_arch} of the appendix. 
Hyperparameters: SGD algorithm with a learning rate of 0.1 and decay the learning rate per round of 0.998 is used to train the client models. Temperature $\tau$ is set to 2.0. We only tune the hyper-parameter $\lambda$. More hyperparameter setting details and impact of  $\lambda$, $\tau$ are provided in Sec.~\ref{sup:hyp_settings} and~\ref{sup:hyper_sensitivity} of the appendix, respectively. 
{The impact of client participation rate and the number of clients on ASD are shown in Sec.~\ref{sup:cp_num_clnts} of the appendix.} Implementation of ASD with other FL methods is discussed in Sec.~\ref{asd_plus_algs} of appendix.
We compare the convergence of different schemes for 500 communication rounds. Following the testing protocol of \citep{acar2021federated}, we average across all the client models and compute the test accuracy on the averaged model, which is reported in our results. In all the tables, we report the test accuracy of the global model in $\%$ at the end of 500 communication rounds. All the experiments in the tables are performed over three different initializations, mean and standard deviations of accuracy over the three experiments are reported. We also demonstrate the efficacy of our proposed method with deeper architectures such as ResNet-20 and  
{Vision Transformer (ViT)} models in Sec~\ref{sup:deep_models} and 
{ Sec~\ref{sup:deep_models_vit}} of the appendix respectively.
\par

\vspace{-0.05in}
\subsection{Performance of ASD on CIFAR-10/100 and Tiny-Imagenet}
\vspace{-0.05in}
In Table~\ref{tab:table_acc}, we report the performance of CIFAR-100 and Tiny-ImageNet datasets with various algorithms for non-iid ($Dir(\delta = 0.3)$ and $Dir(\delta = 0.6)$) as well as the iid settings. Each experiment is performed over three different initializations, and the mean and standard deviation of the accuracy are reported.
On CIFAR-100, we observe that our proposed ASD applied on FedDyn improves its performance by $\approx 1.45\%$ for $Dir(\delta = 0.3)$ and $\approx 1.6\%$ $Dir(\delta = 0.6)$.
 Similarly, for Tiny-ImageNet we observe that FedDyn+ASD improves FedDyn by $\approx 2.4\%$ for $Dir(\delta = 0.3)$ and by $\approx 1.4\%$ for $Dir(\delta = 0.6)$. The test accuracy vs communication rounds plot on CIFAR-100 and Tiny-ImageNet datasets is shown in Figures~\ref{fig_delta_0pt3},~\ref{fig_delta_0pt6}~\ref{fig_delta_iid} across non-iid and iid partitions.
We can see that adding ASD gives consistent improvement across the rounds\footnote{In the figures we only compare FedAvg, FedDyn and FedSpeed for better readability. For others, please refer to Sec~\ref{sup:acc_comm} of the appendix}. 
We obtain significant improvements for FedAvg+ASD against FedAvg, FedProx+ASD against FedProx, FedSpeed+ASD against FedSpeed, etc. In Sec~\ref{sup:cifar10} of the appendix we present the CIFAR-10 results where we observe that adding ASD consistently gives an improvement of $\approx 0.4\% - 1.99\% $ improvement across the algorithms. 

\subsection{Comparison with adaptive vs uniform weights}
\begin{table}[htp]
\centering 
\caption{Comparison with adaptive weights vs uniform weights on CIFAR-100 dataset with Dirichlet $\delta = 0.3$}
\scalebox{0.8}{
\begin{tabular}{l|l|l}
\hline
\multirow{2}{*}{Algorithm} 
& \multicolumn{1}{c|}{\begin{tabular}[c]{@{}c@{}} Distillation with\\ Uniform weights\end{tabular}} 
& \multicolumn{1}{c}{\begin{tabular}[c]{@{}c@{}} Distillation with\\ Adaptive  weights\end{tabular}} \\ 
&   &                                                     
\\ \hline
FedAvg+ASD                            
& 41.75  $_{\pm{0.12}}$                                                                        
& \textbf{42.77  $_{\pm{0.22}}$ }                                                                   
\\ \hline
FedNTD+ASD                             
& 40.40 $_{\pm{1.52}}$                                                                               
& \textbf{43.01 $_{\pm{0.34}}$ } 

\\ \hline

FedDisco+ASD                             
& 40.21 $_{\pm{0.57}}$                                                                               
& \textbf{41.55 $_{\pm{1.06}}$ } 

\\ \hline

FedDyn+ASD                             
& 47.90 $_{\pm{0.35}}$                                                                               
& \textbf{49.03 $_{\pm{0.24}}$ }   


\\ \hline
\end{tabular}
}
\label{sup:tab:table6}
\end{table}
 We analyze the impact of the proposed adaptive weighting scheme.
We compare by making all the $\hat{\alpha_{k}}^{i}$ in Eq~\ref{alpha_hat_eq} to $1$ i.e, by giving equal weights to all the samples in the mini-batch. We can see from Table~\ref{sup:tab:table6} that the proposed adaptive weighting scheme yields much better performance than assigning uniform weights, thus establishing the impact of proposed adaptive weights. 

\subsection{Performance with increased clients and lower client participation}
\begin{table}[htp]
\centering
{
\centering
\caption{Experiments with 500 clients}
\scalebox{0.8}{
\begin{tabular}{c|c}
\toprule
Method       & Accuracy (in \%) \\ \hline
FedAvg       & 27.92            \\ 
FedAvg+ASD (ours)   & \textbf{31.28}   \\ \hline
FedProx      & 28.09            \\ 
FedProx+ASD  & \textbf{32.13}   \\ \hline
FedNTD      & 30.99           \\ 
FedNTD+ASD  & \textbf{33.65}   \\ \hline
FedDyn       & 31.0            \\ 
FedDyn+ASD (ours)  & \textbf{33.12}   \\ \hline
FedSpeed     & 34.08            \\
FedSpeed+ASD (ours) & \textbf{36.59}   \\ \bottomrule
\end{tabular}
}
\label{exp_increase_cl}
}
\end{table}

{
In this section, we analyze the impact of our ASD regularizer to mimic the cross-device setting. We increase the client participation to 500 clients and only 1\% of the clients participate in every round. We consider the CIFAR-100 dataset and the non-iid data partition of $\delta = 0.3$. In the Table~\ref{exp_increase_cl}, we observe that ASD consistently improves the performance of the algorithms. The accuracies are reported after averaging over three different initializations at the end of $1000$ communication rounds. Even in this challenging setting ASD consistently improves the performance of the FL algorithms.} 

\section{Computation Cost} 
\label{sec_compute}
\vspace{-0.1in}
 The major computation for the distillation scheme comes from the teacher forward pass, student forward pass, and the student backward pass~\citep{xu2020computation}. We assume $C_s$ as the total computational cost of server model forward pass and $C_k$ be the total computation cost of client model $k$ the forward pass per epoch. We do not need $C_s$ computations every epoch, we only need to compute once and store the values of $\mathcal{H}(x)$ while keeping the same backward computation. Specifically, for the computation of distillation regularizer we only need $E*C_k + C_s$ local computations compared to $E*C_k$ computations without regularizer. Here $E$ denotes the local epochs of the client. Since $C_s = C_k$, we have $(E+1)*C_k$ local computations. Thus, our regularizer introduces minimal forward computation on the edge devices, which typically have low computation. 
 {In  Sec.~\ref{comp_vs_acc} of appendix we discuss the computation vs accuracy of ASD.}        
\section{Conclusion}
\vspace{-0.1in}
In this work, we presented an efficient and effective method for addressing client data heterogeneity due to label imbalance in federated learning using our proposed  Adaptive Self-Distillation (ASD), which does not require any auxiliary data and no extra communication cost. We also theoretically showed that ASD has lower client-drift leading to better convergence. Moreover, we performed analysis to show that ASD has better generalization by analyzing the top eigenvalue and trace  of the Hessian of the global model's loss. The effectiveness of our approach is shown via extensive experiments across datasets such as CIFAR-10, CIFAR-100 and Tiny-ImageNet with different degrees of heterogeneity. Our proposed regularizer (ASD) can be integrated easily atop any of the FL frameworks. We evaluated this efficacy by showing improvement in the performance when combined with FedAvg, FedProx, FedDyn, FedSAM, FedDisco, FedNTD and FedSpeed. We have also shown that the computation required to implement ASD is simply an additional forward pass on the client-side training, i.e, all the gains we obtain with ASD requires minimal compute. Our research can inspire the designing of the computationally efficient regularizers that concurrently reduce client-drift and improve the generalization.

\bibliography{tmlr}

@inproceedings{karimireddy2020scaffold,
  title={Scaffold: Stochastic controlled averaging for federated learning},
  author={Karimireddy, Sai Praneeth and Kale, Satyen and Mohri, Mehryar and Reddi, Sashank and Stich, Sebastian and Suresh, Ananda Theertha},
  booktitle={International Conference on Machine Learning},
  pages={5132--5143},
  year={2020},
  organization={PMLR}
}

@inproceedings{mcmahan2017communication,
  title={Communication-efficient learning of deep networks from decentralized data},
  author={McMahan, Brendan and Moore, Eider and Ramage, Daniel and Hampson, Seth and y Arcas, Blaise Aguera},
  booktitle={Artificial intelligence and statistics},
  pages={1273--1282},
  year={2017},
  organization={PMLR}
}

@article{acar2021federated,
  title={Federated learning based on dynamic regularization},
  author={Acar, Durmus Alp Emre and Zhao, Yue and Navarro, Ramon Matas and Mattina, Matthew and Whatmough, Paul N and Saligrama, Venkatesh},
  journal={arXiv preprint arXiv:2111.04263},
  year={2021}
}

@article{li2020federated,
  title={Federated optimization in heterogeneous networks},
  author={Li, Tian and Sahu, Anit Kumar and Zaheer, Manzil and Sanjabi, Maziar and Talwalkar, Ameet and Smith, Virginia},
  journal={Proceedings of Machine Learning and Systems},
  volume={2},
  pages={429--450},
  year={2020}
}

@inproceedings{zhu2021data,
  title={Data-free knowledge distillation for heterogeneous federated learning},
  author={Zhu, Zhuangdi and Hong, Junyuan and Zhou, Jiayu},
  booktitle={International Conference on Machine Learning},
  pages={12878--12889},
  year={2021},
  organization={PMLR}
}

@article{zhou2020distilled,
  title={Distilled one-shot federated learning},
  author={Zhou, Yanlin and Pu, George and Ma, Xiyao and Li, Xiaolin and Wu, Dapeng},
  journal={arXiv preprint arXiv:2009.07999},
  year={2020}
}

@article{stich2018local,
  title={Local SGD converges fast and communicates little},
  author={Stich, Sebastian U},
  journal={arXiv preprint arXiv:1805.09767},
  year={2018}
}

@article{yadav2016review,
  title={A review on energy efficient protocols in wireless sensor networks},
  author={Yadav, Sarika and Yadav, Rama Shankar},
  journal={Wireless Networks},
  volume={22},
  number={1},
  pages={335--350},
  year={2016},
  publisher={Springer}
}

@article{kairouz2021advances,
  title={Advances and open problems in federated learning},
  author={Kairouz, Peter and McMahan, H Brendan and Avent, Brendan and Bellet, Aur{\'e}lien and Bennis, Mehdi and Bhagoji, Arjun Nitin and Bonawitz, Kallista and Charles, Zachary and Cormode, Graham and Cummings, Rachel and others},
  journal={Foundations and Trends{\textregistered} in Machine Learning},
  volume={14},
  number={1--2},
  pages={1--210},
  year={2021},
  publisher={Now Publishers, Inc.}
}

@article{mishchenko2019distributed,
  title={Distributed learning with compressed gradient differences},
  author={Mishchenko, Konstantin and Gorbunov, Eduard and Tak{\'a}{\v{c}}, Martin and Richt{\'a}rik, Peter},
  journal={arXiv preprint arXiv:1901.09269},
  year={2019}
}

@inproceedings{yurochkin2019bayesian,
  title={Bayesian nonparametric federated learning of neural networks},
  author={Yurochkin, Mikhail and Agarwal, Mayank and Ghosh, Soumya and Greenewald, Kristjan and Hoang, Nghia and Khazaeni, Yasaman},
  booktitle={International Conference on Machine Learning},
  pages={7252--7261},
  year={2019},
  organization={PMLR}
}

@article{geiping2020inverting,
  title={Inverting gradients-how easy is it to break privacy in federated learning?},
  author={Geiping, Jonas and Bauermeister, Hartmut and Dr{\"o}ge, Hannah and Moeller, Michael},
  journal={Advances in Neural Information Processing Systems},
  volume={33},
  pages={16937--16947},
  year={2020}
}

@article{huang2021evaluating,
  title={Evaluating gradient inversion attacks and defenses in federated learning},
  author={Huang, Yangsibo and Gupta, Samyak and Song, Zhao and Li, Kai and Arora, Sanjeev},
  journal={Advances in Neural Information Processing Systems},
  volume={34},
  pages={7232--7241},
  year={2021}
}

@article{hinton2015distilling,
  title={Distilling the knowledge in a neural network},
  author={Hinton, Geoffrey and Vinyals, Oriol and Dean, Jeff and others},
  journal={arXiv preprint arXiv:1503.02531},
  volume={2},
  number={7},
  year={2015}
}

@inproceedings{
lee2021preservation,
title={Preservation of the Global Knowledge by Not-True Distillation in Federated Learning},
author={Gihun Lee and Minchan Jeong and Yongjin Shin and Sangmin Bae and Se-Young Yun},
booktitle={Advances in Neural Information Processing Systems},
editor={Alice H. Oh and Alekh Agarwal and Danielle Belgrave and Kyunghyun Cho},
year={2022},
url={https://openreview.net/forum?id=qw3MZb1Juo}
}

@article{he2022class,
  title={Class-Wise Adaptive Self Distillation for Heterogeneous Federated Learning},
  author={He, Yuting and Chen, Yiqiang and Yang, Xiaodong and Zhang, Yingwei and Zeng, Bixiao},
  year={2022}
}

@article{tang2019learning,
  title={Learning efficient detector with semi-supervised adaptive distillation},
  author={Tang, Shitao and Feng, Litong and Shao, Wenqi and Kuang, Zhanghui and Zhang, Wei and Chen, Yimin},
  journal={arXiv preprint arXiv:1901.00366},
  year={2019}
}

@article{xu2020computation,
  title={Computation-efficient knowledge distillation via uncertainty-aware mixup},
  author={Xu, Guodong and Liu, Ziwei and Loy, Chen Change},
  journal={arXiv preprint arXiv:2012.09413},
  year={2020}
}

@techreport{krizhevsky2009learning,
  title={Learning multiple layers of features from tiny images},
  author={Krizhevsky, Alex and Hinton, Geoffrey},
  year={2009},
  institution={Canadian Institute for Advanced Research}
}

@article{le2015tiny,
  title={Tiny imagenet visual recognition challenge},
  author={Le, Ya and Yang, Xuan},
  journal={CS 231N},
  volume={7},
  number={7},
  pages={3},
  year={2015}
}

@inproceedings{yao2020pyhessian,
  title={Pyhessian: Neural networks through the lens of the hessian},
  author={Yao, Zhewei and Gholami, Amir and Keutzer, Kurt and Mahoney, Michael W},
  booktitle={2020 IEEE international conference on big data (Big data)},
  pages={581--590},
  year={2020},
  organization={IEEE}
}

@article{rieke2020future,
  title={The future of digital health with federated learning},
  author={Rieke, Nicola and Hancox, Jonny and Li, Wenqi and Milletari, Fausto and Roth, Holger R and Albarqouni, Shadi and Bakas, Spyridon and Galtier, Mathieu N and Landman, Bennett A and Maier-Hein, Klaus and others},
  journal={NPJ digital medicine},
  volume={3},
  number={1},
  pages={1--7},
  year={2020},
  publisher={Nature Publishing Group}
}

@misc{47586,
title	= {Federated Learning for Mobile Keyboard Prediction},
author	= {Andrew Hard and Chloé M Kiddon and Daniel Ramage and Francoise Beaufays and Hubert Eichner and Kanishka Rao and Rajiv Mathews and Sean Augenstein},
year	= {2018},
URL	= {https://arxiv.org/abs/1811.03604}
}

@inproceedings{
keskar2017on,
title={On Large-Batch Training for Deep Learning: Generalization Gap and Sharp Minima},
author={Nitish Shirish Keskar and Dheevatsa Mudigere and Jorge Nocedal and Mikhail Smelyanskiy and Ping Tak Peter Tang},
booktitle={International Conference on Learning Representations},
year={2017},
url={https://openreview.net/forum?id=H1oyRlYgg}
}

@inproceedings{hsu2020federated,
  title={Federated visual classification with real-world data distribution},
  author={Hsu, Tzu-Ming Harry and Qi, Hang and Brown, Matthew},
  booktitle={Computer Vision--ECCV 2020: 16th European Conference, Glasgow, UK, August 23--28, 2020, Proceedings, Part X 16},
  pages={76--92},
  year={2020},
  organization={Springer}
}

@ARTICLE{9826416,
  author={He, Yuting and Chen, Yiqiang and Yang, XiaoDong and Yu, Hanchao and Huang, Yi-Hua and Gu, Yang},
  journal={IEEE Transactions on Big Data}, 
  title={Learning Critically: Selective Self-Distillation in Federated Learning on Non-IID Data}, 
  year={2022},
  volume={},
  number={},
  pages={1-12},
  doi={10.1109/TBDATA.2022.3189703}}

@article{seo202216,
  title={16 Federated Knowledge Distillation},
  author={Seo, Hyowoon and Park, Jihong and Oh, Seungeun and Bennis, Mehdi and Kim, Seong-Lyun},
  journal={Machine Learning and Wireless Communications},
  pages={457},
  year={2022},
  publisher={Cambridge University Press}
}

@article{lin2020ensemble,
  title={Ensemble distillation for robust model fusion in federated learning},
  author={Lin, Tao and Kong, Lingjing and Stich, Sebastian U and Jaggi, Martin},
  journal={Advances in Neural Information Processing Systems},
  volume={33},
  pages={2351--2363},
  year={2020}
}

@inproceedings{
sun2023fedspeed,
title={FedSpeed: Larger Local Interval, Less Communication Round, and Higher Generalization Accuracy},
author={Yan Sun and Li Shen and Tiansheng Huang and Liang Ding and Dacheng Tao},
booktitle={The Eleventh International Conference on Learning Representations },
year={2023},
url={https://openreview.net/forum?id=bZjxxYURKT}
}

@article{ye2023feddisco,
  title={FedDisco: Federated Learning with Discrepancy-Aware Collaboration},
  author={Ye, Rui and Xu, Mingkai and Wang, Jianyu and Xu, Chenxin and Chen, Siheng and Wang, Yanfeng},
  year={2023}
}

@inproceedings{zhang2022federated,
  title={Federated learning with label distribution skew via logits calibration},
  author={Zhang, Jie and Li, Zhiqi and Li, Bo and Xu, Jianghe and Wu, Shuang and Ding, Shouhong and Wu, Chao},
  booktitle={International Conference on Machine Learning},
  pages={26311--26329},
  year={2022},
  organization={PMLR}
}

@article{mobahi2020self,
  title={Self-distillation amplifies regularization in hilbert space},
  author={Mobahi, Hossein and Farajtabar, Mehrdad and Bartlett, Peter},
  journal={Advances in Neural Information Processing Systems},
  volume={33},
  pages={3351--3361},
  year={2020}
}

@article{ramaswamy2019federated,
  title={Federated learning for emoji prediction in a mobile keyboard},
  author={Ramaswamy, Swaroop and Mathews, Rajiv and Rao, Kanishka and Beaufays, Fran{\c{c}}oise},
  journal={arXiv preprint arXiv:1906.04329},
  year={2019}
}

@article{xu2021federated,
  title={Federated learning for healthcare informatics},
  author={Xu, Jie and Glicksberg, Benjamin S and Su, Chang and Walker, Peter and Bian, Jiang and Wang, Fei},
  journal={Journal of Healthcare Informatics Research},
  volume={5},
  pages={1--19},
  year={2021},
  publisher={Springer}
}

@inproceedings{zhang2019your,
  title={Be your own teacher: Improve the performance of convolutional neural networks via self distillation},
  author={Zhang, Linfeng and Song, Jiebo and Gao, Anni and Chen, Jingwei and Bao, Chenglong and Ma, Kaisheng},
  booktitle={Proceedings of the IEEE/CVF international conference on computer vision},
  pages={3713--3722},
  year={2019}
}

@article{wang2021field,
  title={A field guide to federated optimization},
  author={Wang, Jianyu and Charles, Zachary and Xu, Zheng and Joshi, Gauri and McMahan, H Brendan and Al-Shedivat, Maruan and Andrew, Galen and Avestimehr, Salman and Daly, Katharine and Data, Deepesh and others},
  journal={arXiv preprint arXiv:2107.06917},
  year={2021}
}

@inproceedings{qu2022generalized,
  title={Generalized federated learning via sharpness aware minimization},
  author={Qu, Zhe and Li, Xingyu and Duan, Rui and Liu, Yao and Tang, Bo and Lu, Zhuo},
  booktitle={International Conference on Machine Learning},
  pages={18250--18280},
  year={2022},
  organization={PMLR}
}

@inproceedings{caldarola2022improving,
  title={Improving generalization in federated learning by seeking flat minima},
  author={Caldarola, Debora and Caputo, Barbara and Ciccone, Marco},
  booktitle={European Conference on Computer Vision},
  pages={654--672},
  year={2022},
  organization={Springer}
}

@inproceedings{wangslowmo,
  title={SlowMo: Improving Communication-Efficient Distributed SGD with Slow Momentum},
  author={Wang, Jianyu and Tantia, Vinayak and Ballas, Nicolas and Rabbat, Michael},
  booktitle={International Conference on Learning Representations}
}

@article{hsu2019measuring,
  title={Measuring the effects of non-identical data distribution for federated visual classification},
  author={Hsu, Tzu-Ming Harry and Qi, Hang and Brown, Matthew},
  journal={arXiv preprint arXiv:1909.06335},
  year={2019}
}

@inproceedings{reddiadaptive,
  title={Adaptive Federated Optimization},
  author={Reddi, Sashank J and Charles, Zachary and Zaheer, Manzil and Garrett, Zachary and Rush, Keith and Kone{\v{c}}n{\`y}, Jakub and Kumar, Sanjiv and McMahan, Hugh Brendan},
  booktitle={International Conference on Learning Representations}
}

@inproceedings{kim2024communication,
  title={Communication-efficient federated learning with accelerated client gradient},
  author={Kim, Geeho and Kim, Jinkyu and Han, Bohyung},
  booktitle={Proceedings of the IEEE/CVF Conference on Computer Vision and Pattern Recognition},
  pages={12385--12394},
  year={2024}
}

@inproceedings{wu2022tinyvit,
  title={Tinyvit: Fast pretraining distillation for small vision transformers},
  author={Wu, Kan and Zhang, Jinnian and Peng, Houwen and Liu, Mengchen and Xiao, Bin and Fu, Jianlong and Yuan, Lu},
  booktitle={European conference on computer vision},
  pages={68--85},
  year={2022},
  organization={Springer}
}

@inproceedings{zang2022dlme,
  title={Dlme: Deep local-flatness manifold embedding},
  author={Zang, Zelin and Li, Siyuan and Wu, Di and Wang, Ge and Wang, Kai and Shang, Lei and Sun, Baigui and Li, Hao and Li, Stan Z},
  booktitle={European Conference on Computer Vision},
  pages={576--592},
  year={2022},
  organization={Springer}
}
\bibliographystyle{tmlr}

\appendix
\section{Appendix}

\subsection{Notations and Definitions}
\begin{itemize}
   
  \item$\mathcal{H}(x)$ denotes the entropy of the model under consideration for input $x$.
  \item$p_k^{y^i = c}$ denotes the probability that the client $k$ has the input $i$ belonging to class $c$.
  \item$\mathbf{H}(g)$ denotes the Hessian of the function $g$.
  \item$\mu_1(\mathbf{A})$ denotes the top eigenvalue of matrix $\mathbf{A}$.
  \item$\mathcal{D}_{KL}$ denotes the KL divergence.
  \item$\inf$ denotes the infimum and $\sup$ denotes the supremum.
  \item$\delta$ is used for denoting the heterogeneity generated based on Dirichlet distribution.
  \item$\lambda$ denotes the ASD regularizer strength.
  \item$G_d(\mathbf{w},\lambda)$ denotes the gradient dissimilarity.
  \item$\mathbb{E}(.)$ denotes the expectation. 
  \item$L_{k}(\mathbf{w})$ is the loss of client $k$ (cross-entropy loss). 
  \item$ {L_{k}(\mathbf{w})}^{ASD}$ is the Adaptive Self-Distillation loss for client $k$.
  \item$\mathcal{D}_k$ represents the dataset of client $k$.
  \item $\mathcal{P}_k(x,y)$ represents the data distribution of the client $k$.
\end{itemize}

\subsection{Model Architectures}
\label{model_arch}
In Table~\ref{model_table}, the model architecture is shown. We use PyTorch style representation. For example conv layer($3$,$64$,$5$) means $3$ input channels, $64$ output channels and the kernel size is $5$. Maxpool($2$,$2$) represents the kernel size of $2$ and a stride of $2$. FullyConnected(384,200) represents an input dimension of $384$ and an output dimension of $200$. The architecture for CIFAR-100 is exactly the same as used in~\citep{acar2021federated}. 
\begin{table}[htp]
\centering
\caption{Models used for Tiny-ImageNet and CIFAR-100 datasets.}
\scalebox{0.7}{
\begin{tabular}{c|c}
\hline
\multicolumn{1}{l|}{\multirow{6}{*}{\textbf{CIFAR-10/100 Model}}} & \textbf{Tiny-ImageNet Model} \\ \cline{2-2} 
\multicolumn{1}{l|}{}                                              & ConvLayer(3,64,3)            \\ \cline{2-2} 
\multicolumn{1}{l|}{}                                              & GroupNorm(4,64)               \\ \cline{2-2} 
\multicolumn{1}{l|}{}                                              & Relu                          \\ \cline{2-2} 
\multicolumn{1}{l|}{}                                              & MaxPool(2,2)                  \\ \cline{2-2} 
\multicolumn{1}{l|}{}                                              & ConvLayer(64,64,3)           \\ \hline
\multicolumn{1}{l|}{}                                              & GroupNorm(4,64)               \\ \hline
ConvLayer(3,64,5)                                                  & Relu                          \\ \hline
Relu                                                                & MaxPool(2,2)                  \\ \hline
MaxPool(2,2)                                                        & ConvLayer(64,64,3)           \\ \hline
ConvLayer(64,64,5)                                                 & GroupNorm(4,64)               \\ \hline
Relu                                                                & Relu                          \\ \hline
MaxPool(2,2)                                                        & MaxPool(2,2)                  \\ \hline
Flatten                                                             & Flatten                       \\ \hline
FullyConnected(1600,384)                                           & FullyConnected(4096,512)     \\ \hline
Relu                                                                & Relu                          \\ \hline
FullyConnected(384,192)                                            & FullyConnected(512,384)      \\ \hline
Relu                                                                & Relu                          \\ \hline
FullyConnected(192,100)                                                     & FullyConnected(384,200)               \\ \hline
\end{tabular}
}

\label{model_table}
\end{table}

\subsection{Hyper-Parameter Settings}
\label{sup:hyp_settings}
 The value of $\lambda$ is specified in units of batch-size $B$. We chose $\lambda$ from $\{10,20,30\}$. We set $\lambda =20$ for all the Tiny-ImageNet experiments. For CIFAR-10/100 we chose $\lambda$ to be $10$ and $30$ respectively. The  batch-size ($B$) of 50 and learning rate of 0.1 with decay of 0.998 is employed for all the experiments unless specified. All the experiments are carried out with 100 clients and with 10\% client participation.    

\subsection{Experiments with Deeper Models (CNN's)}
\label{sup:deep_models}
In this section, we perform experiments with the deep models such as ResNet-20 on CIFAR-100 dataset with Dirichlet $\delta = 0.3$. For this experiment we have used 300 communication rounds, the number of clients as 30, and the  client participation rate is set to 20\%. In the table~\ref{res_20exp}, we report the numbers averaged over 3 different trials. We observe that the addition of our proposed regularizer ASD atop mutiple popular FL methods leads to consistent improvements, thereby further justifying the efficacy of our proposed method.
 
\begin{table}[htp]
\centering
\caption{Experiments on ResNet-20}
\scalebox{0.8}{
\begin{tabular}{c|c}
\toprule
Method       & Accuracy (in \%) \\ \hline
FedAvg       & 46.35            \\ 
FedAvg+ASD (ours)   & \textbf{47.90}   \\ \hline
FedDyn       & 53.60            \\ 
FedDyn+ASD (ours)  & \textbf{55.15}   \\ \hline
FedSpeed     & 54.42            \\
FedSpeed+ASD (ours) & \textbf{55.82}   \\ \bottomrule
\end{tabular}
}
\label{res_20exp}
\end{table}

{
\subsection{Experiments with Deeper Models (ViT)}
\label{sup:deep_models_vit}}
{
We perform experiments with ViT architecture using the Tiny-ViT~\citep{wu2022tinyvit} as client models  on ImageNet-100 dataset~\citep{zang2022dlme} with non-iid data partitioning of Dirichlet $\delta = 0.3$. The choice of  Tiny-ViT is motivated by the fact that edge devices are traditionally computational resource-constrained and Tiny-ViT is designed for such applications. For this experiment, the number of clients is set to 200, and the client participation rate is set to 5\%. We have used 300 communication rounds.  In the Table~\ref{sup:tiny_vit_imnet}, we report the numbers averaged over 3 different trials. We observe that the addition of our proposed regularizer ASD atop FedAvg and FedDyn leads to consistent improvements, thereby further justifying the efficacy of our proposed method on the deeper architectures. 
}

\begin{table}[htp]
\centering
{
\centering
\caption{Experiments using Tiny-ViT on ImageNet-100 dataset with the non-iid partitioning of $\delta=0.3$}
\scalebox{0.8}{
\begin{tabular}{c|c}
\toprule
Method       & Accuracy (in \%) \\ \hline
FedAvg       & 18.12            \\ 
FedAvg+ASD (ours)   & \textbf{22.10}   \\ \hline
FedDyn       & 28.02            \\ 
FedDyn+ASD (ours)  & \textbf{36.70}   \\ \bottomrule
\end{tabular}
}
\label{sup:tiny_vit_imnet}
}
\end{table}
{
In Table~\ref{sup:small_vit_cifar}, we performed an experiment on ViT-Small architecture on CIFAR-100. We observe that adding our ASD regularizer improves the baseline FedAvg by $1.4$\% and $1.7\%$ for $\delta = 0.3$ and $\delta = 0.6$, respectively. In this setup we consider $100$ clients with 10\% participation and the accuracy is reported at the end of $300$ rounds.}

\begin{table}[htp]
\centering
{
\centering
\caption{Experiments using ViT-Small with CIFAR-100 with the non-iid data partitioning of  $\delta=0.3$ and $\delta=0.6$.}
\scalebox{0.8}{
\begin{tabular}{l|lr}
\hline
\multirow{2}{*}{Method} & \multicolumn{2}{c}{Accuracy(\%)}                                         \\ \cline{2-3} 
                        & \multicolumn{1}{l|}{$\delta = 0.3$}   & \multicolumn{1}{l}{$\delta = 0.6$} \\ \hline
FedAvg                  & \multicolumn{1}{r|}{53.22}            & 52.63                               \\ \hline
FedAvg+ASD (Ours)              & \multicolumn{1}{r|}{\textbf{54.67}} & \textbf{54.34}                    \\ \hline
\end{tabular}
}
\label{sup:small_vit_cifar}
}
\end{table}

\subsection{Impact on the choice of hyperparameters $\lambda$ and $\tau$}
\label{sup:hyper_sensitivity}
\begin{figure}[htp]
  \centering
  \includegraphics[width=0.4\linewidth]{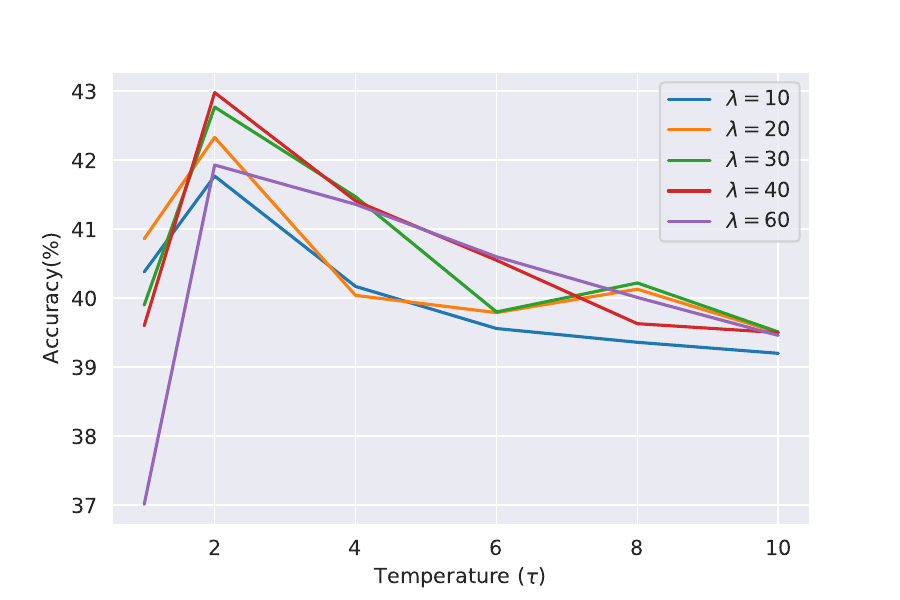}
  \caption{Impact of $\lambda$ and $\tau$ on CIFAR-100 dataset with non-iid partitioning of $\delta = 0.3$ with FedAvg+ASD.}
  \label{fig_hyper}
\end{figure}
We study the impact of changing the hyper-parameters $\lambda$ and $\tau$ on the CIFAR-100 dataset with the Dirichlet non-iid partition of $\delta = 0.3$. We report the accuracy at the end of 500 rounds. When using FedAvg+ASD algorithm. In Figure~\ref{fig_hyper} we see that the accuracy of the model increases with $\lambda$ and then slightly drops after a critical point. This is expected as too less value of $\lambda$ is similar to FedAvg and very high value of $\lambda$ will ignore the local learning. It can also be seen that for all the values of $\lambda$ the Accuracy peaks at $\tau = 2$. In all of our experiments we set the temperature parameter $\tau$ set to 2.0. 


\subsection{Impact of client participation / number of clients on ASD}
\label{sup:cp_num_clnts}

 We fix the client participation to 2\% and vary the number of clients from $100$ to $500$. We perform this ablation using the CIFAR-100 dataset with a non-iid Dirichlet data partitioning of $\delta = 0.3$. We summarize our observations in the Tables~\ref{sup:fix_cp_vary_num_clnt} and~\ref{sup:fix_cp_vary_num_clnt_pt6} below. It can be seen that ASD improves the performance of the baselines FedAvg and FedDyn in all the settings. In particular, we would like to highlight the point here that despite increasing the number of clients, the total number of training data samples across all the clients remains constant (for CIFAR-100). Thus as the number of clients increases, the number of data samples per client decreases. This further aggravates the adverse impact of label heterogeneity across clients, and hence accuracy degrades in general.
However, we are happy to observe and report that, even under such a challenging setup, our proposed adaptive self-distillation-based strategy consistently improves the accuracy when combined on top of the existing baseline algorithms.
\begin{table}[htp]
\centering
{
\centering
\caption{Impact of increasing the number of clients on the Accuracy (\%) when the participation rate is fixed to 2\% and \textbf{non-iid partitioning of $\delta=0.3$}.}
\scalebox{0.9}{
\begin{tabular}{l|rrrrr}
\hline
                                  & \multicolumn{5}{c}{Number of Clients}                                                                                                                              \\ \cline{2-6} 
\multirow{-2}{*}{Method}          & \multicolumn{1}{c|}{100}                          & \multicolumn{1}{c|}{200}   & \multicolumn{1}{c|}{300}   & \multicolumn{1}{c|}{400}   & \multicolumn{1}{c}{500} \\ \hline
{FedAvg}     & \multicolumn{1}{r|}{{ 31.15}}  & \multicolumn{1}{r|}{31.86} & \multicolumn{1}{r|}{30.05}  & \multicolumn{1}{r|}{28.70} & 26.12                     \\ 
{FedAvg+ASD  (ours)} & \multicolumn{1}{r|}{{ \textbf{37.67}}} & \multicolumn{1}{r|}{\textbf{35.56}} & \multicolumn{1}{r|}{\textbf{32.86}} & \multicolumn{1}{r|}{\textbf{29.98}} & \textbf{27.16}                    \\ \hline
FedDyn                            & \multicolumn{1}{r|}{39.17}                        & \multicolumn{1}{r|}{36.11} & \multicolumn{1}{r|}{34.24}  & \multicolumn{1}{r|}{31.09} & 26.87                     \\ 
FedDyn+ASD (ours)                        & \multicolumn{1}{r|}{\textbf{39.34}}                        & \multicolumn{1}{r|}{\textbf{40.08}} & \multicolumn{1}{r|}{\textbf{36.83}} & \multicolumn{1}{r|}{\textbf{33.61}} & \textbf{28.05}                    \\ \hline
\end{tabular}
}
\label{sup:fix_cp_vary_num_clnt}
}

\end{table}

\begin{table}[htp]
\centering
{
\centering
\caption{Impact of increasing the number of clients on the Accuracy (\%) when the participation rate is fixed to 2\% and \textbf{non-iid partitioning of $\delta=0.6$}.}
\scalebox{0.9}{
\begin{tabular}{l|rrrrr}
\hline
                                  & \multicolumn{5}{c}{Number of Clients}                                                                                                                              \\ \cline{2-6} 
\multirow{-2}{*}{Method}          & \multicolumn{1}{c|}{100}                          & \multicolumn{1}{c|}{200}   & \multicolumn{1}{c|}{300}   & \multicolumn{1}{c|}{400}   & \multicolumn{1}{c}{500} \\ \hline
{FedAvg}     & \multicolumn{1}{r|}{{ 35.17}}  & \multicolumn{1}{r|}{32.06} & \multicolumn{1}{r|}{30.12}  & \multicolumn{1}{r|}{28.31} & 25.71                     \\ 
{FedAvg+ASD  (ours)} & \multicolumn{1}{r|}{{ \textbf{39.61}}} & \multicolumn{1}{r|}{\textbf{35.49}} & \multicolumn{1}{r|}{\textbf{32.16}} & \multicolumn{1}{r|}{\textbf{29.43}} & \textbf{27.61}                    \\ \hline
FedDyn                            & \multicolumn{1}{r|}{37.96}                        & \multicolumn{1}{r|}{36.56} & \multicolumn{1}{r|}{34.71}  & \multicolumn{1}{r|}{30.37} & 26.45                     \\ 
FedDyn+ASD (ours)                        & \multicolumn{1}{r|}{\textbf{39.40}}                        & \multicolumn{1}{r|}{\textbf{40.49}} & \multicolumn{1}{r|}{\textbf{37.48}} & \multicolumn{1}{r|}{\textbf{33.23}} & \textbf{28.60}                    \\ \hline
\end{tabular}
}
\label{sup:fix_cp_vary_num_clnt_pt6}
}

\end{table}

In Table~\ref{sup:fix_num_clnt_vary_cp}, unlike the previous ablation, here we fix the number of clients to 100 and vary the client participation rate from 5\%, 10\% and 15\%. We consider the CIFAR-100 dataset with non-iid partitioning of ($\delta=0.3$). As expected, the accuracy of the FL-trained models improve with an increase in the client participation rate. We would also like to highlight here that, by adding our proposed ASD strategy consistently improves the accuracy when combined on top of the existing baseline algorithms such as FedAvg and FedDyn.

\begin{table}[htp]
\centering
{
\centering
\caption{Impact of increasing the client participation rate on the Accuracy (\%) with number of clients fixed to $100$.}
\scalebox{0.8}{
\begin{tabular}{l|ccc|ccr}
\hline
\multirow{3}{*}{Method} & \multicolumn{3}{c|}{non iid partition ($\delta = 0.3$)}                                                 & \multicolumn{3}{c}{non-iid partition ($\delta=0.6$)}                                                            \\ \cline{2-7} 
                        & \multicolumn{3}{c|}{client paticipation}                                                   & \multicolumn{3}{c}{client participation}                                                             \\ \cline{2-7} 
                        & \multicolumn{1}{c|}{5\%}            & \multicolumn{1}{c|}{10\%}           & 15\%           & \multicolumn{1}{c|}{5\%}            & \multicolumn{1}{c|}{10\%}           & \multicolumn{1}{c}{15\%} \\ \hline
FedAvg                  & \multicolumn{1}{c|}{38.22}          & \multicolumn{1}{c|}{38.67}          & 38.85          & \multicolumn{1}{c|}{39.04}          & \multicolumn{1}{c|}{38.53}          & \multicolumn{1}{c}{38.00}                        \\ 
FedAvg+ASD (Ours)             & \multicolumn{1}{c|}{\textbf{43.04}} & \multicolumn{1}{c|}{\textbf{42.77}} & \textbf{43.59} & \multicolumn{1}{c|}{\textbf{43.51}} & \multicolumn{1}{c|}{\textbf{42.54}} &  \multicolumn{1}{c}{\textbf{42.90}}             \\ \hline
FedDyn                  & \multicolumn{1}{c|}{44.68}          & \multicolumn{1}{c|}{47.56}          & 47.87          & \multicolumn{1}{c|}{45.18}          & \multicolumn{1}{c|}{48.60}          & \multicolumn{1}{c}{48.74}                     \\ 
FedDyn+ASD (Ours)              & \multicolumn{1}{c|}{\textbf{47.51}} & \multicolumn{1}{c|}{\textbf{49.03}} & \textbf{50.32} & \multicolumn{1}{c|}{\textbf{47.81}} & \multicolumn{1}{c|}{\textbf{50.23}}  & \multicolumn{1}{c}{\textbf{51.48}}            \\ \hline
\end{tabular}
}
\label{sup:fix_num_clnt_vary_cp}
}

\end{table}

\subsection{Hessian Analysis}
\label{sup:hess_analysis}

In the Table~\ref{sup:hess_tab} we analyze the top eigenvalue and the trace of the Hessian of the global model when ASD is applied to methods such as FEdProx, FedNTD, FedSAM and FedDisco. 

\begin{table}[htp]
\centering 
\caption{The table shows the impact of ASD on the algorithms on CIFAR-100 Dataset. We consistently see that the top eigenvalue and the trace of the Hessian decrease and the Accuracy improves when ASD is used. This suggests that using ASD makes the global model reach to a flat minimum for better generalization.}
\scalebox{0.8}{
\begin{tabular}{c|c|c|c}
\toprule
Algorithm      & Top Eigenvalue $\downarrow$ & Trace $\downarrow$  & Accuracy $\uparrow$  \\ \midrule

FedProx        & 45.2            & 8683  & 37.79    \\
FedProx + ASD  & \textbf{11.9}            & \textbf{2663}  & \textbf{41.31}    \\\midrule

FedNTD         & \textbf{16.3}            & 3517  & 40.40    \\ 
FedNTD + ASD   & 17.5            & \textbf{2840}  & \textbf{43.01}    \\ \midrule
FedSAM         & 19.04            & 4022  & 40.89    \\ 
FedSAM + ASD   & \textbf{6.0}            & \textbf{1339}  & \textbf{43.99}    \\ \midrule
FedDisco       & 46.7            & 8771  & 38.97    \\ 
FedDisco + ASD & \textbf{12.2}            & \textbf{2334}  & \textbf{41.55}    \\ \midrule
\end{tabular}
}
\label{sup:hess_tab}
\end{table}

\subsection{Performance on CIFAR-10 dataset}

In Table~\ref{cifar10_table}, we show the results for the CIFAR-10 dataset,  we find that applying the ASD improves the performance of all the algorithms consistently.
\label{sup:cifar10}
\begin{table}[htp]
\centering
\caption{We show the accuracy attained by the algorithms on CIFAR-10 at the end of 500 communication rounds. It can be seen that by combining the proposed approach the performance of all the algorithms is improved.}
\scalebox{0.8}{
\begin{tabular}{c|c|c|c}
\hline
Algorithm  & $\delta = 0.3$          & $\delta = 0.6$           & iid          \\ \toprule
FedAvg              & 78.15 $\pm{0.78}$          & 78.66 $\pm{0.10}$           & 80.99 $\pm{0.09}$  \\
FedAvg+ASD (\textbf{Ours})          & \textbf{79.01} $\pm{0.33}$ & \textbf{79.93} $\pm{0.21}$ & \textbf{81.83} $\pm{0.19}$ \\ \midrule
FedProx              & 78.25 $\pm{0.68}$          & 78.81 $\pm{0.69}$           & 81.04 $\pm{0.34}$  \\
FedProx+ASD (\textbf{Ours})          & \textbf{78.77} $\pm{0.49}$ & \textbf{79.91} $\pm{0.12}$ & \textbf{81.74} $\pm{0.06}$ \\ \midrule
FedNTD              & 76.79 $\pm{0.37}$          & 78.55 $\pm{0.31}$           & 80.98 $\pm{0.21}$ \\ 
FedNTD+ASD (\textbf{Ours})          & \textbf{78.78} $\pm{0.86}$ & \textbf{80.13} $\pm{0.49}$ & \textbf{81.80} $\pm{0.11}$  \\ \midrule
FedDyn              & 81.08 $\pm{0.52}$         & 81.48 $\pm{0.35}$          & 83.51 $\pm{0.27}$  \\ 
FedDyn+ASD (\textbf{Ours})          & \textbf{81.82} $\pm{0.56}$ & \textbf{82.33} $\pm{0.39}$  & \textbf{84.09} $\pm{0.15}$  \\ \midrule
FedDisco              & 78.21 $\pm{0.45}$         & 78.76 $\pm{0.32}$          & 81.04 $\pm{0.30}$  \\ 
FedDisco+ASD (\textbf{Ours})          & \textbf{78.97} $\pm{0.01}$ & \textbf{79.98} $\pm{0.35}$  & \textbf{81.71} $\pm{0.21}$  \\ \midrule
FedSpeed              & 81.28 $\pm{0.32}$         & 81.83 $\pm{0.36}$          & 83.67 $\pm{0.14}$  \\ 
FedSpeed+ASD (\textbf{Ours})          & \textbf{81.70} $\pm{0.20}$ & \textbf{82.62} $\pm{0.26}$  & \textbf{84.57} $\pm{0.24}$  \\ \bottomrule

\end{tabular}
}
\label{cifar10_table}
\end{table}

\subsection{Accuracy vs Communication rounds}
\label{sup:acc_comm}
In the below figures~\ref{sup:fig_delta3}~\ref{sup:fig_delta6} and ~\ref{sup:fig_delta_iid}, we present how the accuracy is evolving across the communication rounds for the FL methods FedNTD, FedProx, FedDisco with and without the  ASD regularizer. We present these results for non-iid  ($\delta = 0.3$ and $\delta=0.6)$ and with the iid data partitions for both the CIFAR-100 and Tiny-ImageNet datasets. It can be seen that adding ASD to these off-the-shelf FL methods consistently improves the performance.

\begin{figure*}[htp]
  \centering
    \subfigure[CIFAR-100 ($\delta = 0.3$)]{\includegraphics[scale=0.3]{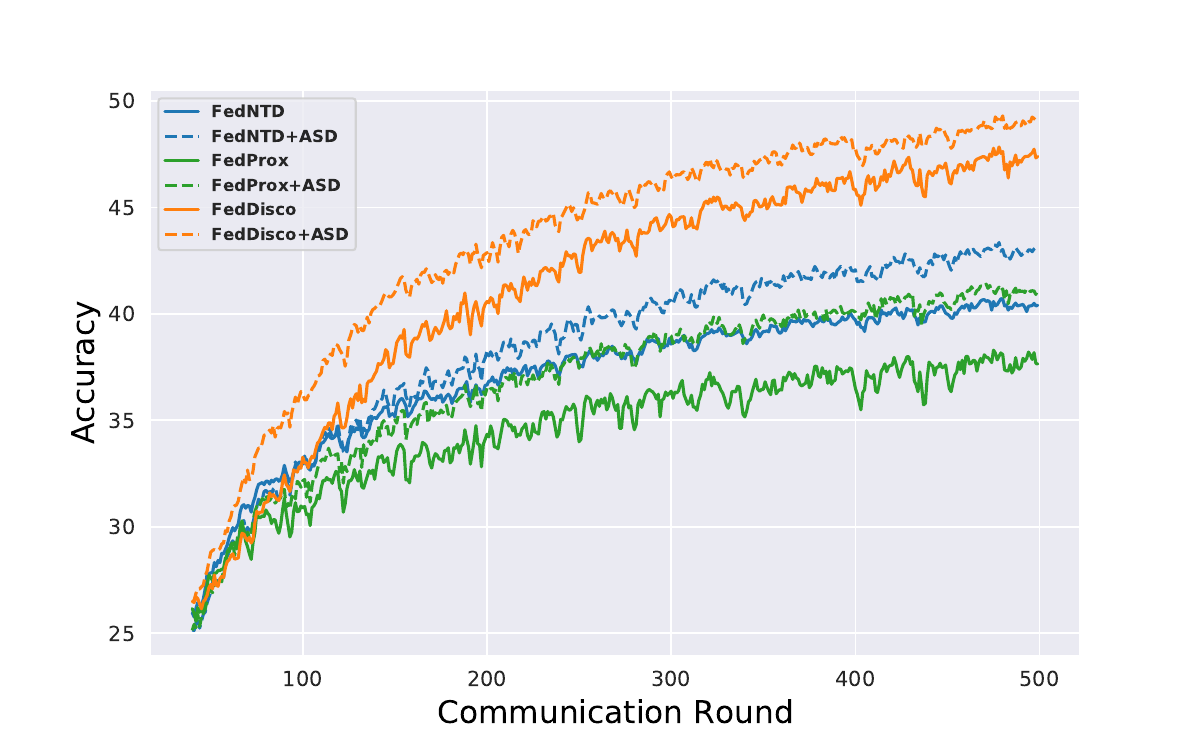}}\hspace{-2.2em}
  \subfigure[Tiny-ImageNet ($\delta = 0.3$)]{\includegraphics[scale=0.3]{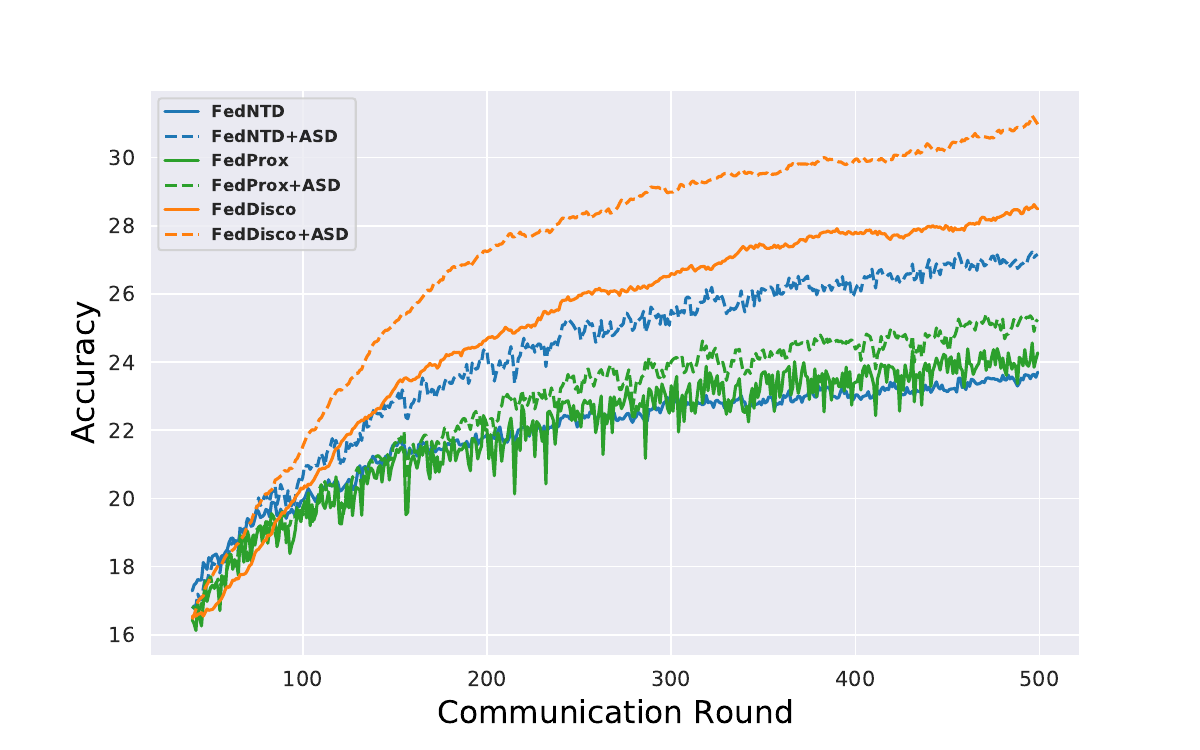}}\hspace{-2.2em}
  \caption{Test Accuracy vs Communication rounds: Comparison of algorithms with $\delta = 0.3$, data partition on CIFAR-100 and Tiny-ImageNet datasets. All the algorithms augmented with proposed regularization (ASD) outperform compared to their original form.}
  \label{sup:fig_delta3}
\end{figure*}

\begin{figure*}[htp]
  \centering
  \subfigure[CIFAR-100 ($\delta = 0.6$)]{\includegraphics[scale=0.3]{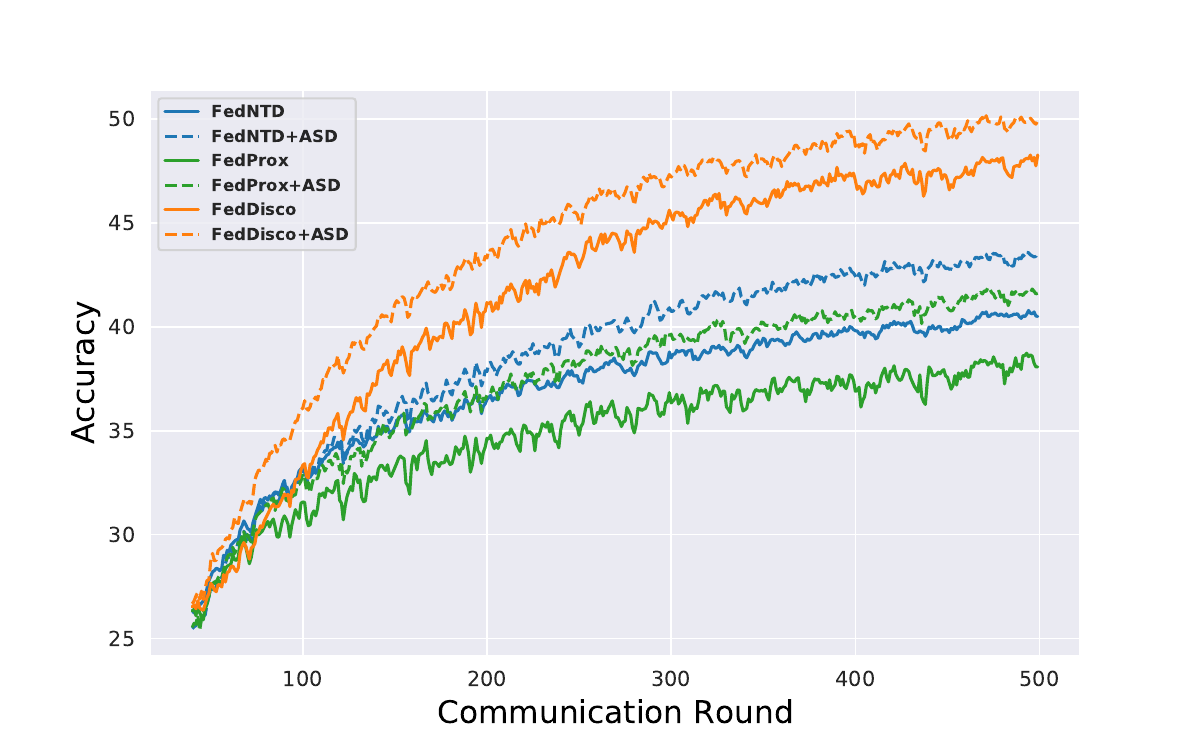}}\hspace{-2.2em}
  \subfigure[Tiny-ImageNet ($\delta = 0.6$)]{\includegraphics[scale=0.3]{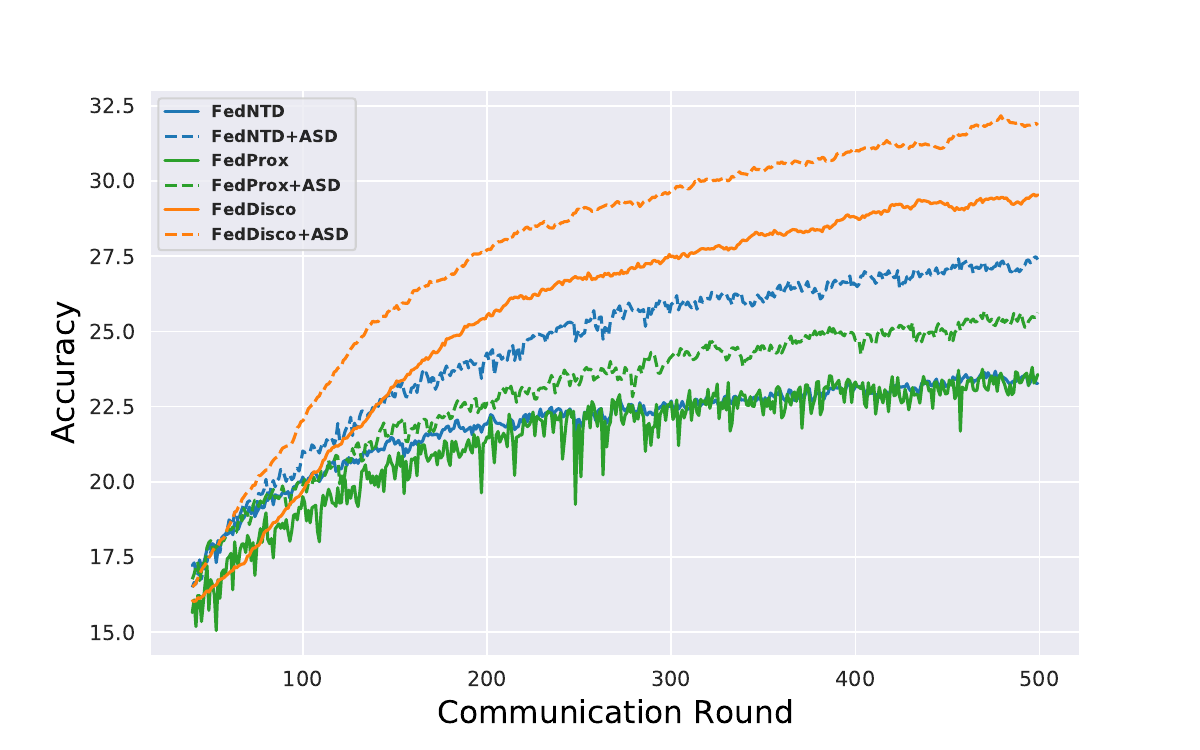}}\hspace{-2.2em}
 
\vspace{-0.1in}
  \caption{Test Accuracy vs Communication rounds: Comparison of algorithms with $\delta = 0.6$ data partition on CIFAR-100 and Tiny-ImageNet datasets. All the algorithms augmented with proposed regularization (ASD) outperform compared to their original form. 
  }
  \label{sup:fig_delta6}
\end{figure*}
\vspace{-0.1in}
\begin{figure*}[htp]
  \centering
  \subfigure[CIFAR-100 (iid)]{\includegraphics[scale=0.3]{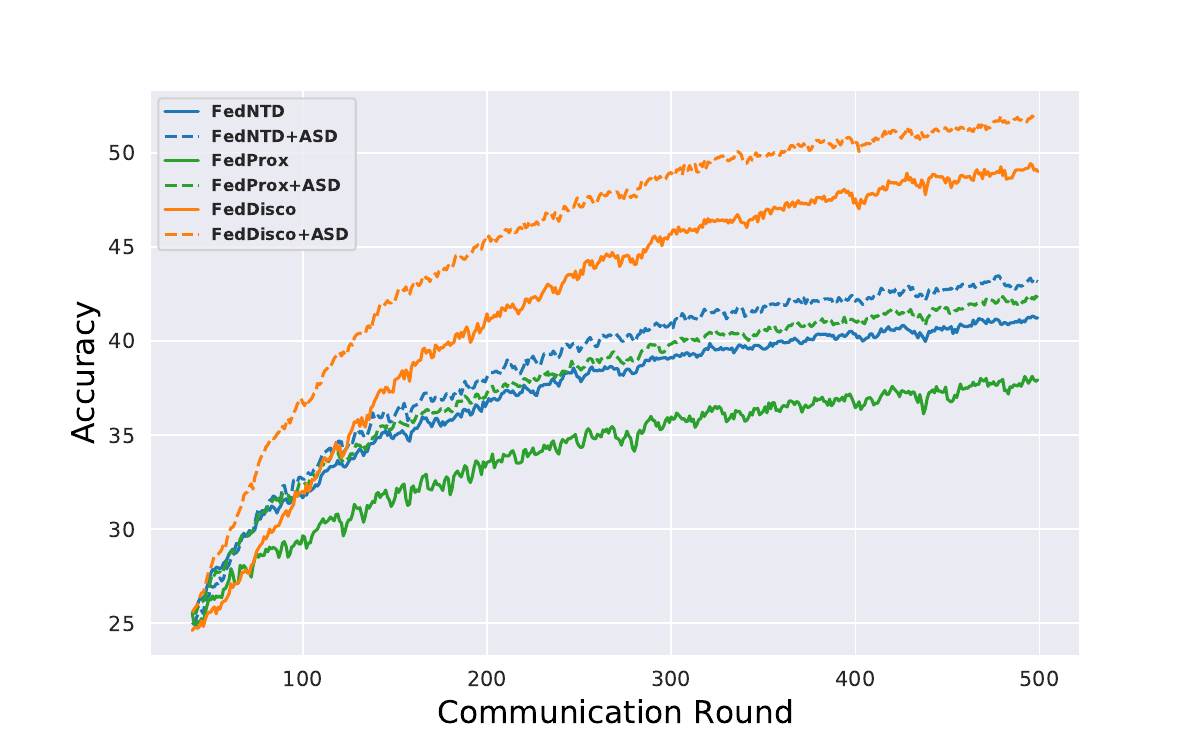}} \hspace{-2.2em}
  \subfigure[Tiny-ImageNet (iid)]{\includegraphics[scale=0.3]{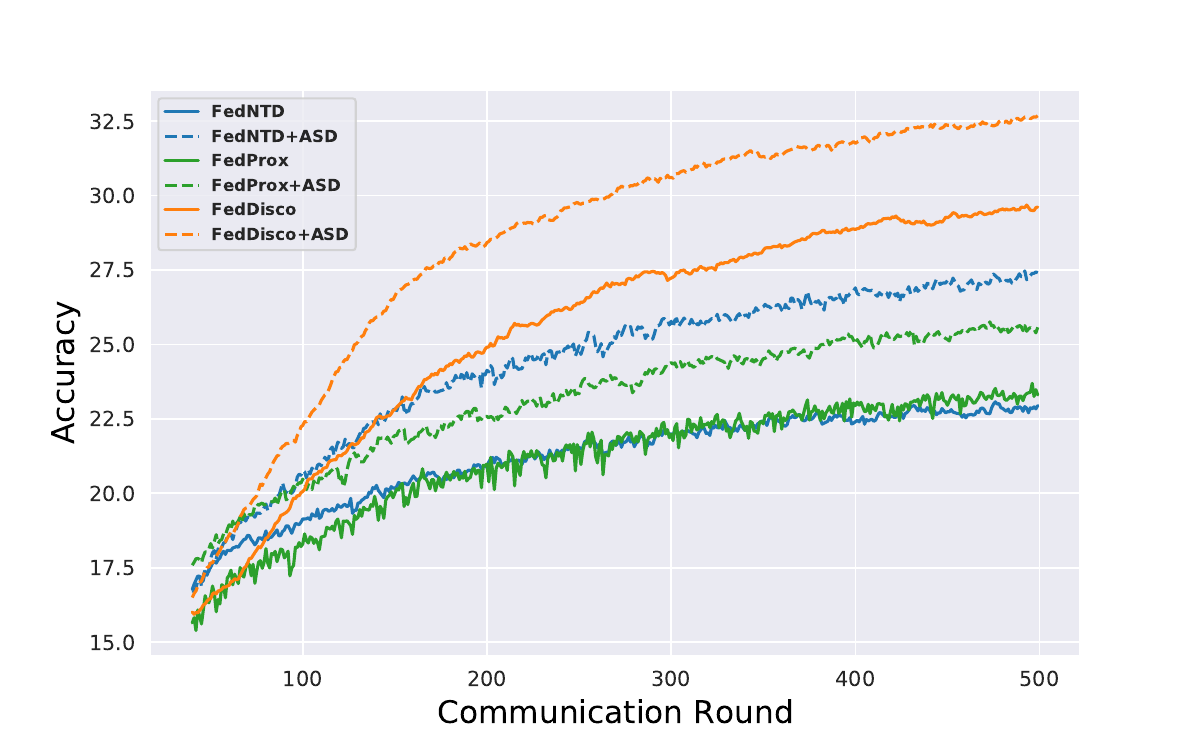}} \hspace{-2.2em}
  \caption{Test Accuracy vs Communication rounds: Comparison of algorithms with iid data partitions on CIFAR-100 and Tiny-ImageNet datasets. All the algorithms augmented with proposed regularization (ASD) outperform compared to their original form. 
  }
  \label{sup:fig_delta_iid}
\end{figure*}

\subsection{Privacy of Proposed Method}
In our method, which is ASD regularizer, the adaptive weights are computed by the client without depending  on the server and it does not assume access to any auxiliary data at the server as assumed in methods such as FedCAD~\citep{he2022class} and FedDF~\citep{lin2020ensemble}. In our method,  only model parameters are communicated with the server similar to FedAvg~\citep{mcmahan2017communication}. Thus our privacy is similar to the FedAvg method at the same time obtaining significant improvements in the performance.

\subsection{Implementation of ASD with the FL Methods}.
\label{asd_plus_algs}
We now present the integration of ASD loss with the existing FL methods. For all the methods FedAvg, FedDyn, FedSpeed, FedProx, FedDisco and FedSAM, we augment the client loss of each of these methods with our proposed ASD loss in the Eq~\ref{eq7}.
FedNTD~\citep{lee2021preservation} uses the non-true distillation loss, it distills the knowledge only from the non-true classes.
\begin{equation}
\mathcal{D}_{\text{NTD}}(q_{g}(x^i)||q_{k}(x^i)) = \sum_{c \neq y}^{C}q_{g}^{c}(x^i)log(q_{g}^{c}(x^i)/q_{k}^{c}(x^i))
\label{eq_ntd}
\end{equation}
The above equation represents the FedNTD loss on the sample $i$, when the true class label is $y$. 
We now use the adaptive weights as defined in Eq.~\ref{sup:alpha_eq}, to update the FedNTD loss as below.

\begin {equation}
L_{k}^{asd-ntd}(\mathbf{w})  \triangleq \sum_{i \in [B]}\alpha_{i}^{k} \mathcal{D}_{\text{NTD}}(q_{g}(x^i)||q_{k}(x^i))
\label{ntd_loss}
\end{equation}
So the final loss used for optimizing FedNTD with adaptive self-distillation is given below.
\begin{equation}
f_{k}(\mathbf{w}) \triangleq L_{k}(\mathbf{w}) + \lambda L_{k}^{asd-ntd}(\mathbf{w})
\label{final_ntd_loss}
\end{equation}

where $ L_{k}(\mathbf{w})$ is defined as in Eq.~\ref{exp_erm} of the main paper.

{\subsection{On the choice of KL divergence}
\label{kl_choice}
}
{
 The distillation loss introduced by the seminal work of~\citep{hinton2015distilling} matches the temperature-raised softmax values between the pre-trained teacher model and student model for effective knowledge transfer. It is essentially cross entropy between two softmax vectors. KL divergence differs from cross entropy by a constant and hence achieves the same optimization objective. In our context, we treat the server model as the teacher model and the client model as the student model. Other divergence measures such as reverse KL and JS can also be considered, but we did not see any significant performance difference empirically. In fact KL divergence performed better compared to reverse-KL and JS divergence as shown in Table below. For this experiment we used the CIFAR-100 dataset with 100 clients and 10\% client participation rate.}

\begin{table}[htp]
\centering
{
\centering
\caption{Comparison of Statistical Divergences}
\scalebox{0.75}{
\begin{tabular}{c|c}
\toprule
Method       & Accuracy (in \%) \\ \hline
KL (ours)       & \textbf{42.77}            \\ 
reverse-KL    & 42.04   \\ 
Jensen-Shannon       & 42.21            \\  \bottomrule
\end{tabular}
}
}
\label{kl_choice_tab}
\end{table}
\begin{figure}[htp]
  \centering
  \includegraphics[width=0.43\linewidth]{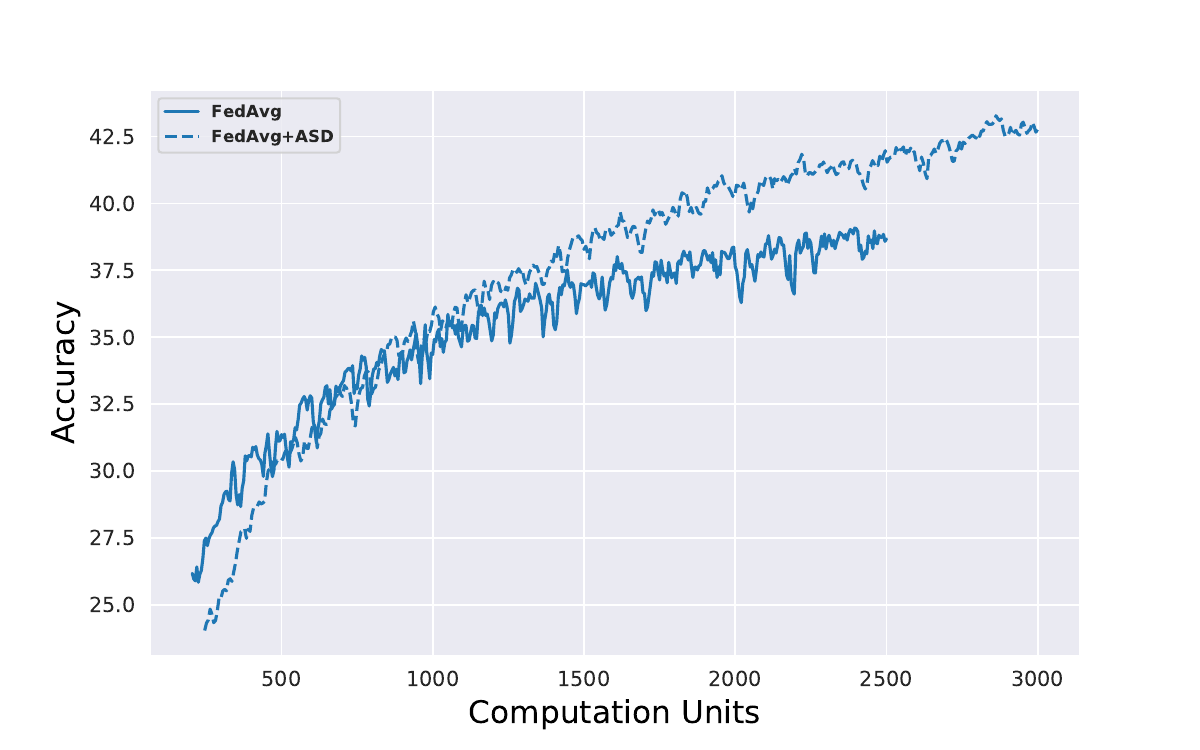}
  \textcolor{mycolor}{\caption{Comparison of the Communication vs Computation for FedAvg and FedAvg+ASD. It can be seen that over the communication rounds for a given amount of computation, FedAvg+ASD attains better accuracy compared to FedAvg.}
  \label{fig_comm_compute}}
\end{figure}
\subsection{Computation vs Accuracy}
\label{comp_vs_acc}
{In the Figure~\ref{fig_comm_compute} we have compared the computation with the accuracy for FedAvg and FedAvg+ASD methods. In particular, we observe that at a fixed cumulative computation cost of 2500 units FedAvg attains $38.67$ \% Accuracy while FedAvg+ASD attains $42.3$\% accuracy. Here the one unit denotes the computation required for the single forward pass.}

\subsection{Proofs of Propositions:}
\label{sup:proof}
We rewrite the adaptive weighting equations for convenience as below.

\begin{equation}
\alpha_{k}^{i}= \frac{\hat{\alpha_{k}}^{i}} {\sum_{i \in B}\hat{\alpha_{k}}^{i}}
\label{sup:alpha_eq}
\end{equation}
and $\hat{\alpha_{k}}^{i}$ is defined as below Eq.~\ref{sup:alpha_hat_eq}
\begin{equation}
\hat{\alpha_{k}}^{i} \triangleq {exp(-\mathcal{H}(x^i)) \over p_{k}^{y^{i}}}
\label{sup:alpha_hat_eq}
\end{equation}

\begin{proposition}
 $\inf_{\mathbf{w}\in \mathbb{R}^d} {G_d(\mathbf{w},\lambda)}$ is $1$, $\forall$ $\lambda$
\end{proposition}

\begin{proof}
    
\begin{equation}
G_d = {{{1 \over K }\sum_{k} {\lVert \nabla f_k  \rVert}^2} \over {{\lVert \nabla f \rVert}^2}}    
\end{equation}
\begin{equation}
{{\lVert \nabla f \rVert}^2} = {{\lVert {1 \over K}{\sum_k{\nabla{f_k}}}  \rVert}^2} 
\end{equation}
{We observe that the function ${{\lVert . \rVert}^2} $ is Convex. By applying Jensen's inequality, we get the desired result. The expectation is taken over the discrete probability measure.}  
\begin{equation}
{{\lVert \nabla f \rVert}^2} \leq  {1 \over K}\sum_k{{\lVert {{\nabla{f_k}}}  \rVert}^2} 
\end{equation}
\end{proof}
\begin{lemma}
For any function of the form $ \zeta(x) = \frac{ax^2+bx+c_n}{ax^2+bx+c_d}$ satisfying $c_n > c_d$ , $\exists$ $x_c \geq 0$ such that $\frac{d\zeta(x)}{dx} < 0$ $\forall$ $x \geq x_c$
\label{lem1}
\end{lemma}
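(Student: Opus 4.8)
The plan is to show that the derivative $\zeta'$ changes sign only once and is eventually negative, by exploiting the special structure that the numerator and denominator of $\zeta$ share the \emph{same} quadratic-and-linear part. First I would set $P(x) = ax^2 + bx + c_n$ and $Q(x) = ax^2 + bx + c_d$, so that $\zeta = P/Q$, and record the crucial observation that $P$ and $Q$ differ only in their constant terms, hence $P'(x) = Q'(x) = 2ax + b$. This is what makes the quotient rule collapse.

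Applying the quotient rule and using $Q - P = c_d - c_n$ yields
\[
\zeta'(x) = \frac{P'Q - P Q'}{Q^2} = \frac{(2ax+b)(Q - P)}{Q^2} = \frac{(2ax+b)(c_d - c_n)}{Q^2}.
\]
Now the sign analysis is immediate: since $c_n > c_d$ the factor $(c_d - c_n)$ is strictly negative, and $Q^2 > 0$ wherever $Q \neq 0$, so the sign of $\zeta'(x)$ is exactly opposite to that of the linear factor $2ax + b$. Thus $\zeta'(x) < 0$ if and only if $2ax + b > 0$. In the intended application $a$ is a sum of squared gradient norms, so $a > 0$ and $2ax+b > 0$ for every $x > -b/(2a)$; I would therefore take $x_c$ to be any value exceeding $\max\{0,\, -b/(2a)\}$, which forces $\zeta'(x) < 0$ for all $x \geq x_c$ and gives the claimed $x_c \geq 0$.

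The only real subtlety — and the one step I would be careful about — is ensuring $\zeta$ is actually differentiable on the ray $[x_c, \infty)$, i.e. that the denominator $Q$ does not vanish there (otherwise $\zeta'$ and the $Q^2 > 0$ claim are meaningless). Because $a > 0$ forces $Q(x) \to +\infty$, the polynomial $Q$ has a finite largest real root $r_+$ (if any), beyond which $Q > 0$; pushing $x_c$ past $r_+$ as well, i.e. $x_c := \max\{0,\, -b/(2a),\, r_+\} + \epsilon$ for a small $\epsilon > 0$, removes this difficulty and makes the argument clean. I would close by noting that this is precisely the algebraic shape of $G_d(\mathbf{w},\lambda)$ as a function of $\lambda$ once Assumption~\ref{assump1} forces the quadratic and linear coefficients of the numerator and denominator to coincide, so the lemma applies with $x = \lambda$ and delivers $\frac{dG_d}{d\lambda} < 0$ for $\lambda \geq \lambda_c$.
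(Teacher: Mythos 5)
Your proof is correct and follows essentially the same route as the paper's: both exploit that the numerator and denominator share the derivative $2ax+b$, so the quotient rule collapses to $\zeta'(x) = (2ax+b)(c_d-c_n)/Q(x)^2$, and the sign analysis reduces to $2ax+b>0$ for $x \geq x_c \approx |{-b/(2a)}|$ (with $a>0$ implicit in both arguments). Your added care about the denominator not vanishing on $[x_c,\infty)$ is a small refinement the paper omits, but it does not change the argument.
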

\begin{proof}
\begin{dmath}
\frac{d\zeta(x)}{dx} = \frac{(2ax+b)(ax^2+bx+c_d) - (2ax+b)(ax^2+bx+c_n) }{(ax^2+bx+c_d)^2}   
\end{dmath}
By re-arranging and simplifying the above we get the following

\begin{equation}
\frac{d\zeta(x)}{dx} = \frac{2x(ac_d-ac_n) + b(c_d - c_n)}{(a_dx^2+bx+c_d)^2}   
\end{equation}

We are interested in knowing when the numerator is negative.\\

\begin{equation}
x2a(c_d-c_n) \leq  {b(c_n - c_d)}
\end{equation}

Since $c_n > c_d$, we have 
\begin{equation}
x2a(c_n-c_d) \geq  {-b(c_n - c_d)} \implies  x \geq \frac{-b}{2a}
\end{equation}

assuming $x_c = \lvert \frac{-b}{2a} \rvert$

We have the desired condition for $x \geq x_c$

This concludes the proof.

\end{proof}

\begin{proposition}
When the class conditional distribution across the clients is identical, i.e., $\mathbb{P}_{k}(x \mid y) = \mathbb{P}(x \mid y)$  then $\nabla{f_{k}(\mathbf{w})} = \sum_c{p_k^c}(\mathbf{g}_{c} + \lambda  {\gamma}_{k}^{c} \tilde{\mathbf{g}}_c )$, where  $\mathbf{g}_c = \nabla{\mathbb{E}[{l(\mathbf{w};x,y)}\mid{y=c}]}$, and $\tilde{\mathbf{g}}_c = \nabla{\mathbb{E}[{ \exp({-\mathcal{H}(x)}) \mathcal{D}_{\text{KL}}(q_{g}(x)||q_{k}(x))} \mid{y=c} ]}$ where ${\gamma}_{k}^{c} = \frac{1}{p_k^c}$. 
\end{proposition}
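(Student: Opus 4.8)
The plan is to differentiate the two summands of $f_k(\mathbf{w}) = L_k(\mathbf{w}) + \lambda L_k^{ASD}(\mathbf{w})$ separately and, in each case, rewrite the expectation over $\mathbb{P}_k(x,y)$ as a mixture over the class label via the tower property $\mathbb{E}[\,\cdot\,] = \sum_c \mathbb{P}_k(y=c)\,\mathbb{E}[\,\cdot \mid y=c]$, where $\mathbb{P}_k(y=c) = p_k^c$. The central observation that makes the two terms combine into the stated form is that the per-sample weight $\hat{\alpha}_k(x,y) = \exp(-\mathcal{H}(x))/p_k^{y}$ carries exactly one factor of $1/p_k^{y}$, which cancels against the mixture weight $p_k^c$ produced by conditioning.

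For the cross-entropy term, first I would assume enough regularity (smoothness of $l$ in $\mathbf{w}$ together with an integrable dominating function) to exchange gradient and expectation, so that $\nabla L_k(\mathbf{w}) = \mathbb{E}[\nabla l(\mathbf{w};x,y)]$. Applying the tower property and pulling the gradient inside the class-conditional expectation gives $\nabla L_k(\mathbf{w}) = \sum_c p_k^c\, \nabla \mathbb{E}[l(\mathbf{w};x,y)\mid y=c] = \sum_c p_k^c\, \mathbf{g}_c$. Here the hypothesis $\mathbb{P}_k(x\mid y) = \mathbb{P}(x\mid y)$ enters: the conditional expectation sees the data only through $\mathbb{P}(x\mid y=c)$, so $\mathbf{g}_c$ is the same object for every client and the only client-dependence left resides in the coefficients $p_k^c$.

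For the ASD term I would substitute the unnormalized weight $\hat{\alpha}_k(x,y)$ into $L_k^{ASD}(\mathbf{w}) = \mathbb{E}[\hat{\alpha}_k(x,y)\,\mathcal{D}_{\text{KL}}(q_g(x)||q_k(x))]$ and again condition on $y=c$. Since $p_k^{y}$ is constant, equal to $p_k^c$, on the event $\{y=c\}$, it factors out of the inner conditional expectation, yielding $L_k^{ASD}(\mathbf{w}) = \sum_c p_k^c \cdot \frac{1}{p_k^c}\, \mathbb{E}[\exp(-\mathcal{H}(x))\,\mathcal{D}_{\text{KL}}(q_g(x)||q_k(x)) \mid y=c]$. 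Differentiating and interchanging $\nabla$ with the expectation identifies the bracketed gradient as $\tilde{\mathbf{g}}_c$, so $\lambda \nabla L_k^{ASD}(\mathbf{w}) = \lambda \sum_c p_k^c\,\gamma_k^c\,\tilde{\mathbf{g}}_c$ with $\gamma_k^c = 1/p_k^c$; keeping the factor $p_k^c\,\gamma_k^c$ instead of cancelling it is purely cosmetic and lets both terms appear under a common $\sum_c p_k^c(\cdot)$. Adding the two pieces yields the claim.

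The individual steps are elementary, so the main thing to get right is the modelling convention rather than a hard computation. The key subtlety is that the proposition implicitly uses the unnormalized weight $\hat{\alpha}_k$ in the population ASD loss, whereas the mini-batch Eq.~\ref{alpha_eq} normalizes $\alpha_k^i$ over the batch; I would argue that the batch normalizer is a scalar that does not affect the structural decomposition, or equivalently take the population loss to be defined directly with $\hat{\alpha}_k$, consistent with Eq.~\ref{eq7_exp}. A second point to state cleanly is the gradient--expectation interchange and, if one wants $\tilde{\mathbf{g}}_c$ to be genuinely client-independent as used in the subsequent propositions, the remark that $G_d$ evaluates every $f_k$ at a common $\mathbf{w}$, so that $q_k(\cdot,\mathbf{w})$ is the same map for all clients.
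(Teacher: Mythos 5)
Your proposal is correct and follows essentially the same route as the paper's proof: apply the tower property of expectation to each summand of $f_k$, use the identical class-conditional distribution to make $\mathbf{g}_c$ and $\tilde{\mathbf{g}}_c$ client-independent, and let the factor $1/p_k^{y}$ in the unnormalized weight $\hat{\alpha}_k$ surface as $\gamma_k^c$ against the mixture weight $p_k^c$. The normalization subtlety you flag is handled identically in the paper, which explicitly switches to the un-normalized weights on the grounds that the constant can be absorbed into $\lambda$.
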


\begin{proof}
We re-write the equations for $f_k(\textbf{w})$ from Sec 3.3 of main paper, $L_k(\textbf{w})$ and $L_k^{ASD}(\textbf{w})$ from the Sec 3.2 of main paper for convenience.

\begin{equation}
f_{k}(\mathbf{w}) =  L_{k}(\mathbf{w}) + \lambda L_{k}^{ASD}(\mathbf{w})
\label{sup:th_eq1}
\end{equation}

\begin{equation}
L_k(\mathbf{w}) = \underset{x,y \in D_k}{\mathbb{E}}[l_{k}(\mathbf{w};(x,y))]
\label{sup:exp_erm}
\end{equation} 


\begin {equation}
L_{k}^{ASD}(\mathbf{w})  \triangleq \mathbb{E}[\alpha_{k}(x,y) \mathcal{D}_{\text{KL}}(q_{g}(x)||q_{k}(x))]
\label{sup:eq7}
\end{equation}

By applying the tower property of expectation, we expand Eq.~\ref{sup:exp_erm} as below

\begin{equation}
L_{k}(\mathbf{w}) = \sum_c{p_k^c}\mathbb{E}[{l_{k}(\mathbf{w};x,y)}\mid{y=c}]
\label{th_eq2}
\end{equation}

If we assume the class-conditional distribution across the clients to be identical the value of $\mathbb{E}[{l_{k}(\mathbf{w};x,y)}\mid{y=c}]$ is same for all the clients. Under such assumptions, we can drop the client index $k$ and rewrite the Eq.~\ref{th_eq2} as follows
\begin{equation}
L_{k}(\mathbf{w}) = \sum_c{p_k^c}\mathbb{E}[{l(\mathbf{w};x,y)}\mid {y=c}]
\label{th_eq4}
\end{equation}
\begin{equation}
\nabla{L_{k}(\mathbf{w})} = \sum_c{p_k^c}\nabla{\mathbb{E}[{l(\mathbf{w};x,y)} \mid {y=c}]}
\label{th_eq5}
\end{equation}

We further simplify the notation by denoting $\mathbf{g}_c = \nabla{\mathbb{E}[{l(\mathbf{w};x,y)} \mid {y=c}]}$.  
\begin{equation}
\nabla{L_{k}(\textbf{w})} = \sum_c{p_k^c}\mathbf{g}_c
\label{th_eq6}
\end{equation}

To make the analysis tractable, In Eq.~\ref{sup:alpha_hat_eq}, we use the un-normalized weighting scheme as the constant can be absorbed into $\lambda$. we can re-write Eq.~\ref{sup:alpha_hat_eq} as below

\begin{equation}
\hat{\alpha}_k^{i} = \gamma_k^{y} \exp({\mathcal{H}(x)})
\end{equation}
where $\gamma_k^{y} = \frac{1}{p_k^y}$
With the above assumptions we can interpret the Eq.~\ref{sup:eq7} as follows.
\begin{equation}
L_{k}^{ASD} = \underset{x,y \in D_k}{\mathbb{E}}[l_{k}^{dist}(\mathbf{w};(x,y))]
\end{equation}
where $l_{k}^{dist}(\mathbf{w};(x,y)) = \gamma_{k}^{y}\exp(-{\mathcal{H}(x)}) \mathcal{D}_{\text{KL}}(q_{g}(x)||q_{k}(x))$.

By following the similar line of arguments from Eq.~\ref{th_eq2} to Eq.~\ref{th_eq5} we can write the following  
\begin{equation}
\nabla{L_{k}^{ASD}(\mathbf{w})} = \sum_c{p_k^c}\tilde{\mathbf{g}}_c \gamma_k^c
\label{th_eq8}
\end{equation}

\begin{equation}
\nabla{f_{k}(\mathbf{w})} = \sum_c{p_k^c}(\mathbf{g}_{c} + \lambda  \gamma_k^c \tilde{\mathbf{g}}_c )
\label{th_eq9_proof}
\end{equation}

\end{proof}

\begin{lemma}
If $c_n = {K} \sum_{k=1}^{K}\sum_{c=1}^{C}({p_k^c})^2$, $c_d = \sum_{k1=1}^{K}\sum_{k2=1}^{K}\sum_{c=1}^{C}({p_{k1}^c} {p_{k2}^c}) $. 
where $p_k^c \geq 0$ $\forall k,c$, and $\sum_{c=1}^{C} p_k^c = 1$, then $\frac{c_n}{c_d} \geq 1 $. 
\label{lem2}
\end{lemma}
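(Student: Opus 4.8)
The plan is to prove the bound one class at a time and then sum, with the Cauchy--Schwarz inequality doing the essential work. For each class $c$, collect the per-client label masses into a nonnegative vector $\mathbf{v}_c = (p_1^c, p_2^c, \dots, p_K^c) \in \mathbb{R}^K$. Both quantities then split across classes as
\[
c_n = \sum_{c=1}^{C}\sum_{k=1}^{K}(p_k^c)^2 = \sum_{c=1}^{C}\lVert \mathbf{v}_c\rVert^2, \qquad c_d = \sum_{c=1}^{C}\Big(\sum_{k=1}^{K}p_k^c\Big)^2 = \sum_{c=1}^{C}(\mathbf{1}^{\intercal}\mathbf{v}_c)^2,
\]
where $\mathbf{1}=(1,\dots,1)\in\mathbb{R}^K$. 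Thus the statement reduces to comparing $\lVert\mathbf{v}_c\rVert^2$ against $(\mathbf{1}^{\intercal}\mathbf{v}_c)^2$ for each $c$; notably only nonnegativity and the ambient dimension $K$ enter, while the normalization $\sum_c p_k^c=1$ is not used for this comparison.

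First I would apply Cauchy--Schwarz to the inner product $\mathbf{1}^{\intercal}\mathbf{v}_c$, giving $(\mathbf{1}^{\intercal}\mathbf{v}_c)^2 \le \lVert\mathbf{1}\rVert^2\lVert\mathbf{v}_c\rVert^2 = K\lVert\mathbf{v}_c\rVert^2$. Summing over $c$ yields $c_d \le K\,c_n$, i.e. $\tfrac{K c_n}{c_d}\ge 1$. This is exactly the form Lemma~\ref{lem1} requires: in the weakly--correlated, equal--magnitude regime of Assumption~\ref{assump1}, the numerator and denominator of $G_d(\mathbf{w},\lambda)$ at $\lambda=0$ reduce, up to the common factor $\lVert\mathbf{g}_c\rVert^2$, to $\tfrac1K c_n$ and $\tfrac1{K^2}c_d$ respectively, so $G_d(\mathbf{w},0)=\tfrac{K c_n}{c_d}$. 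The bound then recovers $\inf_{\mathbf{w}\in\mathbb{R}^d} G_d(\mathbf{w},0)\ge 1$, consistent with the proposition stating that $\inf_{\mathbf{w}} G_d = 1$.

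The step I expect to need the most care --- and the genuine obstacle --- is the bookkeeping of the factor $K$, which originates in the $\tfrac1K$ averaging of $f=\tfrac1K\sum_k f_k$ (so that $\lVert\nabla f\rVert^2=\lVert\tfrac1K\sum_k\nabla f_k\rVert^2$ carries $1/K^2$ while the numerator carries only $1/K$). This factor must be kept attached, because per class the raw comparison runs the opposite way: expanding $(\sum_k p_k^c)^2=\sum_k(p_k^c)^2+2\sum_{k_1<k_2}p_{k_1}^c p_{k_2}^c$ and using $p_k^c\ge 0$ shows the cross terms are nonnegative, so $(\mathbf{1}^{\intercal}\mathbf{v}_c)^2\ge\lVert\mathbf{v}_c\rVert^2$. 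The inequality is therefore delivered precisely by the dimension factor $K$ supplied by Cauchy--Schwarz, with equality $c_d=K c_n$ holding exactly when every $\mathbf{v}_c$ is parallel to $\mathbf{1}$, i.e. all clients share an identical label distribution (the IID case), where $G_d=1$ as anticipated.
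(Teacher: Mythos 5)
Your proof is correct and takes essentially the same route as the paper's: both decompose by class into the vectors $\mathbf{p}^c = (p_1^c,\dots,p_K^c)$ and apply Cauchy--Schwarz against the all-ones vector to get $\sum_{c}(\mathbf{1}^\intercal \mathbf{p}^c)^2 \le K\sum_{c}\lVert \mathbf{p}^c\rVert^2$. You were also right to insist on the factor-of-$K$ bookkeeping: the lemma as literally stated (with $c_n$ lacking the factor $K$ that appears in its definition inside the proof of Proposition~\ref{th1}) would be false, since the nonnegative cross terms force $c_d \ge \sum_k\sum_c (p_k^c)^2$; the paper's own proof in fact establishes the ratio bound $\ge 1/K$, which is exactly your $K c_n \ge c_d$, so your reading agrees with how the lemma is actually used.
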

\begin{proof}
We need to show that 
\begin{equation}
\frac{\sum_{k=1}^{K}\sum_{c=1}^{C}({p_k^c})^2}{\sum_{k1=1}^{K}\sum_{k2=1}^{K}\sum_{c=1}^{C}({p_{k1}^c} {p_{k2}^c})} \geq \frac{1}{K}
\end{equation}

By rewriting the denominator we get
\begin{equation}
{\frac{\sum_{k=1}^{K}\sum_{c=1}^{C}({p_k^c})^2}{\sum_{c=1}^{C}(\sum_{k=1}^{K}({p_{k}^c}))^2} \geq \frac{1}{K}}
\end{equation}

Consider rewriting the denominator of the L.H.S of above equation.
\begin{equation}
{\sum_{c=1}^{C}(\sum_{k=1}^{K}({p_{k}^c}))^2} = \sum_{c=1}^{C}({(\mathbf{p^c})^\intercal} {\textbf{1}})^2
\label{lem2_eq1}
\end{equation}

where $\mathbf{p^c} = [p_1^c p_2^c ... p_K^c]^\intercal$ and $\textbf{1}$ is the all one vector of size $K$

Applying the Cauchy Schwartz inequality to the R.H.S of the Eq.~\ref{lem2_eq1} we get the following.

\begin{equation}
\sum_{c=1}^{C}({(\mathbf{p^c})^\intercal} {\textbf{1}})^2 \leq \sum_{c=1}^{C}\sum_{k=1}^{K} ({p_k^c})^2 K
\label{lem2_eq2}
\end{equation}
By combining the Eq.~\ref{lem2_eq1} and Eq.~\ref{lem2_eq2} the result follows.
\end{proof}

\begin{proposition}
When the class-conditional distribution across the clients is the same, and the Assumption~\ref{assump1} holds then $\exists$ a range of values for $\lambda$ such that whenever $\lambda \geq \lambda_{c}$ we have $\frac{dG_d}{d\lambda} < 0$ and $G_d(\mathbf{w},\lambda) < G_d(\mathbf{w},0)$.
\label{sup:th1}
\end{proposition}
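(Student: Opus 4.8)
The plan is to substitute the gradient formula of the preceding proposition, $\nabla f_k(\mathbf{w}) = \sum_c p_k^c(\mathbf{g}_c + \lambda \gamma_k^c \tilde{\mathbf{g}}_c)$, into the definition of $G_d$ and exploit a cancellation that is the crux of the argument. Since $\gamma_k^c = 1/p_k^c$, the product $p_k^c\gamma_k^c = 1$, so the regularization part of $\nabla f_k$ loses all dependence on the client index $k$:
\begin{equation}
\nabla f_k(\mathbf{w}) = \sum_c p_k^c \mathbf{g}_c + \lambda \sum_c \tilde{\mathbf{g}}_c .
\end{equation}
Writing $\mathbf{a}_k = \sum_c p_k^c \mathbf{g}_c$ (client-specific) and $\mathbf{b} = \sum_c \tilde{\mathbf{g}}_c$ (common to all clients), we have $\nabla f_k(\mathbf{w}) = \mathbf{a}_k + \lambda \mathbf{b}$ and, averaging, $\nabla f(\mathbf{w}) = \bar{\mathbf{a}} + \lambda \mathbf{b}$ with $\bar{\mathbf{a}} = \frac{1}{K}\sum_k \mathbf{a}_k$.

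Next I would expand the squared norms. Setting $A = \lVert \mathbf{b}\rVert^2$, $B = 2\bar{\mathbf{a}}^\intercal \mathbf{b}$, $c_n = \frac{1}{K}\sum_k \lVert \mathbf{a}_k\rVert^2$ and $c_d = \lVert \bar{\mathbf{a}}\rVert^2$, the numerator and denominator of $G_d$ become
\begin{equation}
G_d(\mathbf{w},\lambda) = \frac{A\lambda^2 + B\lambda + c_n}{A\lambda^2 + B\lambda + c_d}.
\end{equation}
Because $\mathbf{b}$ is shared, the quadratic and linear coefficients coincide in numerator and denominator, so $G_d$ is exactly of the form $\zeta$ in Lemma~\ref{lem1} with $x = \lambda$. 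The only hypothesis still to be checked is $c_n > c_d$.

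To establish $c_n > c_d$ I would invoke Assumption~\ref{assump1}: weak correlation ($\mathbf{g}_c^\intercal \mathbf{g}_m \approx 0$ for $c \neq m$) together with equal magnitude ($\lVert \mathbf{g}_c\rVert^2 \approx g^2$) annihilates the cross terms, giving $\lVert \mathbf{a}_k\rVert^2 \approx g^2 \sum_c (p_k^c)^2$ and $\lVert \bar{\mathbf{a}}\rVert^2 \approx g^2 \sum_c (\bar p^c)^2$ with $\bar p^c = \frac{1}{K}\sum_k p_k^c$. Hence the ratio reduces to $c_n/c_d = K\big(\sum_k\sum_c (p_k^c)^2\big)\big/\big(\sum_c (\sum_k p_k^c)^2\big)$, and Lemma~\ref{lem2} bounds the bracketed probability ratio below by $1/K$, yielding $c_n \geq c_d$, strictly whenever the label distributions differ across clients. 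With $c_n > c_d$ confirmed, Lemma~\ref{lem1} immediately gives $\frac{dG_d}{d\lambda} < 0$ for every $\lambda \geq \lambda_c := \lvert -B/(2A)\rvert$.

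For the second claim $G_d(\mathbf{w},\lambda) < G_d(\mathbf{w},0)$ I would use the global shape of the ratio: at $\lambda=0$ it equals $c_n/c_d > 1$, while as $\lambda\to\infty$ the matching leading coefficients force $G_d \to A/A = 1$. Thus $G_d$ descends toward its limit $1 < G_d(\mathbf{w},0)$, and by continuity there is a crossing point beyond which $G_d(\mathbf{w},\lambda) < G_d(\mathbf{w},0)$; this crossing lies in the region where the derivative is already negative, so taking $\lambda_c$ at least this large secures both conclusions at once. The main obstacle is not any single inequality but making the two lemmas interlock cleanly — one must verify that $p_k^c\gamma_k^c=1$ genuinely equalizes the $\lambda$-coefficients so Lemma~\ref{lem1} applies verbatim, and that the probability-only quantity surviving Assumption~\ref{assump1} is precisely what Lemma~\ref{lem2} controls. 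The equal-magnitude approximation is the least rigorous link and should be stated as the operative heterogeneity assumption rather than derived.
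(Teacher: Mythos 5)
Your proof is correct and follows essentially the same route as the paper's: reduce $G_d$ to a ratio of two quadratics in $\lambda$ whose leading and linear coefficients coincide, apply Lemma~\ref{lem1}, and certify $c_n > c_d$ via Assumption~\ref{assump1} together with Lemma~\ref{lem2}. Your bookkeeping is in fact slightly cleaner than the paper's: by noting that $p_k^c\gamma_k^c=1$ makes the regularizer gradient $\lambda\sum_c\tilde{\mathbf{g}}_c$ literally the same vector for every client, you obtain the matching of the $\lambda^2$ and $\lambda$ coefficients exactly, whereas the paper derives it only after the class-wise weak-correlation and unit-norm approximations, so Assumption~\ref{assump1} is needed only for the constant terms --- precisely the link you correctly flag as the least rigorous.
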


\begin{proof}
\begin{equation}
G_d = {{{1 \over K } \sum_{k = 1}^{K} {\lVert \nabla f_k  \rVert}^2} \over {{\lVert \nabla f \rVert}^2}}
\label{sup:gd_def}
\end{equation}

From Sec 3.2 of the main paper we have the following, We drop the argument $\mathbf{w}$ for the functions $f_k$ to simplify the notation

\begin{equation}
\nabla{f_{k}} = \sum_{c = 1}^{C}{p_k^c}(g_{c} + \lambda  {\gamma}_k^{c} \tilde{g}_c )
\label{th_eq9_grad}
\end{equation}

\begin{dmath}
{\lVert \nabla{f_{k}} \rVert}^2  = \sum_{c1 = 1}^{C}\sum_{c2 = 1}^{C}{p_k^{c1} p_k^{c2}}(\mathbf{g}_{c1}^\intercal + \lambda  {\gamma_k^{c1}}\tilde{\mathbf{g}}_{c1} ^\intercal )(\mathbf{g}_{c2} + \lambda  {\gamma_k^{c2}}\tilde{\mathbf{g}}_{c2} )\notag \\
= \sum_{c1 = 1}^{C}\sum_{c2 = 1}^{C}{p_k^{c1} p_k^{c2}}(\mathbf{g}_{c1}^\intercal\mathbf{g}_{c2} +  
\lambda  \gamma_{k}^{c2}\mathbf{g}_{c1}^\intercal \tilde{\mathbf{g}}_{c2} +
\lambda  {\gamma_k^{c1}}\tilde{\mathbf{g}}_{c1}^\intercal \mathbf{g}_{c2} + \lambda^2{\gamma_k^{c2}}{\gamma_k^{c1}}\tilde{\mathbf{g}}_{c1}^\intercal \tilde{\mathbf{g}}_{c2}  )\notag \\
~\approx \sum_{c = 1}^{C}(p_k^{c})^2(\mathbf{g}_{c}^\intercal\mathbf{g}_{c}) + 
\lambda  \sum_{c1 = 1}^{C}\sum_{c2 = 1}^{C} {p_k^{c1}}\mathbf{g}_{c1}^\intercal \tilde{\mathbf{g}}_{c2} +
\lambda  \sum_{c1 = 1}^{C}\sum_{c2 = 1}^{C}{p_k^{c2}}\tilde{\mathbf{g}}_{c1}^\intercal \mathbf{g}_{c2} + \lambda^2\sum_{c = 1}^{C}(p_k^{c})^2{\gamma_k^{c}} {\gamma_k^{c}} \tilde{\mathbf{g}}_{c}^\intercal \tilde{\mathbf{g}}_{c}  \notag \\
= \sum_{c = 1}^{C}(p_k^{c})^2 +  
2\lambda  \sum_{c1 = 1}^{C}\sum_{c2 = 1}^{C} {p_k^{c1}}\mathbf{g}_{c1}^\intercal \tilde{\mathbf{g}}_{c2}
+ \lambda^2 C
\label{th_eq9}
\end{dmath}

In the above equation the second equality is obtained by simply expanding the product, the third approximation by weakly correlated assumption of the gradients. The last two equalities used the fact that  ${\gamma}_k^{c} = \frac{1}{p_k^c}$. We also assume that gradients are normalized to unit magnitude. 

Finally, we have the following
\begin{dmath}
\frac{1}{K}\sum_{k=1}^{K}{\lVert \nabla{f_k} \rVert}^2 = \frac{1}{K} (\sum_{k=1}^{K}\sum_{c=1}^{C}({p_k^c})^2 + 
2\lambda  \sum_{k=1}^{K}\sum_{c1=1}^{C} \sum_{c2=1}^{C}{p}_{k}^{c1}\mathbf{g}_{c1}^\intercal \tilde{\mathbf{g}}_{c2} + \lambda^2KC 
) \notag \\
= \frac{1}{K^2}(a_n\lambda^2 + b_n\lambda + c_n)
\label{num_eq}
\end{dmath}
where
\begin{equation}
a_n \coloneqq K^2C
\label{an_def}
\end{equation}
\begin{equation}
b_n  \coloneqq  2K\sum_{k=1}^{K}\sum_{c1=1}^{C} \sum_{c2=1}^{C}{p}_{k}^{c1}\mathbf{g}_{c1}^\intercal \tilde{\mathbf{g}}_{c2} \notag \\
\label{b_def}
\end{equation}
\begin{equation}
c_n \coloneqq K\sum_{k=1}^{K}\sum_{c=1}^{C}({p_k^c})^2
\label{cn_def}
\end{equation}
\begin{dmath}
{\lVert \nabla{f} \rVert}^2 = (\lVert {\frac{1}{K}} \sum_{k=1}^{K} \sum_{c=1}^{C} {p_k^c}(\mathbf{g}_{c} + \lambda  {\gamma}_k^{c} \tilde{\mathbf{g}}_c ) \rVert)^2 \notag \\
= {\frac{1}{K^2}} \sum_{k1=1}^{K} \sum_{k2=1}^{K} \sum_{c1=1}^{C} \sum_{c2=1}^{C} {p_{k1}^{c1}}(\mathbf{g}_{c1}^\intercal + \lambda  \gamma_{k1}^{c1} \tilde{\mathbf{g}}_{c1}^\intercal )
 {p_{k2}^{c2}}(\mathbf{g}_{c2} + \lambda  {\gamma}_{k2}^{c2} \tilde{\mathbf{g}}_{c2} ) \notag \\
\approx {\frac{1}{K^2}} \sum_{k1=1}^{K} \sum_{k2=1}^{K} \left(\sum_{c=1}^{C} {p_{k1}^{c}}{p_{k2}^{c}}(\mathbf{g}_{c}^\intercal\mathbf{g}_{c}) +  
 \sum_{c1=1}^{C} \sum_{c2=1}^{C} \left( \lambda   {{p_{k1}^{c1}}{p_{k2}^{c2}}\gamma_{k2}^{c2}}\mathbf{g}_{c1}^\intercal \tilde{\mathbf{g}}_{c2} +
\lambda {p_{k1}^{c1}} {p_{k2}^{c2}}{\gamma_{k2}^{c}}\tilde{\mathbf{g}}_{c1}^\intercal \mathbf{g}_{c2} + 
\lambda^2{{p_{k1}^{c1}} {p_{k2}^{c2}}\gamma_{k1}^{c}} {\gamma_{k2}^{c}} \tilde{\mathbf{g}}_{c}^\intercal \tilde{\mathbf{g}}_{c}\right)\right) \notag \\
= {\frac{1}{K^2}} \sum_{k1=1}^{K} \sum_{k2=1}^{K} \sum_{c=1}^{C} ({p_{k1}^{c}}{p_{k2}^{c}}) +  
\lambda  \sum_{k1=1}^{K} \sum_{k2=1}^{K} \sum_{c1=1}^{C} \sum_{c2=1}^{C}  {p}_{k1}^{c1}\mathbf{g}_{c1}^\intercal \tilde{\mathbf{g}}_{c2} +
\lambda  \sum_{k1=1}^{K} \sum_{k2=1}^{K} \sum_{c1=1}^{C} \sum_{c2=1}^{C} p_{k2}^{c2}\tilde{\mathbf{g}}_{c1}^\intercal \mathbf{g}_{c2} + \lambda^2 K^2C\notag \\
= {\frac{1}{K^2}} (\sum_{k1=1}^{K} \sum_{k2=1}^{K} \sum_{c=1}^{C} ({p_{k1}^{c}}{p_{k2}^{c}}) +  
2\lambda K\sum_{k=1}^{K} \sum_{c1=1}^{C} \sum_{c2=1}^{C}  {p}_{k}^{c1}\mathbf{g}_{c1}^\intercal \tilde{\mathbf{g}}_{c2} +
 \lambda^2 K^2 C)
\label{den_eq} \notag \\
= {\frac{1}{K^2}}(a_d\lambda^2 + b\lambda + c_d )
\end{dmath}

By defining
\begin{equation}
a_d \coloneqq K^2C
\label{ad_def}
\end{equation}

\begin{equation}
b_d  \coloneqq  2K\sum_{k=1}^{K}\sum_{c1=1}^{C} \sum_{c2=1}^{C}{p}_{k}^{c1}\mathbf{g}_{c1}^\intercal \tilde{\mathbf{g}}_{c2} \notag \\
\label{bd_def}
\end{equation}

\begin{equation}
c_d \coloneqq \sum_{k1=1}^{K}\sum_{k2=1}^{K}\sum_{c=1}^{C}({p_{k1}^c} {p_{k2}^c})    
\end{equation}
\label{cd_def}

By substituting  Eq.~\ref{num_eq} and Eq.~\ref{den_eq} in Eq.~\ref{sup:gd_def} we get

\begin{equation}
G_d(\mathbf{w},\lambda) = \frac{a_n\lambda^2 + b_n\lambda + c_n}{a_d\lambda^2 + b_d\lambda + c_d}
\label{Gd_w_def}
\end{equation}
Comparing Eq.~\ref{an_def} and Eq.~\ref{ad_def} we see that 
$a \coloneqq a_n = a_d $, $b \coloneqq b_n = b_d $.\\

Also $c_n > c_d$ assuming $p_k^c$ is non-degenerate.\\


Using the Lemma~\ref{lem1} on Eq.~\ref{Gd_w_def} we get the value of $\lambda_b$ such that $G_d(\mathbf{w},\lambda)$ is reduced.   \\ 

We also get $\lambda \geq \lvert \frac{-b}{a} \rvert$ by analyzing the values of $\lambda$ for which 
$G_d(\mathbf{w},\lambda) < G_d(\mathbf{w},0) $ holds.\\

Thus choosing the $\lambda > \lambda_c  = \sup_{b: -k^2C \leq b \leq k^2C} {max{(\lambda_b, \lvert \frac{-b}{a} \rvert)}}$ guarantees 
 $G_d(\mathbf{w},\lambda) < G_d(\mathbf{w},0) $, for all $\mathbf{w}$.\\


This concludes the proof.

\end{proof}

\subsection{Discussion on Impact of Gradient Dissimilarity on the Convergence}
\label{sup:convg_disc}

We now study how the gradient diversity impacts the convergence of the FL algorithms such as FedProx and FedAvg. We omit the dependence of $\lambda$ on $B$. (for these algorithms $\lambda =0$ so $B$ is nothing but $B(0)$ in our notation)
We have the gradient dissimilarity assumption below
\begin{assumption}
${{{1 \over K }\sum_{k} {\lVert \nabla f_k(\mathbf{w})  \rVert}^2} \leq  B^2{{\lVert \nabla f(\mathbf{w}) \rVert}^2}}$
\label{sup:assump2}
\end{assumption}

\subsubsection{FedProx}
Suppose the functions $f_k$ are lipschiltz smooth and their exists $L_- > 0$ such that $\mathbf{H}{f_k} \succeq L_{-} \mathbf{I}$. With $\bar{\mu} - L_{} > 0$, where $\mu$ is FedProx regularization. If $f_k$ satisfies the assumption~\ref{sup:assump2} then acccording to Theorem $6$ of~\citep{li2020federated} the FedProx, after $T = {\mathcal{O}( \frac{\Delta}  {\rho \epsilon} )} $. We have the gradient contraction  as $\frac{1}{T} {\sum_{t = 0}^{T-1} \mathbf{E}{\lVert f(\mathbf{w}^t) \rVert}^2} \leq \epsilon $. The value of $\rho$ is given below.\\
\begin{equation}
 \rho = \frac{1}{\mu} -\frac{\gamma B}{\mu} - \frac{B(1+\gamma)\sqrt(2)}{\bar{\mu} \mu} -\frac{L(1+\gamma)^2B^2}{2{\bar{\mu}}^2} - \frac{L(1+\gamma)^2B^2}{K{\bar{\mu}}^2}(2\sqrt{2K} + 2) > 0
 \label{p_eq}
 \end{equation}
 for some $\gamma > 0$ and $\Delta = f(\mathbf{w}^0)-f(\mathbf{w}^*) $, $f(\mathbf{w}^*$ is the local minimum.\\
 It can be seen that the convergence is inversely related to $\rho$. High value of $\rho$ leads to faster convergence. From Eq.~\ref{p_eq} we can see that $\rho$ can be increased by decreasing the value of $B$. Thus reducing the value of $B$ helps in better convergence.
\subsubsection{FedAvg}

\begin{assumption}
We now analyze the convergence of FedAvg, we consider the following assumptions
$\lVert \nabla f_k(\mathbf{x}) - f_k(\mathbf{x}) \rVert = \beta \lVert \mathbf{x} -  \mathbf{y} \rVert$ ($\beta$ smoothness)
\label{sup:assump3}
\end{assumption}

\begin{assumption}
Gradients have bounded Variance.
\label{sup:assump4}
\end{assumption}

Suppose that $f(\mathbf{w})$ and $f_k(\mathbf{w})$, satisfies Assumptions~\ref{sup:assump2}, ~\ref{sup:assump3} and ~\ref{sup:assump4}. Let $\mathbf{w}^* = \underset{\mathbf{w}}{\arg\min} \ f(\mathbf{w}) $ the local step-size be $\alpha_l$. The theorem \MakeUppercase{\romannumeral 5} in~\citep{karimireddy2020scaffold} shows that FedAvg algorithm will have contracting gradients. If Initial model is $\mathbf{w}^0$, $F = f(\mathbf{w}^0)-f(\mathbf{w}^*)$ and for constant $M$, then in $R$ rounds, the model $w^R$ satisfies
$\mathbb{E}[{\lVert \nabla{f(\mathbf{w}^R)} \rVert}^2] \leq 
{O({{\beta M \sqrt{F}} \over {\sqrt{RLS}} } + {{\beta B^2F} \over {R}})
}$.

We see the convergence rate is ${O({{\beta M \sqrt{F}} \over {\sqrt{RLS}} } + 
{{\beta B^2F} \over {R}})}$. We can see that convergence has a direct dependence on $B^2$. This is the only term that is linked to heterogeneity assumption. So the lower value of $B$ implies faster convergence. This motivates to have a tighter bound on heterogeneity. ASD achieves this by introducing the regularizer and choosing the appropriate value of $\lambda$. In the figure~\ref{sup:fig_prox_avg} we empirically we verify the impact of ASD on the convergence. We plot the smoothed estimates of the norm of the difference of the global model parameters between the successive communication rounds i.e $\lVert \mathbf{w}^t - \mathbf{w}^{t-1} \rVert$. 

\label{gd_conv}

\begin{figure*}[htp]
  \centering
  \subfigure[FedAvg]{\includegraphics[scale=0.40]{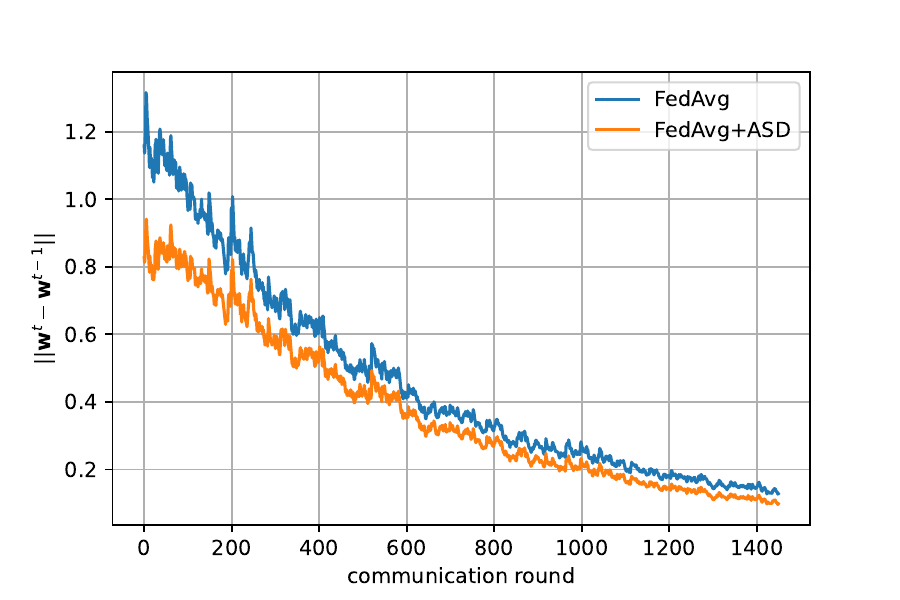}}
  \subfigure[FedProx]{\includegraphics[scale=0.40]{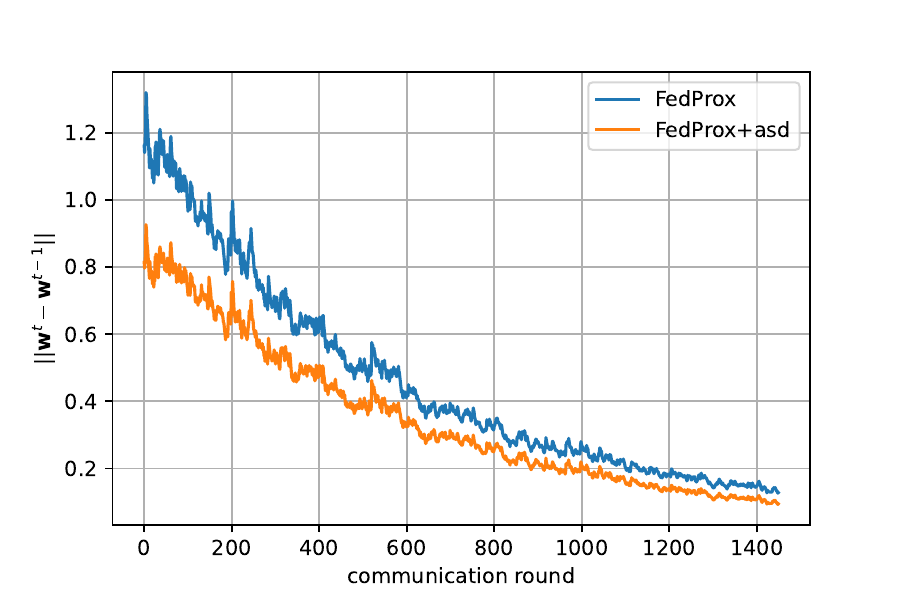}}
  \caption{Impact of ASD on the convergence on CIFAR-100 dataset with non-iid partition of $\delta = 0.3$}
  \label{sup:fig_prox_avg}
\end{figure*}

\end{document}